\pdfoutput=1
\documentclass[11pt]{article}

\usepackage[margin=1in]{geometry}
\usepackage[hyphens]{url}
\usepackage{graphicx}
\usepackage{natbib}
\usepackage{caption}
\usepackage{algorithm}
\usepackage{algorithmic}
\usepackage{booktabs}
\usepackage{amsmath,amsthm}
\usepackage{amssymb}
\usepackage{xcolor}
\usepackage{hyperref}
\newtheorem{theorem}{Theorem}

\newtheorem{definition}{Definition}
\newtheorem{lemma}{Lemma}
\newtheorem{proposition}{Proposition}

\newcommand{\E}{\mathbb{E}}
\newcommand{\Prob}{\mathbb{P}}
\newcommand{\R}{\mathbb{R}}
\newcommand{\abs}[1]{\left|#1\right|}
\newcommand{\norm}[1]{\left\|#1\right\|}
\newcommand{\ip}[2]{\langle #1, #2 \rangle}
\newcommand{\sgn}{\operatorname{sign}}

\title{Fast Quantized Johnson--Lindenstrauss Transforms}
\author{%
    Angelos Korakitis\thanks{National and Kapodistrian University of Athens, \texttt{akorakitis@di.uoa.gr}.}
\and%
    Vasileios Nakos\thanks{National and Kapodistrian University of Athens \texttt{vasilisnak@di.uoa.gr}.}}
\date{}

\begin{document}

\maketitle

\begin{abstract}
Quantization has emerged as one of the primary techniques for reducing the memory footprint of large language models during inference. A prominent example is Quantized Johnson–Lindenstrauss (QJL), which compresses KV caches using 1-bit asymmetric quantization while approximately preserving the inner products required by the attention mechanism. Recent theoretical and practical work aims at replacing the dense Gaussian projections used by the aforementioned transforms with more structured embeddings — such as compositions of randomized Hadamard transforms — while retaining the same embedding dimension.

In this paper, we introduce two  constructions based on a composition of randomized Hadamard transforms, albeit with a Gaussian rather than Rademacher diagonal in the second stage. The first adds Gaussian dithering to smooth the quantizer, while the second inserts a random permutation to regularize the covariance structure of the resulting Gaussian measurements.

We prove embedding-dimension and running-time guarantees analogous to those of Fast JL in the non-quantized setting. The first construction achieves \(m=O(\varepsilon^{-2}\log^2(d/\delta))\), while the second achieves \(m=O(\varepsilon^{-2}\log(d/\delta))\) under a restriction on \(m\). Both constructions can be applied in \(O(d\log d)\) time. 


\end{abstract}

\section{Introduction}

Vector quantization (VQ) in Euclidean space has emerged as a crucial technique in the fine-tuning and deployment of AI models, as well as in vector databases, which heavily operate on similary search. The core idea behind VQ is to keep low-bit representations of high dimensional vectors, often obliviously to the considered data, such that quantities of interest, including inner products or mean squared errors (MSEs), are retained with minimal error. 

 VQ becomes particular significant in transformer-based architectures, which face a significant challenge due to the self-attention mechanism. The memory footprint of the Key-Value (KV) cache can often be a bottleneck; though the expressiveness of high dimensional vectors drives much of the success of LLMs, storage issues may prevent further scalability. This has lead researchers to the following question: \emph{how can we compress keys, values and queries without sarcificing too much accuracy?} This would (and has) allow(ed) LLMs to process much longer context windows and support larger batch sizes without meeting the barrier of current GPU memory. 

Additionally, the VQ framework is of direct application to nearest neighbor search in vector databases~\citep{qdrant, Pg, Pinecone}, where searches are often performed with respect to inner product or cosine similarity measure~\citep{jegou2011product, gao2025practical, ge2013optimized}. Retrieval-augmented generation and information retrieval have become ubiquitous, as they allow AI models to access external data, up-to-date data. These techniques allow factual groundings, cost savings (as expensive re-training can be bypassed, since local files can be accessed dynamically) and semantic matching.

A significant building block behind VQ is the application of a uniformly random rotation to the input vectors and then quantizing down to a small number of bits. This approach has led to consider quantized analogues of the the Johnson-Lindenstrauss transforms, with asymmetric inner product estimators for KV cache compression~\citep{zandieh2025qjl}. The aforementioned embeddings, namely QJL and the $4$-bit analogue PolarQuant \citep{zandieh2025qjl, han2026polarquant}, are behind the recent success of Google's Turboquant \citep{zandieh2026turboquantA}. The general pipeline of applying a random linear transform followed by coordinatewise scalar quantization has a long history \citep{charikar2002similarity, dong2008asymmetric,vargaftik2021drive,vargaftik2022eden, chee2023quip, tseng2024qtipA, tseng2024quipB, ashkboos2024quarot, gao2024rabitq, gao2025practical, zandieh2026turboquant}.

These embeddings achieve a favorable balance between dimensionality reduction and the error introduced, but are typically dense and therefore slow to apply. Thus, both researchers and practicioners have suggested alternative constructions, such as composing series of Randomized Hadamard transforms (RHTs), which allow embedding in $O(d \log d)$ time \citep{agarwal2024hadacore, zandieh2026turboquant, vargaftik2022eden, vargaftik2021drive, feng2026provable, ben2026quantizing, tseng2024quipB, le2013fastfood, dasgupta2011fast, ailon2009fast, sarlos2006improved, krahmer2011new, ordentlich2025optimal, konevcny2016federated, dorfman2023docofl, caldas2018expanding, ben2025better, basat2024optimal, andoni2015practical}. Recent work \citep{ben2026quantizing} has shown that composing two such RHTs yields a transform $T$ such that $Tx$ is close in Kolmogorov distance to randomly rotating $x$.

In this work, we continue this line of research by designing Quantized Johnson-Lindenstrauss transforms which enable $O(d \log d)$ time and have analogous guarantees to the renowned Fast JL transforms in the unquantized case. Our constructions are careful twists of the aforementioned framework of composing two RHTs and provide results which, to the best of our knowledge, our new, and are incomparable to the work of \citep{ben2026quantizing}.


\paragraph{Our contributions.}

We introduce two variants of a fast asymmetric quantized JL estimator. The first, Dithered Fast Quantized Johnson--Lindenstrauss \textsc{D-FQJL}, uses dithered quantization, while the second, Permutation-based Fast Quantized Johnson--Lindenstrauss \textsc{P-FQJL}, combines RHTs with a random permutation.
The sketch used in our permutation-based variant is the structured transform previously considered in other settings~\citep{dasgupta2011fast,le2013fastfood}. Our objective and analysis, however, are quite different, as we argue that the covariance of gaussian ensembles arising throughout is close to the identity.

Both constructions retain the asymmetric structure of QJL: one vector is
stored as an $m$-bit binary sketch obtained by applying a 1-bit quantizer to
each of the $m$ projected coordinates, whereas the other is represented by an
$m$-dimensional floating-point sketch. If both
projected vectors were sign-quantized, their bit agreements would instead
give an unbiased estimator of their angle~\citep{charikar2002similarity}.
Recovering the inner product would then require applying the cosine function
to the estimated angle and multiplying by the vector norms; this nonlinear
plug-in step is generally biased. Quantizing only one vector avoids this issue
and yields the sign--linear identity needed for exact unbiasedness. It also
reduces the storage for a pair of vectors from $2m$ floating-point numbers
(FPNs) to $m+1$ FPNs and $m$ bits. The additional scalar stores the
normalization or quantization scale associated with the binary sketch.

Our structured representation further reduces the space required to store the sketching operator. Whereas a dense $m\times d$ Gaussian matrix requires $O(md)$ our construction requires only $O(m+d)$ space. Moreover, the algorithm is data-oblivious: the sketch is sampled independently of the keys, values, and queries, and therefore requires no data-dependent training, optimization, or preprocessing.

We prove inner-product estimation guarantees for both variants and show how they can be applied directly to KV-cache compression. A cached key can be stored as an $m$-bit binary sketch while the corresponding query is transformed to a compact representation. 


Our results also connect TurboQuant and EDEN/Drive randomized quantization
methods~\citep{zandieh2026turboquant,vargaftik2022eden, vargaftik2021drive,ben2026note}. Both lines
of work use or advocate RHTs for fast practical
implementations. However, directly replacing the Gaussian projection in the
asymmetric sign-linear estimator by $H D_\pm$ does not, in general, preserve
exact unbiasedness: the Gaussian sign identity underlying QJL no longer
applies, and the RHT substitution can be biased on worst-case inputs. Our
constructions address this gap by retaining fast-transform complexity while
recovering an exactly unbiased asymmetric estimator. Our embeddings are twists of compositions of two RHTs, similarly to \citep{ben2026quantizing}, see Related work for more details.

\subsection{Related Work}

\paragraph{KV-cache compression.}
Prior work reduces KV-cache costs through direct low-bit
quantization~\citep{liu2024kivi}, rotation-based outlier
suppression~\citep{ashkboos2024quarot,liu2025spinquant}, and randomized
quantization designed to preserve geometric quantities such as inner products
or distances~\citep{zandieh2025qjl,zandieh2026turboquant,gao2024rabitq}. Our
work follows the third direction, replacing dense random projections with fast
structured transforms. 
A complementary approach compresses the sequence
dimension by retaining only selected cached tokens~\citep{kochetkova2026streaming}.

\paragraph{Randomized Hadamard and fast orthogonal transforms.}
RHTs provide fast, hardware-efficient mixing in
$O(d\log d)$ time~\citep{agarwal2024hadacore} and are widely used to reduce
coherence before quantization~\citep{ashkboos2024quarot,liu2025spinquant,
chee2023quip}. Their structured output is more difficult to analyze than a
dense random rotation, and classical theory therefore focused mainly on JL
and subspace-embedding guarantees~\citep{ailon2009fast,ailon2013almost,
krahmer2011new,sarlos2006improved}. More recent work studies uniform
approximation and/or quantization guarantees for Gaussian-diagonal or randomized
Hadamard constructions~\citep{cherapanamjeri2022uniform,feng2026provable,
schuchman1964dither,ben2026quantizing, zilca2026approximating}. In~\citep{ben2026quantizing} in particular the authors prove applying that the composition of two RHTs yields a vector the marginal distribution of every fixed coordinate of which is close to Kolmogov distance to the normalized rotated: the gap is $O(d^{-1/2})$ and is incomparable to our guarantee. For a thorough practical discussion of this framework see also the blog of Christiani~\citep{christiani2025rotational}.



\section{Preliminaries}
\label{gen_inst}

\subsection{Notation}

For a positive integer $n$, we write $[n]:=\{1,\dots,n\}$. For a vector
$v\in\R^d$ and $p\ge1$, we define the $p$-th norm of $v$ by
$\norm{v}_p:=(\sum_{i=1}^d\abs{v_i}^p)^{1/p}$, while
$\norm{v}_\infty:=\max_{i\in[d]}\abs{v_i}$. We write
$\ip{u}{v}=u^\top v$ for the inner product.
For the sign function,
we use the convention $\sgn(x)=1$ if $x\ge0$ and $\sgn(x)=-1$ if $x<0$;
for vectors, $\sgn$ is applied entrywise.


A random variable $X$ has the \emph{Rademacher}$(p)$ distribution if
$\Prob(X=1)=p$ and $\Prob(X=-1)=1-p$. A random
variable is \emph{standard Gaussian} if it has distribution
$\mathcal{N}(0,1)$. For $g\sim\mathcal{N}(0,I_d)$, let
$D_g:=\operatorname{diag}(g)$. Similarly, let
$\zeta_i\overset{\mathrm{i.i.d.}}{\sim}\operatorname{Rademacher}(1/2)$ for
$i\in[d]$, and set $D_\pm:=\operatorname{diag}(\zeta)$.

Throughout the work, we assume $d=2^s$. Let
$H\in\{\pm1/\sqrt d\}^{d\times d}$ be the normalized Hadamard
matrix~\citep{horadam2012hadamard}. Thus, $H H^\top=I_d$, so $H$ is orthogonal.

Let $I\subseteq[d]$ be a uniformly random subset of size $m$, written as $I={i_1<\cdots<i_m}$. The coordinate-selection matrix $S_I\in\{0,1\}^{m\times d}$ is obtained by placing $e_r\in\mathbb{R}^m$ in column $i_r$ for each $r\in[m]$. A permutation matrix $\Pi\in\{0,1\}^{d\times d}$ has exactly one
nonzero entry in each row and column. Equivalently, for a permutation $\pi$
of $[d]$, we use the convention $\Pi_{ij}=1$ if $j=\pi(i)$ and $0$ otherwise,
so $(\Pi v)_i=v_{\pi(i)}$.
For $v\in\R^n$, define softmax componentwise by
$\operatorname{softmax}(v)_i:=\exp(v_i)/\sum_{j=1}^n\exp(v_j)$ for $i\in[n]$.



 


\subsection{Attention Mechanism}

We now review the attention mechanism used by autoregressive transformers. Given a sequence of tokens, the model produces three embeddings for each token position: a query, a key, and a value. We
denote these embeddings at position $i$ by $q_i,k_i,v_i\in\R^d$, respectively. These embeddings allow us to derive the contextual representation of each token.

To produce the contextual representation of a specific token, the model compares its query with all keys in the current context, producing a vector of similarity scores. These scores are normalized by the softmax function into a vector of \emph{attention weights}, or \emph{attention scores}. The contextual representation of the token is then the convex combination of the value vectors, weighted by these attention scores.

Formally, at every step $j$, in order to generate the $(j+1)$-st
token, define the attention-score vector
\begin{align}
\operatorname{score}\,(j)
&:=
\operatorname{softmax}
\left(
\frac{K_jq_j}{\sqrt{d}}
\right)
\in\R^j.
\label{eq:attention-scores}
\end{align}
The corresponding contextual representation is
\begin{align}
\operatorname{Attn}(q_j,K_j,V_j)
&:=
\operatorname{score}\,(j)^{\mathsf T}V_j,
\label{eq:streaming-attention}
\end{align}
where $K_j,V_j\in\R^{j\times d}$ are the matrices whose rows are
$k_1,\dots,k_j$ and $v_1,\dots,v_j$, respectively. Thus $K_jq_j\in\R^j$
is the vector of inner products between $q_j$ and the rows of $K_j$.

As is apparent from Equation \eqref{eq:streaming-attention}, evaluating attention at step $j$ requires access to every key and value embedding generated so far. These are stored in what is commonly known as the \emph{KV cache}, so that they need not be recomputed at each subsequent step. Its memory footprint grows linearly with $j$ and, kept in full precision, can become substantial for long contexts.
Likewise, computing the attention scores involves $j$ inner products of dimension $d$, giving a per-token complexity of $O(jd)$. It is precisely this cost of storing $K_j$ and computing $K_jq_j$ that motivates our work: we reduce the memory footprint of the KV cache using quantization and dimensionality reduction.

\section{Fast Quantized Johnson--Lindenstrauss Transforms}
\label{sec:fqjl}

We now develop fast structured analogues of the asymmetric quantized Johnson--Lindenstrauss estimator. The goal is to replace the dense Gaussian projection used in QJL by a transform that can be applied in nearly linear time, while preserving the same asymmetric storage format: the key is stored only through quantized sketch values and its Euclidean norm, whereas the query is sketched at evaluation time.

The section is organized as follows. We first describe the common
Gaussian--Hadamard sketch underlying our constructions and explain why it has
the correct Gaussian marginals. This yields an unbiased estimator, but not
concentration, because the rows of the fast transform are dependent. The two
estimators introduced below address this difficulty in complementary ways.
The dithered estimator \textsc{D-FQJL} smooths the sign quantizer and uses
Gaussian concentration, whereas the permutation-based estimator
\text{P-FQJL} introduces additional randomization that enables a matrix
Bernstein argument.

\subsection{A Shared Gaussian--Hadamard Sketch}
\label{sec:fqjl-shared}

The asymmetric QJL estimator~\citep{zandieh2025qjl} relies on the following
elementary identity for linear functionals of a Gaussian vector.

\begin{lemma}[Gaussian sign identity]
\label{lem:gaussian-sign-identity}
Let $g\sim\mathcal{N}(0,I_d)$, and let $a,b\in\R^d$ with $b\neq0$. Then
\begin{align}
\label{eq:gaussian-sign-identity}
\E\!\left[
  \sqrt{\frac{\pi}{2}}\,\norm{b}_2\,
  \ip{a}{g}\,\sgn\!\bigl(\ip{b}{g}\bigr)
\right]
&= \ip{a}{b}.
\end{align}
\end{lemma}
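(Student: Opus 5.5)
The plan is to reduce the identity to a two-dimensional Gaussian computation by decomposing $a$ along $b$. Write $a=\alpha b + a_\perp$, where $\alpha=\ip{a}{b}/\norm{b}_2^2$ and $\ip{a_\perp}{b}=0$. Then $\ip{a}{g}=\alpha\ip{b}{g}+\ip{a_\perp}{g}$.

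Because $g\sim\mathcal{N}(0,I_d)$, the pair $(\ip{a_\perp}{g},\ip{b}{g})$ is jointly Gaussian. Its covariance is $\ip{a_\perp}{b}=0$, so the two components are independent. Hence
\[
\E\bigl[\ip{a_\perp}{g}\,\sgn(\ip{b}{g})\bigr]=\E[\ip{a_\perp}{g}]\cdot\E[\sgn(\ip{b}{g})]=0.
\]
This step is where joint Gaussianity is essential: uncorrelatedness alone would not suffice.

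It remains to compute the contribution of the $b$-component:
\[
\E\bigl[\ip{b}{g}\,\sgn(\ip{b}{g})\bigr]=\E\abs{\ip{b}{g}}.
\]
Since $\ip{b}{g}\sim\mathcal{N}(0,\norm{b}_2^2)$ and $\E\abs{Z}=\sqrt{2/\pi}$ for $Z\sim\mathcal{N}(0,1)$, this equals $\sqrt{2/\pi}\,\norm{b}_2$. The convention $\sgn(0)=1$ is irrelevant because $\ip{b}{g}=0$ has probability zero when $b\neq0$.

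Combining the two pieces gives
\[
\E\bigl[\ip{a}{g}\sgn(\ip{b}{g})\bigr]=\alpha\sqrt{2/\pi}\,\norm{b}_2=\sqrt{2/\pi}\,\frac{\ip{a}{b}}{\norm{b}_2}.
\]
Multiplying by $\sqrt{\pi/2}\,\norm{b}_2$ yields exactly $\ip{a}{b}$. All expectations are finite, so linearity of expectation is justified.

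There is no real obstacle. The only points needing care are the independence argument above and the computation of $\E\abs{Z}=2\int_0^\infty z\,\phi(z)\,dz=\sqrt{2/\pi}$.
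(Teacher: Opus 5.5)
Your proof is correct and takes the same approach as the paper. Both split $a$ into its component along $b$ plus an orthogonal part, use joint Gaussianity to make $\ip{a_\perp}{g}$ independent of $\ip{b}{g}$ so that term vanishes, and evaluate $\E\abs{\ip{b}{g}}=\sqrt{2/\pi}\,\norm{b}_2$; your remarks on the null event $\ip{b}{g}=0$ and on finiteness of expectations are small details the paper leaves implicit.
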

\noindent The proof is provided in the supplementary material.

With a dense Gaussian sketch, each measurement uses an independent Gaussian
row, so Lemma~\ref{lem:gaussian-sign-identity} yields both unbiasedness and
concentration through averaging. However, storing and applying an
$m\times d$ dense Gaussian matrix requires $O(md)$ space and time. 

Lemma \ref{lem:gaussian-sign-identity} identifies the structure needed for an asymmetric unbiased
inner-product estimator. We therefore seek a linear map whose row measurements
form such pairs of Gaussian linear functionals and whose $m$ outputs can be
computed together in $O(d\log d)$ time using the fast Walsh--Hadamard transform. We
begin with a simple prototype and then refine it.

\paragraph{First approach.}

Consider the sketch
\begin{align}
\label{eq:fqjl-T0}
T_0 &:= S_I\sqrt{d}\,H D_g.
\end{align}
For every $v\in\R^d$ and sampled index $i\in I$, the corresponding
measurement is
\begin{align}
\label{eq:fqjl-prototype-row-notation}
(T_0v)_i
&= \ip{(\sqrt{d}\,H_{ij}v_j)_{j=1}^d}{g},
\end{align}
where $\sqrt{d}\,H_{ij}\in\{\pm 1\}$. Thus multiplying the $j$-th coordinate
by $\sqrt{d}\,H_{ij}$ preserves norms and inner products. Applying
Lemma~\ref{lem:gaussian-sign-identity}, one can easily see that \ref{eq:fqjl-T0} provides an unbiased estimator of the inner product.

Moreover, both $T_0q$ and $T_0k$ can be computed in
$O(d\log d)$ time using the fast Walsh--Hadamard transform.

Although $T_0$ is unbiased and fast to apply, the row dependence described
above means that it need not concentrate: correct one-row marginals do not
guarantee that averaging over additional rows reduces the variance.

\begin{proposition}[Variance of $\hat e_0$ need not vanish]
\label{prop:t0-variance}
There exist $q,k\in\R^d$ for which
$\operatorname{Var}(\hat e_0(q,k))$ is bounded below by an absolute constant,
independently of $m$.
\end{proposition}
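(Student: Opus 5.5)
The plan is to exhibit an explicit pair $(q,k)$ for which every row of $T_0$ produces the \emph{same} Gaussian measurement, so averaging over the $m$ sampled rows cannot reduce the variance. I take the estimator to be the natural one built from Lemma~\ref{lem:gaussian-sign-identity}:
\[
\hat e_0(q,k)=\sqrt{\tfrac{\pi}{2}}\,\frac{\norm{k}_2}{m}\sum_{i\in I}(T_0q)_i\,\sgn\bigl((T_0k)_i\bigr).
\]

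\textbf{Step 1 (choice of vectors).} Take $q=k=e_1$. The first column of the normalized Hadamard matrix is constant, $H_{i1}=1/\sqrt d$ for all $i\in[d]$. By \eqref{eq:fqjl-prototype-row-notation}, for every $i\in I$ we then have
\[
(T_0e_1)_i=\sqrt d\,H_{i1}\,g_1=g_1 .
\]
Consequently each summand equals $g_1\,\sgn(g_1)=\abs{g_1}$, independently of $i$ and of the random subset $I$.

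\textbf{Step 2 (collapse of the average).} Since $\norm{k}_2=1$, the average over the $m$ rows is trivial, and
\[
\hat e_0(e_1,e_1)=\sqrt{\pi/2}\,\abs{g_1}\quad\text{for every } m \text{ and every } I.
\]

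\textbf{Step 3 (variance).} With $g_1\sim\mathcal N(0,1)$ we have $\E\abs{g_1}=\sqrt{2/\pi}$ and $\E g_1^2=1$. Hence the estimator has mean $1=\ip{e_1}{e_1}$, consistent with unbiasedness, and
\[
\operatorname{Var}\bigl(\hat e_0(e_1,e_1)\bigr)=\tfrac{\pi}{2}\Bigl(1-\tfrac{2}{\pi}\Bigr)=\tfrac{\pi}{2}-1>0,
\]
which is an absolute constant independent of $m$ and $d$.

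There is no real obstacle. The only point needing care is to fix the estimator's normalization consistently with Lemma~\ref{lem:gaussian-sign-identity}. If a non-degenerate example is preferred, any $q,k$ supported on one coordinate behaves the same way, or more generally any vector whose Hadamard rows coincide up to sign. Such vectors make all rows perfectly correlated, which is exactly the dependence phenomenon the proposition is meant to illustrate.
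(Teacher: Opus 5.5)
Your proof is correct and is essentially the paper's: the same choice $q=k=e_1$, the same collapse of $\hat e_0(e_1,e_1)$ to $\sqrt{\pi/2}\,\abs{g_1}$ for every $I$ and $m$, and the same variance $\pi/2-1$. The only difference is cosmetic: the paper does not rely on the first Hadamard column being constant. It writes $(T_0e_1)_i=s_ig_1$ with $s_i\in\{\pm1\}$ and uses $x\sgn(x)=\abs{x}$, so the argument does not depend on which Hadamard matrix is used. Also, your closing remark about vectors ``supported on one coordinate'' needs $q$ and $k$ supported on the \emph{same} coordinate; for $q=e_1$, $k=e_2$ the row average over $I$ does shrink the variance.
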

\begin{proof}
Take $q=k=e_1$. Then $\hat e_0(e_1,e_1)=\sqrt{\pi/2}\,\abs{g_1}$, so
$\operatorname{Var}(\hat e_0(e_1,e_1))=\pi/2-1$ regardless of $m$.
\end{proof}
\noindent The proof is provided in the supplementary material. 

Thus, for this input, increasing the number of sampled rows does not reduce
the estimation variance.

\paragraph{Second approach.}


The failure in Proposition~\ref{prop:t0-variance} arises from inputs whose
energy is concentrated on a small number of coordinates. To mitigate this
effect, we first apply an RHT $H D_\pm$. This
orthogonal preprocessing step preserves $\ell_2$ norms and inner products and, with
high probability, bounds the $\ell_\infty$ norm of the transformed vector in
terms of its $\ell_2$ norm.

\begin{lemma}[Hadamard spreading]
\label{lem:spreading}
Let $x\in\R^d$ be nonzero and let $\delta>0$. Then, with probability at least
$1-\delta/4$ over $D_\pm$,
\begin{align*}
\norm{H D_\pm x}_\infty
\le
\norm{x}_2\sqrt{\frac{2\log(16d/\delta)}{d}}.
\end{align*}
\end{lemma}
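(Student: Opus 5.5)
The plan is a per-coordinate application of Hoeffding's inequality for Rademacher sums, followed by a union bound over the $d$ coordinates. Fix $i\in[d]$. Since $D_\pm=\operatorname{diag}(\zeta)$, we have
\begin{align*}
(H D_\pm x)_i = \sum_{j=1}^d H_{ij}\zeta_j x_j,
\end{align*}
a weighted sum of independent Rademacher$(1/2)$ variables with coefficients $c_j:=H_{ij}x_j$. Because $\abs{H_{ij}}=1/\sqrt d$, the coefficients satisfy $\sum_j c_j^2=\norm{x}_2^2/d$, independently of $i$.

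By Hoeffding's inequality (equivalently, the sub-Gaussian bound $\E e^{\lambda\sum_j c_j\zeta_j}\le e^{\lambda^2\sum_j c_j^2/2}$ together with a Chernoff argument), for every $t>0$,
\begin{align*}
\Prob\bigl(\abs{(H D_\pm x)_i}\ge t\bigr)\le 2\exp\!\left(-\frac{t^2 d}{2\norm{x}_2^2}\right).
\end{align*}
Choose $t=\norm{x}_2\sqrt{2\log(16d/\delta)/d}$. The right-hand side then becomes $2\cdot\delta/(16d)=\delta/(8d)$.

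A union bound over the $d$ coordinates bounds the failure probability by $\delta/8\le\delta/4$, which proves the statement with room to spare. The lemma uses the constant $16$ inside the logarithm, but $8$ would already give exactly $\delta/4$. Nothing here is a genuine obstacle. The only care needed is to track the constants in the two-sided Hoeffding bound, since the factor $2$ is absorbed by the $16$, and to note that the argument does not require $x\neq0$ beyond making the bound meaningful.
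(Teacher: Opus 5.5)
Your proof is correct and is essentially the paper's argument: you apply per-coordinate Hoeffding to the Rademacher sum $\sum_j H_{ij}\zeta_j x_j$ with $\sum_j (H_{ij}x_j)^2=\norm{x}_2^2/d$, choose $t=\norm{x}_2\sqrt{2\log(16d/\delta)/d}$, and take a union bound over the $d$ coordinates to get failure probability $\delta/8$. The slack you noticed is exactly what the paper uses: its appendix version records $\delta/8$ for a single vector and spends the remaining $\delta/8$ on a second union bound, so that the spreading bound holds simultaneously for both $q$ and $k$.
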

\noindent  The proof is provided in the supplementary material. 

Composing this spreading step with the prototype gives the shared Gaussian--Hadamard sketch
\begin{align}
\label{eq:fqjl-transform}
T &:= S_I\sqrt{d}\,H D_g H D_\pm.
\end{align}
For a nonzero key $k$, define the $m$-bit code
$\mathcal{Q}(k):=\sgn(Tk)\in\{-1,+1\}^m$ and store it together with
$\norm{k}_2$. Given a query $q$, estimate the inner product by
\begin{align}
\label{eq:fqjl-estimator}
\hat e(q,k)
&:= \frac{\sqrt{\pi/2}}{m}\,
\norm{k}_2\ip{Tq}{\mathcal{Q}(k)}.
\end{align}
Conditioning on the additional randomness in
$D_\pm$, the same calculation gives
\begin{align}
\label{eq:fqjl-unbiasedness}
\E_g\!\left[\hat e(q,k)\mid I,D_\pm\right] &= \ip{q}{k}.
\end{align}

The estimator therefore retains the asymmetry of QJL~\citep{zandieh2025qjl}:
the key is stored as an $m$-bit code together with its norm, whereas the query
is transformed into the $m$-dimensional vector $Tq$.
Thus, the query also undergoes dimensionality reduction, but its new representation is
not quantized. The estimator is unbiased and can
be evaluated in $O(d\log d)$ time. 

Establishing concentration is more delicate. The sampled rows of $T$ are
dependent because they reuse the same Gaussian vector $g$, and this dependence
is further complicated by the nonsmooth $\operatorname{sign}$ function. The next two
subsections develop complementary constructions to address these two
difficulties.

\subsection{D-FQJL: Dithered Fast Quantized JL}
\label{sec:d-fqjl}

A natural way to handle the dependence caused by reusing the same Gaussian
vector $g$ is to view the estimator as a function of $g$ and apply Gaussian
concentration for Lipschitz functions. This approach is not directly
applicable, however, because the quantizer $x\mapsto\sgn(x)$ is
discontinuous.

We therefore smooth the effect of the sign function using dithering, a
standard technique in which an independent random perturbation is added to a
signal before quantization. Specifically, D-FQJL uses non-subtractive
Gaussian dithering: the noise is added to the key sketch before quantization
and is not subtracted afterward. Although the output remains binary,
averaging over the dither replaces the discontinuous sign
response with a smooth function. Indeed, for $\xi\sim\mathcal{N}(0,1)$,
\begin{align}
\label{eq:d-fqjl-smoothed-sign}
s_\sigma(z) := \E_{\xi}\!\left[
\sgn\!\left(z+\sigma\xi\right)
\right]
&=
2\Phi\!\left(\frac{z}{\sigma}\right)-1,
\end{align}
where $\Phi$ is the standard Gaussian cumulative distribution function, and
if $\phi$ denotes its density, then
\begin{align*}
s_\sigma'(z)
&=
\frac{2}{\sigma}\phi\!\left(\frac{z}{\sigma}\right)
\end{align*}
Thus $s_\sigma$ is $\sqrt{2/\pi}/\sigma$-Lipschitz. This smoothing makes the
dependence on the shared Gaussian vector $g$ more amenable to Gaussian
concentration, subject to the localization described below.

\begin{definition}[D-FQJL estimator]
\label{def:d-fqjl-estimator}
Given a nonzero key $k\in\R^d$, a query $q\in\R^d$, and a noise level
$\sigma>0$, draw a dither $\xi\sim\mathcal{N}(0,I_m)$ independently of $T$
and define
\begin{align*}
\mathcal{Q}_\sigma(k)
&:= \sgn\!\bigl(Tk+\sigma\norm{k}_2\,\xi\bigr)
\in\{-1,+1\}^m.
\end{align*}
The D-FQJL estimator of $\ip{q}{k}$ is
\begin{align*}
\hat e_\sigma(q,k)
&:= \frac{\sqrt{\pi/2}\,\sqrt{1+\sigma^2}}{m}\,
\norm{k}_2\,\ip{Tq}{\mathcal{Q}_\sigma(k)}.
\end{align*}
\end{definition}


\begin{lemma}[Unbiasedness of D-FQJL]
\label{lem:d-fqjl-unbiasedness}
For every $q\in\R^d$, every nonzero $k\in\R^d$, and every $\sigma>0$,
\begin{align*}
\E_{g,\xi}\!\left[\hat e_\sigma(q,k)\mid I,D_\pm\right]
&= \ip{q}{k}.
\end{align*}
\end{lemma}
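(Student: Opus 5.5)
We will reduce the statement to a single-row computation, then absorb the dither into an augmented Gaussian vector so that Lemma~\ref{lem:gaussian-sign-identity} applies directly.

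\textbf{Step 1: reduce to a single row.} Fix $I$ and $D_\pm$, and set $x:=HD_\pm q$ and $y:=HD_\pm k$. Since $HD_\pm$ is orthogonal, $\ip{x}{y}=\ip{q}{k}$ and $\norm{y}_2=\norm{k}_2\neq 0$.

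For a sampled index $i\in I$, write $a^{(i)}:=(\sqrt d\,H_{ij}x_j)_{j=1}^d$ and $b^{(i)}:=(\sqrt d\,H_{ij}y_j)_{j=1}^d$. As in \eqref{eq:fqjl-prototype-row-notation}, the corresponding coordinates are $(Tq)_i=\ip{a^{(i)}}{g}$ and $(Tk)_i=\ip{b^{(i)}}{g}$. Because $\sqrt d\,H_{ij}\in\{\pm1\}$, we have $\ip{a^{(i)}}{b^{(i)}}=\ip{x}{y}=\ip{q}{k}$ and $\norm{b^{(i)}}_2=\norm{k}_2$.

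By linearity of expectation, it then suffices to show, for each row $i$ with its dither coordinate $\xi_i$,
\begin{align*}
\sqrt{\tfrac{\pi}{2}}\sqrt{1+\sigma^2}\,\norm{k}_2\,
\E_{g,\xi_i}\!\left[\ip{a^{(i)}}{g}\,\sgn\!\bigl(\ip{b^{(i)}}{g}+\sigma\norm{k}_2\xi_i\bigr)\right]=\ip{q}{k}.
\end{align*}
Averaging over the $m$ rows and dividing by $m$ then gives the claim.

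\textbf{Step 2: augment the Gaussian vector.} Define the $(d+1)$-dimensional standard Gaussian vector $\tilde g:=(g,\xi_i)$; it is standard because $\xi$ is drawn independently of $T$, hence of $g$. Set $\tilde a:=(a^{(i)},0)$ and $\tilde b:=(b^{(i)},\sigma\norm{k}_2)$. Then the two quantities in the display become $\ip{\tilde a}{\tilde g}=\ip{a^{(i)}}{g}$ and $\ip{\tilde b}{\tilde g}=\ip{b^{(i)}}{g}+\sigma\norm{k}_2\xi_i$.

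Moreover, $\ip{\tilde a}{\tilde b}=\ip{q}{k}$ and $\norm{\tilde b}_2=\norm{k}_2\sqrt{1+\sigma^2}$, which is nonzero.

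\textbf{Step 3: apply the sign identity.} Lemma~\ref{lem:gaussian-sign-identity}, applied in dimension $d+1$, gives
\begin{align*}
\sqrt{\tfrac{\pi}{2}}\,\norm{k}_2\sqrt{1+\sigma^2}\,\E\!\left[\ip{\tilde a}{\tilde g}\,\sgn\!\bigl(\ip{\tilde b}{\tilde g}\bigr)\right]=\ip{q}{k}.
\end{align*}
This is exactly the per-row identity from Step~1. It also explains the normalization factor $\sqrt{1+\sigma^2}$ in Definition~\ref{def:d-fqjl-estimator}.

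\textbf{Expected obstacle.} There is no real obstacle. The only points requiring care are the independence of $\xi$ from $g$, which makes $\tilde g$ standard Gaussian, and the observation that the sign-flipped coordinates preserve both the inner product and the norm, so every row has the same $\ip{a^{(i)}}{b^{(i)}}$ and $\norm{b^{(i)}}_2$.
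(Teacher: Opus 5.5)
Your proof is correct and follows essentially the same route as the paper: reduce to a single sampled row, note that $((Tq)_i,(Tk)_i+\sigma\norm{k}_2\xi_i)$ is a centered Gaussian pair with covariance $\ip{q}{k}$ and second variance $(1+\sigma^2)\norm{k}_2^2$, and apply the Gaussian sign identity. The only difference is cosmetic. The paper redoes the regression decomposition $A_i=\frac{\ip{q}{k}}{(1+\sigma^2)\norm{k}_2^2}B_i^\sigma+Z_i$ by hand, whereas you append $\xi_i$ to $g$ so that Lemma~\ref{lem:gaussian-sign-identity} applies verbatim in dimension $d+1$, which is a slightly cleaner way to reuse the lemma.
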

\noindent The proof is provided in the supplementary material.
 
We next explain how the smoothing identity enters the concentration analysis.
Assume for now that $\norm{q}_2=\norm{k}_2=1$. For each $i\in I$, we write
\begin{align}
\label{eq:dithered-sign-decomposition}
\sgn\!\bigl((Tk)_i+\sigma\xi_i\bigr)
&=
s_\sigma\bigl((Tk)_i\bigr)
\nonumber\\
&\quad+
\Bigl(
\sgn\!\bigl((Tk)_i+\sigma\xi_i\bigr)
-
s_\sigma\bigl((Tk)_i\bigr)
\Bigr).
\end{align}
This decomposition separates two qualitatively different contributions.
Conditioned on $T$, the term in parentheses is centered with respect to
$\xi_i$, bounded in absolute value by $2$, and independent across the dither
coordinates. Its contribution can therefore be bounded using concentration
for weighted sums of independent bounded random variables, together with
appropriate control of the weights $(Tq)_i$.

The smoothed term $s_\sigma((Tk)_i)$ depends on the shared Gaussian vector
$g$. Unlike the original sign quantizer, however, $s_\sigma$ is
$\sqrt{2/\pi}/\sigma$-Lipschitz. Its contribution to the estimator is smoother
as a function of $g$, but it also contains the Gaussian factor $(Tq)_i$.
Consequently, a direct application of Gaussian concentration is not
immediate, and the exact argument is more delicate. The supplementary
material provides the formal treatment of the required global and local
Lipschitz properties, as well as the corresponding weight control.

Thus, dithering separates the estimation error into a bounded fluctuation
generated by the independent dither variables and a smooth functional of the
shared Gaussian vector. Bounding these contributions separately and combining
the resulting estimates yields the following guarantee.


\begin{theorem}[D-FQJL distortion, informal]
\label{thm:d-fqjl-informal}
Fix $q\in\R^d$ and a nonzero $k\in\R^d$. Let
$\varepsilon,\delta,\sigma>0$. If
\begin{align*}
m
&\gtrsim
\frac{(1+\sigma^2)(1+1/\sigma)^2}{\varepsilon^2}
\log\frac{d}{\delta}\log\frac{1}{\delta},
\end{align*}
then
\begin{align*}
\Prob\!\left[
  \abs{\hat e_\sigma(q,k)-\ip{q}{k}}
  > \varepsilon\norm{q}_2\norm{k}_2
\right]
&\le \delta,
\end{align*}
where the probability is over the randomness of $I$, $D_\pm$, $g$, and
$\xi$.
\end{theorem}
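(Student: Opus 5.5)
Throughout, normalize $\norm{q}_2=\norm{k}_2=1$ (the estimator is homogeneous in $q$ and $k$) and write $c_\sigma:=\sqrt{\pi/2}\sqrt{1+\sigma^2}$. The plan is to condition first on $I$ and $D_\pm$, and to work on the event $\mathcal{E}_0$ from Lemma~\ref{lem:spreading}, applied to both $q$ and $k$. On $\mathcal{E}_0$, the vectors $w:=HD_\pm q$ and $u:=HD_\pm k$ satisfy $\norm{w}_\infty,\norm{u}_\infty\le\sqrt{2\log(16d/\delta)/d}$.

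Then $Tq=M_q g$ and $Tk=M_k g$, with
\[
M_q:=S_I\sqrt d\,H\operatorname{diag}(w),\qquad M_k:=S_I\sqrt d\,H\operatorname{diag}(u).
\]
Each row of $M_q$ and $M_k$ is a unit vector. Moreover $\norm{M_q}_F^2=\norm{M_k}_F^2=m$, and
\[
\norm{M_q}_{\mathrm{op}},\ \norm{M_k}_{\mathrm{op}}\le\sqrt d\,\norm{H}_{\mathrm{op}}\max(\norm{w}_\infty,\norm{u}_\infty)\lesssim\sqrt{\log(d/\delta)}.
\]
These two matrix facts are all the structure I need from the Hadamard stages.

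\textbf{Step 1 (weight control).} I apply Hanson--Wright to $\norm{M_q g}_2^2$. It gives, with probability $1-\delta/4$,
\[
\norm{Tq}_2^2\le m+C\bigl(\sqrt{m\log(d/\delta)\log(1/\delta)}+\log(d/\delta)\log(1/\delta)\bigr)\lesssim m,
\]
where the last inequality uses the assumed lower bound on $m$. Call this event $\mathcal{E}_1$. It involves only the convex set $K:=\{g:\norm{M_qg}_2\le C'\sqrt m\}$.

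\textbf{Step 2 (dither term).} Using the decomposition~\eqref{eq:dithered-sign-decomposition}, the error splits as $A+B$. Here
\[
A=\frac{c_\sigma}{m}\sum_{i\in I}(Tq)_i\bigl(\sgn((Tk)_i+\sigma\xi_i)-s_\sigma((Tk)_i)\bigr).
\]
Conditioned on $g$, the summands are independent, centered and bounded by $2\abs{(Tq)_i}$. Hoeffding on $\mathcal{E}_1$ then gives, with probability $1-\delta/4$,
\[
\abs{A}\lesssim \frac{c_\sigma}{m}\norm{Tq}_2\sqrt{\log(1/\delta)}\lesssim\sqrt{1+\sigma^2}\sqrt{\log(1/\delta)/m},
\]
which is within budget.

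\textbf{Step 3 (smooth term).} The remaining term is $B=F(g)-\ip{q}{k}$, where
\[
F(g):=\frac{c_\sigma}{m}\ip{M_qg}{s_\sigma(M_kg)}.
\]
Since $\E_g F=\ip{q}{k}$ by Lemma~\ref{lem:d-fqjl-unbiasedness} (the dither has been averaged out), I need concentration of $F$. Its gradient is
\[
\nabla F=\frac{c_\sigma}{m}\Bigl(M_q^\top s_\sigma(M_kg)+M_k^\top\operatorname{diag}\bigl(s_\sigma'(M_kg)\bigr)M_qg\Bigr).
\]
On $K$, using $\abs{s_\sigma}\le1$ and $\abs{s'_\sigma}\le\sqrt{2/\pi}/\sigma$, this yields
\[
\norm{\nabla F}_2\le\frac{c_\sigma}{m}\Bigl(\norm{M_q}_{\mathrm{op}}\sqrt m+\norm{M_k}_{\mathrm{op}}\tfrac{\sqrt{2/\pi}}{\sigma}C'\sqrt m\Bigr)\lesssim\sqrt{1+\sigma^2}\,(1+1/\sigma)\sqrt{\frac{\log(d/\delta)}{m}}=:L.
\]
Because $K$ is convex, $F$ is $L$-Lipschitz on $K$. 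I extend it to a globally $L$-Lipschitz $\tilde F$ (McShane extension, or simply clip $Tq$ radially onto the ball of radius $C'\sqrt m$). Gaussian concentration then gives
\[
\abs{\tilde F-\E\tilde F}\le L\sqrt{2\log(8/\delta)}
\]
with probability $1-\delta/8$. Combining the three steps, $\abs{\hat e_\sigma-\ip{q}{k}}\lesssim\sqrt{1+\sigma^2}(1+1/\sigma)\sqrt{\log(d/\delta)\log(1/\delta)/m}$. Setting this at most $\varepsilon$ gives exactly the stated condition on $m$, and a union bound over $\mathcal{E}_0$, $\mathcal{E}_1$ and Steps 2--3 gives total failure probability $\delta$.

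\textbf{Main obstacle.} I expect the difficulty to be this localization: $F$ is not globally Lipschitz, because $(Tq)_i$ is an unbounded Gaussian weight. I must show that $\abs{\E\tilde F-\E F}$ is negligible, i.e.\ $o(\varepsilon)$. I would bound it by Cauchy--Schwarz,
\[
\abs{\E\tilde F-\E F}\le\E\bigl[\abs{F-\tilde F}\mathbf 1_{K^c}\bigr]\le\sqrt{\E(F-\tilde F)^2}\,\sqrt{\Prob(K^c)}.
\]
Here $\E(F-\tilde F)^2=O(c_\sigma^2\, m^{-1}\E\norm{Tq}_2^2)=O(c_\sigma^2)$, and the Hanson--Wright tail allows taking $\Prob(K^c)\le\min(\delta,e^{-cm/\log(d/\delta)})$. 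If that is too weak when $\delta$ is large, I would instead enlarge $C'$ by a $\sqrt{\log}$ factor and absorb it into the constants.
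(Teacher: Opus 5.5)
Your proposal is correct and is essentially the paper's proof. It uses the same split of the error into a Hoeffding-controlled dither fluctuation and a smoothed functional of $g$, the same row-energy event $\norm{M_qg}_2^2\lesssim m$, and the same localization: radially clip $M_qg$ onto a ball of radius $2\sqrt m$, then bound the gap $\abs{\E F-\E\tilde F}$ by Cauchy--Schwarz. The paper gets the row-energy event from Gaussian concentration of the $\norm{M_q}_{\mathrm{op}}$-Lipschitz map $g\mapsto\norm{M_qg}_2$ rather than Hanson--Wright, and its tail $\sqrt2\,e^{-m/(8\log(16d/\delta))}$ is already absorbed by the lower bound on $m$, so your fallback is unnecessary. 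Commit to clipping rather than the McShane extension, since your bound $\E(F-\tilde F)^2=O(c_\sigma^2)$ holds only for the clipped version.
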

\noindent The formal claim and proof are provided in the supplementary material.


The parameter $\sigma$ controls the tradeoff introduced by smoothing: larger
$\sigma$ makes the quantizer smoother but weakens the correlation between
$Tk$ and its stored signs, while smaller $\sigma$ preserves more signal but
increases the Lipschitz constant of the smoothed sign. The sample-complexity
bound captures this tradeoff through the factor
$(1+\sigma^2)(1+1/\sigma)^2$, which is minimized at $\sigma=1$.

\subsection{P-FQJL: Permutation-Based Fast Quantized JL}
\label{sec:p-fqjl}

Our second approach retains the original sign quantizer and instead
regularizes the dependence among the sampled rows. Unlike D-FQJL, it does not
perturb the key sketch before quantization. Instead, it inserts a uniformly
random permutation between the two Hadamard transforms and uses
\begin{align}
\label{eq:p-fqjl-transform}
T_{\Pi} &:= S_I\sqrt{d}\,H D_g \Pi H D_\pm,
\end{align}
where $\Pi$ is sampled uniformly from the $d\times d$ permutation matrices,
independent of $I$, $D_g$, and $D_\pm$. After $H D_\pm$ spreads the input,
$\Pi$ randomizes how its coordinates are paired with the shared Gaussian
entries in $D_g$.

\begin{definition}[P-FQJL estimator]
\label{def:p-fqjl-estimator}
Given a nonzero key $k\in\R^d$ and a query $q\in\R^d$, define
\begin{align*}
\mathcal{Q}_\Pi(k) &:= \sgn(T_\Pi k)\in\{-1,+1\}^m.
\end{align*}
The P-FQJL estimator of $\ip{q}{k}$ is
\begin{align*}
\hat e_\Pi(q,k)
&:= \frac{\sqrt{\pi/2}}{m}\,\norm{k}_2\,
\ip{T_\Pi q}{\mathcal{Q}_\Pi(k)}.
\end{align*}
\end{definition}

As in D-FQJL, the key is stored as an $m$-bit code together with its norm,
whereas the query is transformed on demand into the real-valued sketch
$T_\Pi q\in\R^m$.

\begin{lemma}[Unbiasedness of P-FQJL]
\label{lem:p-fqjl-unbiasedness}
For every $q\in\R^d$ and every nonzero $k\in\R^d$,
\begin{align*}
\E_g\!\left[\hat e_\Pi(q,k)\mid I,D_\pm,\Pi\right]
&= \ip{q}{k}.
\end{align*}
\end{lemma}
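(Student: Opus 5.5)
The plan is to reduce the claim to the Gaussian sign identity, Lemma~\ref{lem:gaussian-sign-identity}, applied row by row after conditioning on $I$, $D_\pm$ and $\Pi$. Conditionally on these, the only randomness left in $\hat e_\Pi(q,k)$ is $g\sim\mathcal{N}(0,I_d)$, and $T_\Pi$ depends linearly on $g$. Set $u:=\Pi H D_\pm q$ and $w:=\Pi H D_\pm k$. Since $\Pi$, $H$ and $D_\pm$ are all orthogonal, we have $\ip{u}{w}=\ip{q}{k}$ and $\norm{w}_2=\norm{k}_2\neq0$.

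Fix a sampled index $i\in I$. As in \eqref{eq:fqjl-prototype-row-notation}, the corresponding measurements can be written as
\begin{align*}
(T_\Pi q)_i=\ip{a^{(i)}}{g},\qquad (T_\Pi k)_i=\ip{b^{(i)}}{g},
\end{align*}
where $a^{(i)}_j:=\sqrt d\,H_{ij}u_j$ and $b^{(i)}_j:=\sqrt d\,H_{ij}w_j$. Because $\sqrt d\,H_{ij}\in\{\pm1\}$, multiplying coordinate $j$ by this sign leaves norms and inner products unchanged. Hence $\ip{a^{(i)}}{b^{(i)}}=\ip{u}{w}=\ip{q}{k}$, and $\norm{b^{(i)}}_2=\norm{k}_2\neq0$.

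By Lemma~\ref{lem:gaussian-sign-identity},
\begin{align*}
\E_g\!\left[\sqrt{\pi/2}\,\norm{k}_2\,\ip{a^{(i)}}{g}\,\sgn\!\bigl(\ip{b^{(i)}}{g}\bigr)\right]=\ip{q}{k}.
\end{align*}
The estimator satisfies
\begin{align*}
\hat e_\Pi(q,k)=\frac1m\sum_{i\in I}\sqrt{\pi/2}\,\norm{k}_2\,(T_\Pi q)_i\,\sgn\bigl((T_\Pi k)_i\bigr).
\end{align*}
Linearity of expectation then gives the average of $m$ copies of $\ip{q}{k}$, which is $\ip{q}{k}$. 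The rows all reuse the same $g$, but this causes no difficulty, because only linearity of expectation is needed and not independence. The expectations are finite, since each summand is bounded by a Gaussian linear form.

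I expect no real obstacle here. The only point that needs care is the bookkeeping with the convention $(\Pi v)_i=v_{\pi(i)}$. Since $\Pi$ is applied before $D_g$, it simply replaces the vectors $HD_\pm q$ and $HD_\pm k$ by the permuted vectors $u$ and $w$. We must check that the same permutation is applied to both $q$ and $k$, which holds because both use $T_\Pi$, so that the inner products and norms are preserved.
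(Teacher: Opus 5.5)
Your proposal is correct and follows essentially the same route as the paper's proof. The paper also writes each selected measurement as $X_i=\ip{a_i}{g}$, $Y_i=\ip{b_i}{g}$ with $a_i=r_i\odot(\Pi HD_\pm q)$ and $b_i=r_i\odot(\Pi HD_\pm k)$, and uses the $\pm1$ entries of $r_i$ and orthogonality to get $\norm{b_i}_2=\norm{k}_2$ and $\ip{a_i}{b_i}=\ip{q}{k}$; it then applies the Gaussian sign identity row by row and sums by linearity.
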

\noindent This follows directly from the Gaussian sign identity; the details
are provided in the supplementary material.

The permutation is not needed for unbiasedness; its role is to obtain
concentration. Conditioned on $I$, $D_\pm$, and $\Pi$, the pair
$(T_\Pi q,T_\Pi k)$ is jointly Gaussian over $g$. Define its covariance and
cross-covariance blocks by
\begin{align*}
\Sigma_{qq}
&:= \E_g\!\left[(T_\Pi q)(T_\Pi q)^\top\mid I,D_\pm,\Pi\right], \\
\Sigma_{kk}
&:= \E_g\!\left[(T_\Pi k)(T_\Pi k)^\top\mid I,D_\pm,\Pi\right], \\
\Sigma_{qk}
&:= \E_g\!\left[(T_\Pi q)(T_\Pi k)^\top\mid I,D_\pm,\Pi\right].
\end{align*}
The ideal blocks are $\norm{q}_2^2I_m$, $\norm{k}_2^2I_m$, and
$\ip{q}{k}I_m$, respectively. In this ideal case, the measurements are
uncorrelated across distinct sampled rows, while each pair
$((T_\Pi q)_i,(T_\Pi k)_i)$ has variances $\norm{q}_2^2$ and
$\norm{k}_2^2$ and covariance $\ip{q}{k}$. Since the measurements are jointly
Gaussian, the diagonal covariance structure would make the row pairs
independent and identically distributed. The random permutation approximately
restores this structure: after the first Hadamard transform spreads the input
energy, the permutation randomizes its alignment with the shared Gaussian
diagonal, making the off-diagonal correlations small and the three covariance
blocks close to their ideal values in operator norm.

\begin{lemma}[Covariance regularization, informal]
\label{clm:p-fqjl-covariance}
Let $q\in\R^d$, let $k\in\R^d$ be nonzero, and let
$\varepsilon,\delta>0$. If
\begin{align*}
m
&\lesssim
\frac{\varepsilon^2d}
{\log(d/\delta)\log(m/\delta)},
\end{align*}
then, with probability at least $1-\delta$ over $I$, $D_\pm$, and $\Pi$,
\begin{align*}
\norm{\Sigma_{qq}-\norm{q}_2^2I_m}_{\mathrm{op}}
&\le \varepsilon\norm{q}_2^2, \\
\norm{\Sigma_{kk}-\norm{k}_2^2I_m}_{\mathrm{op}}
&\le \varepsilon\norm{k}_2^2, \\
\norm{\Sigma_{qk}-\ip{q}{k}I_m}_{\mathrm{op}}
&\le \varepsilon\norm{q}_2\norm{k}_2.
\end{align*}
\end{lemma}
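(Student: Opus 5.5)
The plan is to compute the three covariance blocks exactly over $g$, observe that their diagonals are already ideal, and control the off-diagonal parts by a matrix concentration inequality over the random permutation. The subset $I$ plays no role; the bound will hold for every fixed $I$.

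\textbf{Exact covariance formula.} Write $x:=HD_\pm q$ and $y:=HD_\pm k$, and let $u:=\Pi x$ and $w:=\Pi y$, so that $u_j=x_{\pi(j)}$ and $w_j=y_{\pi(j)}$. For $i\in I$ we have $(T_\Pi q)_i=\sum_j \sqrt d H_{ij}\,g_j u_j$, and similarly for $k$. For $j\in[d]$ let $h_j:=(\sqrt d H_{ij})_{i\in I}\in\{\pm1\}^m$. Taking expectations over $g$ gives
\begin{align*}
\Sigma_{qq}=\sum_{j=1}^d u_j^2\, h_jh_j^\top,\qquad
\Sigma_{qk}=\sum_{j=1}^d u_jw_j\, h_jh_j^\top,
\end{align*}
with the analogous formula for $\Sigma_{kk}$. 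The rows of $\sqrt d H$ are orthogonal with squared norm $d$, so $\sum_j h_jh_j^\top=d\,I_m$. Moreover $\norm{u}_2=\norm{q}_2$ and $\ip{u}{w}=\ip{q}{k}$. Therefore
\begin{align*}
\Sigma_{qq}-\norm{q}_2^2I_m=\sum_{j=1}^d c_{\pi(j)}\,h_jh_j^\top,\qquad c_a:=x_a^2-\frac{\norm{q}_2^2}{d},
\end{align*}
and similarly for the other two blocks, using $c'_a:=x_ay_a-\ip{q}{k}/d$ for the cross block. In each case $\sum_a c_a=0$. In particular the diagonal entries of every block are exactly ideal; only the off-diagonal "Walsh character" sums remain.

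\textbf{Spreading.} Apply Lemma~\ref{lem:spreading} to $q$ and to $k$ and take a union bound. With probability at least $1-\delta/2$, setting $L:=2\log(16d/\delta)$,
\begin{align*}
\abs{c_a}\le B:=\frac{L\norm{q}_2^2}{d},\qquad \sum_a c_a^2\le 2B\norm{q}_2^2,
\end{align*}
and the analogous bounds hold with $\norm{q}_2\norm{k}_2$ in place of $\norm{q}_2^2$ for $c'$.

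\textbf{Matrix concentration over $\Pi$.} Condition on $D_\pm$. The quantity to control is then a combinatorial matrix sum $Z=\sum_j c_{\pi(j)}A_j$ with fixed PSD matrices $A_j=h_jh_j^\top$. These satisfy $\norm{A_j}_{\mathrm{op}}=m$ and $A_j^2=mA_j$, and the coefficients have mean zero. Heuristically, the matrix Bernstein parameters are as follows:
\begin{align*}
R\approx Bm,\qquad
v\approx \Bigl\|\sum_j \E\bigl[c_{\pi(j)}^2\bigr]\,mA_j\Bigr\|_{\mathrm{op}}=\frac{\sum_a c_a^2}{d}\,m\cdot d\le \frac{2Lm\norm{q}_2^4}{d}.
\end{align*}
These give
\begin{align*}
\norm{Z}_{\mathrm{op}}\lesssim \norm{q}_2^2\left(\sqrt{\frac{Lm\log(m/\delta)}{d}}+\frac{Lm\log(m/\delta)}{d}\right)
\end{align*}
with probability at least $1-\delta/6$ per block. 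This is at most $\varepsilon\norm{q}_2^2$ exactly when $m\lesssim \varepsilon^2 d/(\log(d/\delta)\log(m/\delta))$, which is the hypothesis of the lemma. A union bound over the three blocks and the spreading event then gives total failure probability at most $\delta$.

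\textbf{Main obstacle.} The main difficulty is justifying Bernstein for a random permutation rather than for independent coefficients. My first attempt will use the matrix exchangeable-pairs method of Mackey--Jordan--Chen--Farrell--Tropp, with a random transposition as the exchangeable pair. This method is designed for combinatorial sums $\sum_j c_{\pi(j)}A_j$ and yields exactly the variance proxy $\frac1d\sum_a c_a^2\,\|\sum_j A_j^2\|_{\mathrm{op}}$ above. As a fallback, I will split $\pi$ into a uniformly random assignment and use Gross--Nesme's domination of sampling without replacement by sampling with replacement, which reduces to standard matrix Bernstein for i.i.d.\ uniform indices.
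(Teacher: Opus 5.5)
Your proposal is correct and follows essentially the same route as the paper: the exact covariance formula $\sum_j c_{\pi(j)}h_jh_j^{\mathsf T}$, spreading by the first RHT to bound $\norm{c}_\infty$ and $\norm{c}_2^2$, and the Mackey et al.\ matrix Bernstein inequality for random permutations with $R\lesssim mL/d$ and $v\lesssim mL/d$ (up to the norm factors), holding for every fixed $I$. One detail to fix when you write it out: that inequality requires zero row \emph{and} column sums, so apply it to $c_\ell\bigl(h_jh_j^{\mathsf T}-I_m\bigr)$ rather than $c_\ell h_jh_j^{\mathsf T}$ (this leaves $Z$ unchanged since $\sum_j c_{\pi(j)}=0$), exactly as the paper does with $C_j=h_jh_j^{\mathsf T}-I_m$.
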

\noindent The formal claim and proof are provided in the supplementary material.

Once these blocks are close to their ideal values, the remaining argument no
longer depends on the fast-transform structure. Gaussian regression separates
the component of $T_\Pi q$ correlated with $T_\Pi k$ from an independent
Gaussian residual. Concentration of these two terms controls
$m^{-1}\ip{T_\Pi q}{\sgn(T_\Pi k)}$ and yields the following analogue of
Theorem~\ref{thm:d-fqjl-informal}.



\begin{theorem}[P-FQJL distortion, informal]
\label{thm:p-fqjl-informal}
Fix $q\in\R^d$ and a nonzero $k\in\R^d$. Let
$\varepsilon,\delta>0$. If
\begin{align*}
\frac{1}{\varepsilon^2}\log\frac{d}{\delta}
\lesssim m\lesssim
\frac{\varepsilon^2 d}{\log(d/\delta)\log(d\varepsilon^2/\delta)},
\end{align*}
then
\begin{align*}
\Prob\!\left[
  \abs{\hat e_\Pi(q,k)-\ip{q}{k}}
  >\varepsilon\norm{q}_2\norm{k}_2
\right]
&\le \delta,
\end{align*}
where the probability is over the randomness of $I$, $D_\pm$, $\Pi$, and
$g$. In particular, we can choose $m = \Theta(\frac{1}{\varepsilon^2}\log\frac{d}{\delta})$
and have the constraint.
\begin{align*}
  m \leq \sqrt{d/\log(d/\delta)}.
\end{align*}
\end{theorem}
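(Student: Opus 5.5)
By homogeneity we may assume $\norm{q}_2=\norm{k}_2=1$. The plan is to condition on $(I,D_\pm,\Pi)$ and apply Lemma~\ref{clm:p-fqjl-covariance} with its accuracy parameter set to $c\varepsilon$ and failure probability $\delta/2$. On the resulting good event, the three covariance blocks are within $c\varepsilon$ of $I_m$, $I_m$ and $\rho I_m$ in operator norm, where $\rho:=\ip{q}{k}$. The upper constraint on $m$ in the theorem is exactly what this lemma needs; note that $\log(m/\delta)\lesssim\log(d\varepsilon^2/\delta)$ once $m\lesssim\varepsilon^2 d$. All remaining randomness is in $g$. Conditionally, $X:=T_\Pi k$ and $Y:=T_\Pi q$ are jointly Gaussian.

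\textbf{Regression step.} Write $Y=AX+R$ with $A:=\Sigma_{qk}\Sigma_{kk}^{-1}$. The residual $R$ is Gaussian, independent of $X$, with covariance $\Sigma_R=\Sigma_{qq}-\Sigma_{qk}\Sigma_{kk}^{-1}\Sigma_{qk}^\top$. From the block bounds we get $\norm{A-\rho I_m}_{\mathrm{op}}\lesssim\varepsilon$ and $\norm{\Sigma_R}_{\mathrm{op}}\le 1+O(\varepsilon)$. The estimator then splits as
\[
\tfrac1m\ip{Y}{\sgn X}=\tfrac{\rho}{m}\norm{X}_1+\tfrac1m\ip{(A-\rho I_m)X}{\sgn X}+\tfrac1m\ip{R}{\sgn X}.
\]
We bound the three terms separately.

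\textbf{The three terms.}
\begin{itemize}
\item Third term: conditionally on $X$, it is a centered Gaussian with variance at most $\norm{\Sigma_R}_{\mathrm{op}}\norm{\sgn X}_2^2/m^2\le 2/m$. Hence it is at most $\varepsilon/3$ with probability $1-\delta/6$ once $m\gtrsim\varepsilon^{-2}\log(1/\delta)$.
\item Second term: by Cauchy--Schwarz it is at most $\norm{A-\rho I_m}_{\mathrm{op}}\norm{X}_2/\sqrt m$. Since $\norm{X}_2^2$ is a Gaussian quadratic form with covariance of operator norm at most $2$ and trace at most $(1+c\varepsilon)m$, Hanson--Wright gives $\norm{X}_2\le 2\sqrt m$ with high probability. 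So this term is $O(\varepsilon)$.
\item First term: $x\mapsto\norm{x}_1/m$, composed with $X=\Sigma_{kk}^{1/2}Z$ for standard Gaussian $Z$, is $\norm{\Sigma_{kk}^{1/2}}_{\mathrm{op}}/\sqrt m$-Lipschitz. Gaussian concentration therefore puts it within $\varepsilon/3$ of its mean with probability $1-\delta/6$ when $m\gtrsim\varepsilon^{-2}\log(1/\delta)$. Its mean is $\frac1m\sum_i\sqrt{2/\pi}\,(\Sigma_{kk})_{ii}^{1/2}=\sqrt{2/\pi}\,(1+O(\varepsilon))$, because each diagonal entry lies within $c\varepsilon$ of $1$.
\end{itemize}
Multiplying by $\sqrt{\pi/2}$ and combining with a union bound gives $\abs{\hat e_\Pi-\rho}\le\varepsilon$ with total failure probability at most $\delta$.

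The main obstacle is the covariance regularization, Lemma~\ref{clm:p-fqjl-covariance}; given that lemma, the argument above is routine Gaussian analysis. The factor $\log(d/\delta)$ in the lower bound on $m$ enters only through the $\ell_\infty$ spreading of Lemma~\ref{lem:spreading} and through union bounds. Choosing $m=\Theta(\varepsilon^{-2}\log(d/\delta))$ makes the two constraints compatible precisely when $\varepsilon^{-2}\log(d/\delta)\lesssim\varepsilon^{2}d/\log^2(d/\delta)$. Eliminating $\varepsilon$ through $m$, this is equivalent, up to constants and logarithmic factors, to the stated condition $m\le\sqrt{d/\log(d/\delta)}$.
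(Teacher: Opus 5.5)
Your proposal is correct and follows the paper's proof: condition on the covariance-regularization event, regress the query sketch on the key sketch, split the error into the $\ell_1$-norm term, the regression-coefficient term and the independent Gaussian residual term, and concentrate each one. Regressing directly with $A=\Sigma_{qk}\Sigma_{kk}^{-1}$ is equivalent to the paper's split $u=q-\rho k$: $A-\rho I_m$ is exactly the paper's $B$ and the residuals coincide, with your $X,Y$ swapped relative to the paper's. The other differences are cosmetic: you use Hanson--Wright rather than Lipschitz concentration for $\norm{X}_2$, and an approximate unit diagonal for $\Sigma_{kk}$ where the paper notes it is exactly $1$.
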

\noindent The formal claim and proof are provided in the supplementary material.

Thus P-FQJL avoids the smoothing parameter required by D-FQJL, at the cost of
an upper bound on $m$ imposed by the permutation-based covariance argument.

\subsection{Space Complexity}
The memory cost has two distinct components: the shared sketching operator and
the vector-dependent representations. QJL uses a dense Gaussian
matrix in $\R^{m\times d}$ and therefore requires $md$ FPNs
to represent the operator.

In both D-FQJL and P-FQJL, the Hadamard matrices are implicit. The shared
D-FQJL operator stores only the Gaussian diagonal $g\in\R^d$, the random
signs defining $D_\pm$, and the $m$ sampled indices defining $S_I$. The
dither is generated while quantizing a key and need not be stored. P-FQJL
stores the same quantities together with $d$ integer indices specifying the
permutation $\Pi$. Thus, both operators require $O(d\!+\!m)$ storage.

For either construction, representing a key--query pair requires an $m$-bit
code and one FPN for the key norm, together with an $m$-dimensional
floating-point sketch of the query. Thus, the vector-dependent representation
uses $m$ bits and $m+1$ FPNs. As a consequence, a cache of $n$ keys requires
$nm$ bits and $n$ FPNs, in addition to the single shared structured operator;
the query sketch is computed on demand and need not be cached.

\section{The main application: KV-Cache compression}
\label{sec:fqjl-cache}
We now apply D-FQJL and P-FQJL to key-cache compression. At decoding step
$n$, let $\operatorname{score}(n)$ be the exact attention-score vector defined
in Equation~\eqref{eq:attention-scores}, and write
$\operatorname{score}_j(n)$ for its $j$-th entry.
Both algorithms store each key using an $m$-bit code and one scalar norm. The
sketching operator is sampled once and shared by all keys and queries. D-FQJL
additionally samples a fresh dither for each key, whereas P-FQJL samples one
permutation as part of the shared transform. The value cache is unchanged.

\begin{algorithm}[tb]
\caption{D-FQJL key-cache quantization}
\label{alg:d-fqjl-cache}
\begin{algorithmic}[1]
\REQUIRE Key stream $k_1,k_2,\ldots,k_n\in\R^d$, sketch size $m$, dither level
$\sigma>0$
\STATE Sample $I$, $D_\pm$, and $g$, and set
$T:=S_I\sqrt{d}\,H D_g H D_\pm$
\FOR{each arriving key $k_j$}
  \STATE Sample $\xi_j\sim\mathcal{N}(0,I_m)$ independently
  \STATE $(\mathcal{Q}_\sigma(k_j),\nu_j)
  \leftarrow
  (\sgn(Tk_j+\sigma\norm{k_j}_2\xi_j),\norm{k_j}_2)$
  \STATE Store $(\mathcal{Q}_\sigma(k_j),\nu_j)$
\ENDFOR
\STATE \textbf{Score estimation for a query $q$:}
\FOR{$j=1,\ldots,n$}
  \STATE $\widehat z_{\sigma,j}\leftarrow
  \dfrac{\sqrt{\pi/2}\sqrt{1+\sigma^2}}{m\sqrt{d}}
  \nu_j\ip{Tq}{\mathcal{Q}_\sigma(k_j)}$
\ENDFOR
\STATE \textbf{return}
$\widehat{\operatorname{score}}_\sigma(n)$
\end{algorithmic}
\end{algorithm}

The following results can be derived directly from the framework used by QJL.

\begin{theorem}[D-FQJL key-cache distortion, informal]
\label{thm:d-fqjl-cache}
Fix keys $k_1,\ldots,k_n\in\R^d$ and a query $q\in\R^d$ such that
$\max_{j\in[n]}\norm{k_j}_2\le r$ and $\norm{q}_2\le r$. Let
$\varepsilon,\delta,\sigma>0$. If
\begin{align*}
m
&\gtrsim
\frac{(1+\sigma^2)(1+1/\sigma)^2r^4}{\varepsilon^2d}
\log\frac{dn}{\delta}\log\frac{n}{\delta},
\end{align*}
then Algorithm~\ref{alg:d-fqjl-cache} satisfies, with probability at least
$1-\delta$, simultaneously for every $j\in[n]$,
\begin{align*}
\abs{\widehat{\operatorname{score}}_{\sigma,j}(n)
-\operatorname{score}_j(n)}
\lesssim\varepsilon\operatorname{score}_j(n).
\end{align*}
\end{theorem}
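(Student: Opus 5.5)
The plan is to reduce the key-cache statement to Theorem~\ref{thm:d-fqjl-informal}, applied to each pair $(q/\sqrt d,k_j)$, followed by a union bound over $j\in[n]$ and a perturbation bound for softmax.

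\emph{Step 1: logit error.} Let $z_j:=\ip{q}{k_j}/\sqrt d$ be the exact logit and $\widehat z_{\sigma,j}=\hat e_\sigma(q,k_j)/\sqrt d$ the logit produced by Algorithm~\ref{alg:d-fqjl-cache}. Apply Theorem~\ref{thm:d-fqjl-informal} to each pair $(q,k_j)$ with failure probability $\delta/n$ and accuracy $\varepsilon'$. This gives
\begin{align*}
\abs{\widehat z_{\sigma,j}-z_j}\le \varepsilon'\norm{q}_2\norm{k_j}_2/\sqrt d\le \varepsilon' r^2/\sqrt d .
\end{align*}
The dithers $\xi_j$ are drawn independently per key, while $I,D_\pm,g$ are shared. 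This is harmless, because Theorem~\ref{thm:d-fqjl-informal} controls each pair marginally over all the randomness, and a union bound over the $n$ keys needs no independence across keys. Replacing $\delta$ by $\delta/n$ turns the logarithmic factors into $\log(dn/\delta)\log(n/\delta)$, matching the statement.

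\emph{Step 2: scale the accuracy.} We want every logit error to be at most $\varepsilon$, so we choose $\varepsilon'=\varepsilon\sqrt d/r^2$. Then $1/\varepsilon'^2=r^4/(\varepsilon^2 d)$, which reproduces exactly the claimed requirement
\begin{align*}
m\gtrsim (1+\sigma^2)(1+1/\sigma)^2r^4\varepsilon^{-2}d^{-1}\log(dn/\delta)\log(n/\delta).
\end{align*}

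\emph{Step 3: softmax perturbation.} Suppose $\abs{\widehat z_j-z_j}\le\varepsilon$ for all $j$. Each numerator then satisfies $e^{\widehat z_j}\in[e^{-\varepsilon},e^{\varepsilon}]\,e^{z_j}$, and so does the denominator. Hence
\begin{align*}
\widehat{\operatorname{score}}_j\in[e^{-2\varepsilon},e^{2\varepsilon}]\operatorname{score}_j(n),
\end{align*}
which gives $\abs{\widehat{\operatorname{score}}_{\sigma,j}-\operatorname{score}_j}\le(e^{2\varepsilon}-1)\operatorname{score}_j\lesssim\varepsilon\operatorname{score}_j$ for $\varepsilon\le1$. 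Here $\widehat{\operatorname{score}}_\sigma(n)$ is the softmax of the estimated logits.

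The main obstacle is bookkeeping rather than new probability. Theorem~\ref{thm:d-fqjl-informal} is stated for accuracy $\varepsilon'$, which may exceed $1$ when $r^2<\varepsilon\sqrt d$. I would need to check that the formal version imposes no restriction $\varepsilon'\le1$, or otherwise cap $\varepsilon'$ at $1$; capping only strengthens the requirement on $m$ by a constant in the regime of interest. I also expect the $1/\sqrt d$ normalization inside the algorithm to be the intended source of the $1/d$ factor, and will verify it is consistent with $\widehat z_{\sigma,j}=\hat e_\sigma(q,k_j)/\sqrt d$.
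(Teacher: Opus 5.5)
Your proposal is correct and follows the paper's proof essentially step for step: apply the fixed-pair D-FQJL theorem to each $(q,k_j)$ with accuracy of order $\varepsilon\sqrt d/r^2$ (the paper uses $\varepsilon\sqrt d/(4r^2)$) and failure probability $\delta/n$, union bound over keys without needing independence, then push the uniform logit error through the softmax ratio. The per-pair accuracy issue you flag is handled in the paper's formal version by assumption: it requires $\varepsilon\sqrt d<4r^2$ and $\varepsilon\le 1$, so the per-pair accuracy is below $1$.
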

\noindent The formal claim and proof are provided in the supplementary material.

\begin{algorithm}[tb]
\caption{P-FQJL key-cache quantization}
\label{alg:p-fqjl-cache}
\begin{algorithmic}[1]
\REQUIRE Key stream $k_1,k_2,\ldots,k_n\in\R^d$, sketch size $m$
\STATE Sample $I$, $D_\pm$, $g$, and $\Pi$, and set
$T_\Pi:=S_I\sqrt{d}\,H D_g\Pi H D_\pm$
\FOR{each arriving key $k_j$}
  \STATE $(\mathcal{Q}_\Pi(k_j),\nu_j)
  \leftarrow(\sgn(T_\Pi k_j),\norm{k_j}_2)$
  \STATE Store $(\mathcal{Q}_\Pi(k_j),\nu_j)$
\ENDFOR
\STATE \textbf{Score estimation for a query $q$:}
\FOR{$j=1,\ldots,n$}
  \STATE $\widehat z_{\Pi,j}\leftarrow
  \dfrac{\sqrt{\pi/2}}{m\sqrt{d}}\nu_j
  \ip{T_\Pi q}{\mathcal{Q}_\Pi(k_j)}$
\ENDFOR
\STATE \textbf{return}
$\widehat{\operatorname{score}}_\Pi(n)$
\end{algorithmic}
\end{algorithm}

\begin{theorem}[P-FQJL key-cache distortion, informal]
\label{thm:p-fqjl-cache}
Fix keys $k_1,\ldots,k_n\in\R^d$ and a query $q\in\R^d$ such that
$\max_{j\in[n]}\norm{k_j}_2\le r$ and $\norm{q}_2\le r$. Let
$\varepsilon,\delta>0$. If
\begin{align*}
\frac{r^4}{\varepsilon^2d}\log\frac{n}{\delta}
\lesssim m\lesssim
\frac{\varepsilon^2d^2}
{r^4\log(dn/\delta)\log(mn/\delta)},
\end{align*}
then Algorithm~\ref{alg:p-fqjl-cache} satisfies, with probability at least
$1-\delta$, simultaneously for every $j\in[n]$,
\begin{align*}
\abs{\widehat{\operatorname{score}}_{\Pi,j}(n)
-\operatorname{score}_j(n)}
&\lesssim \varepsilon\operatorname{score}_j(n).
\end{align*}
\end{theorem}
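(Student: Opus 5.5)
The plan is to reduce Theorem~\ref{thm:p-fqjl-cache} to the per-pair guarantee of Theorem~\ref{thm:p-fqjl-informal}, applied to the $n$ pairs $(q,k_j)$ with a union bound. We then propagate the additive logit errors through the softmax, following the QJL framework. Since the keys and the query are fixed before sampling, we may apply the single-pair theorem with failure probability $\delta/n$ to each pair $(q,k_j)$. The shared operator $T_\Pi$ is the same for all $j$, but a union bound does not care about dependence. Replacing $\delta$ by $\delta/n$ turns $\log(d/\delta)$ into $\log(dn/\delta)$ and $\log(m/\delta)$ into $\log(mn/\delta)$ in both the lower and upper constraints on $m$. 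Here we use the formal version of Lemma~\ref{clm:p-fqjl-covariance}, whose upper constraint involves $\log(m/\delta)$.

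Next we rescale the accuracy. Write $z_j=\ip{q}{k_j}/\sqrt d$ for the exact logits and $\widehat z_{\Pi,j}=\hat e_\Pi(q,k_j)/\sqrt d$ for the estimated ones, as in Algorithm~\ref{alg:p-fqjl-cache}. The per-pair theorem with accuracy $\varepsilon'$ gives
\begin{align*}
\abs{\widehat z_{\Pi,j}-z_j}\le \varepsilon'\norm{q}_2\norm{k_j}_2/\sqrt d\le \varepsilon' r^2/\sqrt d .
\end{align*}
To make this at most $\varepsilon$ uniformly over $j$, we choose $\varepsilon'=\varepsilon\sqrt d/r^2$. Substituting $\varepsilon'^2=\varepsilon^2 d/r^4$ into the conditions of Theorem~\ref{thm:p-fqjl-informal} gives exactly the stated window:
\begin{itemize}
\item the lower bound $\varepsilon'^{-2}\log(n/\delta)$ becomes $\frac{r^4}{\varepsilon^2 d}\log\frac{n}{\delta}$, where I would track that the concentration step costs only $\log(1/\delta)$ and the $\log d$ is absorbed by the covariance event;
\item the upper bound $\varepsilon'^2 d/(\log\cdot\log)$ becomes $\varepsilon^2d^2/(r^4\log(dn/\delta)\log(mn/\delta))$.
\end{itemize}

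Finally we carry out the softmax perturbation step. On the event that $\abs{\widehat z_{\Pi,j}-z_j}\le\varepsilon$ for all $j$, each numerator $e^{\widehat z_{\Pi,j}}$ lies within a factor $e^{\pm\varepsilon}$ of $e^{z_j}$, and so does the normalizing sum. Hence each estimated score lies within a factor $e^{\pm2\varepsilon}$ of the true score, which gives
\begin{align*}
\abs{\widehat{\operatorname{score}}_{\Pi,j}(n)-\operatorname{score}_j(n)}\le (e^{2\varepsilon}-1)\operatorname{score}_j(n)\lesssim\varepsilon\operatorname{score}_j(n)
\end{align*}
for $\varepsilon\le1$.

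I expect the main obstacle to be the bookkeeping of the logarithmic factors in the lower bound. A naive union bound would give $\log(dn/\delta)$ there rather than $\log(n/\delta)$. To get the stated form, I would separate the argument into two events. The first is the covariance event of Lemma~\ref{clm:p-fqjl-covariance}, which depends only on $(I,D_\pm,\Pi)$ and is union-bounded over $j$ on the upper-constraint side. The second is the Gaussian concentration over $g$ conditional on that event, whose tail needs only $m\gtrsim\varepsilon'^{-2}\log(n/\delta)$. The secondary concern is that $\varepsilon'$ may exceed $1$ when $r^2<\varepsilon\sqrt d$. This only weakens the requirement, so we apply the theorem with $\min(\varepsilon',1)$.
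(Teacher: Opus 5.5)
Your main line is the paper's: apply the single-pair P-FQJL guarantee to each $(q,k_j)$ at failure probability $\delta/n$, rescale the accuracy by $\sqrt d/r^2$, take a union bound, and turn the uniform additive logit error into a multiplicative factor on each softmax score. (The paper uses $\eta=\varepsilon\sqrt d/(4r^2)$, so the per-pair error $20\eta r^2$ becomes a logit error of $5\varepsilon$.)

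Where you go beyond the paper is the lower constraint. The paper invokes the per-pair theorem as a black box, so its formal version actually needs $m\gtrsim\frac{r^4}{\varepsilon^2d}\log\frac{dn}{\delta}$, not the $\log\frac{n}{\delta}$ of the informal statement. Your split is sound: first the covariance event over $(I,D_\pm,\Pi)$, union-bounded over the pairs $(u_j,k_j)$; then the three conditional Gaussian concentration bounds over $g$, each at level $\delta/(6n)$. In the per-pair proof, the $\log d$ in the lower bound only serves to dominate $\log(12/\delta)$ and to force $d\varepsilon^2\gtrsim1$. The latter becomes unnecessary once $\log(mn/\delta)$ appears directly in the upper constraint. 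So this is a correct and slightly sharper version of the paper's argument, and it genuinely matches the stated window.

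The step that fails is the edge case $\varepsilon'=\varepsilon\sqrt d/r^2>1$. Replacing $\varepsilon'$ by $\min(\varepsilon',1)$ does not weaken the requirements on $m$; it strengthens both of them:
\begin{itemize}
\item the lower bound rises from $\varepsilon'^{-2}\log\frac n\delta$ to $\log\frac n\delta$;
\item the upper bound drops from $\varepsilon'^2d/(\log\frac{dn}{\delta}\log\frac{mn}{\delta})$ to $d/(\log\frac{dn}{\delta}\log\frac{mn}{\delta})$.
\end{itemize}
So the theorem's hypotheses no longer imply those of the per-pair result.

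Moreover, the per-pair argument needs $\varepsilon'$ below a small absolute constant, not just $\varepsilon'\le1$. The formal version requires $20\varepsilon'<1$, which is used for instance to get $\norm{\Sigma_{kk}^{-1}}_{\mathrm{op}}\le20/19$. The paper handles this by adding the hypothesis $20\varepsilon\sqrt d<4r^2$ to the formal statement. You should do the same, since that is what the informal $\lesssim$ hides. Alternatively, give a separate argument for the regime $r^2\lesssim\varepsilon\sqrt d$: there the true logits are already $O(\varepsilon)$, but the estimated logits still have to be controlled.
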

\noindent The formal claim and proof are provided in the supplementary material.



Encoding all $n$ keys requires $O(nd\log d)$ time, corresponding to $n$
applications of the sketch $T$. Given the cached keys, computing the score of a single query
requires $O(d\log d+nm)$ time: $O(d\log d)$ to apply $T$ to the query and
$O(nm)$ to compute $n$ inner products in $\R^m$.

To quantize the value cache, we can apply standard quantization methods, such as those used in QJL, namely, the approaches proposed in \citep{liu2024kivi,hooper2024kvquant}.

\section{Connection to TurboQuant}
\label{sec:turboquant}

TurboQuant and the earlier DRIVE/EDEN line of work use random rotations to
spread vector energy before quantization
\citep{zandieh2026turboquant,vargaftik2022eden,ben2026note}. Applying a dense
rotation requires \(O(d^2)\) time per vector, so practical implementations
instead use an RHT, which can be applied in \(O(d\log d)\) time.

TurboQuant constructs a \(b\)-bit representation in two stages. The first
stage uses \((b-1)\)-bit quantization to obtain an approximation
\(\widetilde{x}\) of the input \(x\). It then forms the residual
\(r=x-\widetilde{x}\) and encodes it using 1-bit QJL, producing an \(m\)-bit
residual sketch. This residual correction is designed to improve the mean
squared error while approximately preserving inner products.

The theoretical QJL analysis uses a projection with independent Gaussian
rows and relies on Lemma~\ref{lem:gaussian-sign-identity}. In its experiments,
however, TurboQuant replaces this Gaussian projection with an RHT for
computational efficiency. Although this substitution performs well
empirically, it does not automatically inherit the unbiasedness
guarantee: the rows of \(H D_\pm\) are structured Rademacher vectors, and
orthogonality alone does not imply the Gaussian sign identity. This
theoretical distinction is also noted in \citep{ben2026note}.

Our constructions address this theoretical gap using Gaussian--Hadamard
transforms that retain 1-bit quantization while preserving the exact
sign--linear expectations required for unbiasedness. The resulting residual
sketch still contains \(m\) bits in total, but the transform can be applied in
\(O(d\log d)\) time. Our constructions therefore provide provably accurate
fast replacements for TurboQuant's dense residual QJL stage.

\section{Conclusion}

In this paper we provided fast quantized Johnson-Lindenstrauss transforms with corresponding asymmetric estimators. Our work provides running time and embedding dimensions analogous to ones in fast Johnson-Lindenstrauss transforms, proving that the power of dimensionality reduction persists under scalar quantization. 

\bibliographystyle{plainnat}
\bibliography{main}

\appendix
This document provides detailed proofs of the claims stated in the main
text. We begin with the probabilistic tools used throughout, then analyze
the fast quantized JL transforms, and conclude with the corresponding
unquantized Fast JL transform.

\section{Toolkit}

Throughout, $d$ is a power of two and $H\in\R^{d\times d}$ is the
normalized Walsh--Hadamard matrix, so $H$ is orthogonal and
$H_{ij}\in\{\pm1/\sqrt d\}$. Let
$D_\pm=\operatorname{diag}(\zeta)$ for an i.i.d. Rademacher vector
$\zeta\in\{\pm1\}^d$, and let $D_g=\operatorname{diag}(g)$ for
$g\sim\mathcal N(0,I_d)$. For a uniformly random subset
$I\subset[d]$ of size $m$, let $S_I$ denote the corresponding coordinate
selection matrix. We use the transforms
\begin{align}
  T&:=S_I\sqrt d\,H D_gH D_\pm,
  \label{eq:app-d-transform}\\
  T_\Pi&:=S_I\sqrt d\,H D_g\Pi H D_\pm,
  \label{eq:app-p-transform}
\end{align}
where $\Pi$ is an independent uniform permutation matrix.

For random vectors $U,V$ with finite second moments, we write
\begin{align*}
  \operatorname{Cov}(U,V)
  &:=\E\!\left[(U-\E U)(V-\E V)^{\mathsf T}\right],\\
  \operatorname{Cov}(U)
  &:=\operatorname{Cov}(U,U).
\end{align*}

We collect the probabilistic tools used in our analysis.

\begin{theorem}[Hoeffding's inequality \citep{Vershynin_2026}]
\label{thm:app-hoeffding}
Let $X_1,\ldots,X_n$ be independent, mean-zero random variables satisfying
$a_i\le X_i\le b_i$ almost surely. Then, for every $t>0$,
\begin{align*}
  \Prob\!\left[\abs{\sum_{i=1}^nX_i}\ge t\right]
  \le 2\exp\!\left(-\frac{2t^2}{\sum_{i=1}^n(b_i-a_i)^2}\right).
\end{align*}
\end{theorem}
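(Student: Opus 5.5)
Hoeffding's inequality is classical, so I will give the standard Chernoff-bound argument, reduced to Hoeffding's lemma. The first step is Hoeffding's lemma itself. If $X$ is mean-zero with $a\le X\le b$ almost surely, then for every $\lambda\in\R$,
\begin{align*}
  \E\, e^{\lambda X}\le \exp\!\left(\frac{\lambda^2(b-a)^2}{8}\right).
\end{align*}
To prove it, I use convexity of $x\mapsto e^{\lambda x}$ on $[a,b]$ to write $e^{\lambda x}\le \frac{b-x}{b-a}e^{\lambda a}+\frac{x-a}{b-a}e^{\lambda b}$. Taking expectations and using $\E X=0$ gives
\begin{align*}
  \E e^{\lambda X}\le \frac{b}{b-a}e^{\lambda a}-\frac{a}{b-a}e^{\lambda b}=e^{L(h)},
\end{align*}
where $h=\lambda(b-a)$, $p=-a/(b-a)\in[0,1]$, and $L(h)=-hp+\log(1-p+pe^h)$. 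I then check that $L(0)=L'(0)=0$. I also check that $L''(h)=u(1-u)\le 1/4$, where $u=pe^h/(1-p+pe^h)\in[0,1]$. Taylor's theorem then gives $L(h)\le h^2/8$. The degenerate case $a=b$ forces $X=0$ and is trivial.

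The second step is the upper tail. Let $S=\sum_i X_i$. For $\lambda>0$, Markov's inequality applied to $e^{\lambda S}$, together with independence, gives
\begin{align*}
  \Prob[S\ge t]\le e^{-\lambda t}\prod_{i=1}^n\E e^{\lambda X_i}\le \exp\!\left(-\lambda t+\frac{\lambda^2}{8}\sum_{i=1}^n(b_i-a_i)^2\right).
\end{align*}
Choosing $\lambda=4t/\sum_i(b_i-a_i)^2$ yields $\Prob[S\ge t]\le \exp\!\bigl(-2t^2/\sum_i(b_i-a_i)^2\bigr)$.

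The third step is the lower tail. The variables $-X_i$ are independent, mean-zero, and lie in intervals of the same lengths, so the same argument bounds $\Prob[S\le -t]$. A union bound over the two tails gives the factor $2$.

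The only nontrivial step is the curvature bound $L''\le 1/4$ in Hoeffding's lemma. It follows from the identity $L''(h)=u(1-u)$ together with the AM--GM inequality, so I expect no real obstacle.
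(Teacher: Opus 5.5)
Your argument is correct. It is the standard Chernoff bound combined with Hoeffding's lemma, and the convexity reduction, the checks $L(0)=L'(0)=0$ and $L''=u(1-u)\le 1/4$, and the choice $\lambda=4t/\sum_i(b_i-a_i)^2$ are all right. The paper gives no proof of its own and simply cites Vershynin, where this is the standard route.

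The only case you leave implicit is $\sum_i(b_i-a_i)^2=0$. There your $\lambda$ is undefined, but $S=0$ almost surely, so the bound (read as $0$) holds trivially.
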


\begin{theorem}[Gaussian concentration \citep{Vershynin_2026}]
\label{thm:app-gauss-conc}
Let $Z\sim\mathcal N(0,I_n)$ and let $f:\R^n\to\R$ be $L$-Lipschitz.
Then, for every $t>0$,
\begin{align*}
  \Prob\!\left[\abs{f(Z)-\E f(Z)}\ge t\right]
  \le 2\exp\!\left(-\frac{t^2}{2L^2}\right).
\end{align*}
\end{theorem}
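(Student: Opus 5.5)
Since this inequality is a standard result quoted from \citep{Vershynin_2026}, the plan is to reproduce the classical argument through the Gaussian logarithmic Sobolev inequality and Herbst's argument. That route gives exactly the constant $1/(2L^2)$ in the exponent, whereas the Maurey--Pisier interpolation argument would only give the weaker constant $2/(\pi^2L^2)$.

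\textbf{Reductions.} First reduce to smooth $f$. Convolve $f$ with a Gaussian kernel: $f_\eta:=f*\varphi_\eta$ is $C^\infty$ and still $L$-Lipschitz, so $\norm{\nabla f_\eta}_2\le L$ pointwise. Moreover $\abs{f_\eta-f}\le L\,\E\norm{\eta W}_2\to0$ uniformly as $\eta\to0$. Passing to the limit in the tail bound, at continuity points of the distribution function and then by right-continuity, recovers the claim for $f$. Every Lipschitz $f$ has finite mean under $\mathcal N(0,I_n)$, so $\E f(Z)$ is well defined. By replacing $f$ with $f-\E f(Z)$ we may assume $\E f(Z)=0$. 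By symmetry ($f\mapsto -f$) together with a union bound, it suffices to prove the one-sided bound $\Prob[f(Z)\ge t]\le \exp(-t^2/(2L^2))$; this accounts for the factor $2$.

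\textbf{Main steps.} The key input is the Gaussian log-Sobolev inequality: for smooth $h$,
\begin{align*}
\operatorname{Ent}(h^2):=\E[h^2\log h^2]-\E[h^2]\log\E[h^2]\le 2\,\E\norm{\nabla h(Z)}_2^2 .
\end{align*}
I would take this as known, or prove it in dimension one (via the two-point inequality and the central limit theorem, or via the Ornstein--Uhlenbeck semigroup and $\Gamma$-calculus) and then extend it to $\R^n$ by tensorization of entropy.

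Next apply the inequality with $h=e^{\lambda f/2}$. Since $\norm{\nabla h}_2^2=\frac{\lambda^2}{4}\norm{\nabla f}_2^2e^{\lambda f}\le \frac{\lambda^2L^2}{4}e^{\lambda f}$, writing $M(\lambda):=\E e^{\lambda f(Z)}$ gives
\begin{align*}
\lambda M'(\lambda)-M(\lambda)\log M(\lambda)\le \tfrac{\lambda^2L^2}{2}M(\lambda).
\end{align*}
Set $K(\lambda):=\lambda^{-1}\log M(\lambda)$. The inequality becomes $K'(\lambda)\le L^2/2$. Since $\E f=0$, we have $K(0^+)=0$, so integrating yields $M(\lambda)\le e^{\lambda^2L^2/2}$ for all $\lambda>0$. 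Finally, Chernoff's bound with the choice $\lambda=t/L^2$ gives $\Prob[f(Z)\ge t]\le e^{-\lambda t+\lambda^2L^2/2}=e^{-t^2/(2L^2)}$.

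\textbf{Main obstacle.} The substantive step is the log-Sobolev inequality itself. Everything after it is routine calculus, apart from two technical points:
\begin{itemize}
\item finiteness of $M(\lambda)$ must be justified before differentiating, which can be done by truncating $f$ or by using the crude interpolation bound first;
\item the limit $K(0^+)=0$, which follows from $M(\lambda)=1+O(\lambda^2)$.
\end{itemize}
If a fully self-contained proof is needed, I would use the semigroup proof: differentiate $t\mapsto\operatorname{Ent}(P_th^2)$ along the Ornstein--Uhlenbeck flow and use the commutation $\nabla P_t=e^{-t}P_t\nabla$. I expect that computation to be the longest part.
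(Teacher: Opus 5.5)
Your proof is correct, and it takes a different route from the paper only in the sense that the paper gives no argument at all. There the inequality is imported as a black box from \citep{Vershynin_2026}. Your chain is the standard proof of the sharp form: the Gaussian log-Sobolev inequality gives $\operatorname{Ent}(e^{\lambda f})\le\frac{\lambda^2L^2}{2}\E e^{\lambda f}$, so $(\lambda^{-1}\log M(\lambda))'\le L^2/2$ and $\E e^{\lambda(f(Z)-\E f(Z))}\le e^{\lambda^2L^2/2}$. Chernoff at $\lambda=t/L^2$ and the $f\mapsto -f$ union bound then finish.

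The citation buys brevity. Your route buys a check of the exact exponent $t^2/(2L^2)$, and that constant is not cosmetic here, because the paper carries it into explicit numbers:
\begin{itemize}
\item the denominator $4\log(16d/\delta)$ in Lemma~\ref{lem:app-row-energy}, which comes from $L^2=2\log(16d/\delta)$;
\item the deviation bound in Lemma~\ref{lem:app-smooth};
\item the choice $a=\sqrt{21\log(12/\delta)/(10m)}$ in Theorem~\ref{thm:app-p-fqjl}.
\end{itemize}
A version with an unspecified absolute constant, or with the Maurey--Pisier constant $2/(\pi^2L^2)$ you mention, would not certify those specific thresholds.

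Two small technical remarks.

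\emph{Finiteness of $M(\lambda)$.} No truncation is needed. Since $\abs{f(x)}\le\abs{f(0)}+L\norm{x}_2$ and $\E e^{c\norm{Z}_2}<\infty$ for every $c$, dominated convergence directly justifies differentiating $M$. The same growth bound supplies the integrability needed to extend the log-Sobolev inequality from bounded smooth functions to the unbounded $h=e^{\lambda f_\eta/2}$.

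\emph{The limit in the smoothing step.} This limit must be taken with $t$ approached from below, not by right-continuity. The map $t\mapsto\Prob[\abs{X}\ge t]$ is left-continuous, and right limits only control $\Prob[\abs{X}>t]$, which can differ when the law has an atom at $t$ (possible for Lipschitz images of a Gaussian). The cleanest fix avoids continuity points altogether. Take $W\sim\mathcal N(0,I_n)$ as in your smoothing step and use the inclusion $\{\abs{f(Z)-\E f(Z)}\ge t\}\subseteq\{\abs{f_\eta(Z)-\E f_\eta(Z)}\ge t-2L\eta\,\E\norm{W}_2\}$. Apply the bound for $f_\eta$ and let $\eta\to0$.
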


We also use the following matrix Bernstein inequality for random
permutations.

\begin{theorem}[Matrix Bernstein inequality for random permutations \citep{mackey2014matrix}]
\label{thm:app-permutation-bernstein}
Let $\{A_{j\ell}\}_{j,\ell=1}^d$ be fixed self-adjoint $m\times m$ matrices
such that every row sum and every column sum is zero. Let $\pi$ be a uniform
random permutation of $[d]$, and define
\begin{align*}
  R:=\max_{j,\ell}\norm{A_{j\ell}}_{\mathrm{op}},
  \qquad
  v:=\frac1d\norm{\sum_{j,\ell=1}^d A_{j\ell}^2}_{\mathrm{op}}.
\end{align*}
Then, for every $t>0$,
\begin{align*}
  \Prob\!\left[\norm{\sum_{j=1}^d A_{j,\pi(j)}}_{\mathrm{op}}\ge t\right]
  \le 2m\exp\!\left(-\frac{t^2}{12v+4\sqrt2Rt}\right).
\end{align*}
\end{theorem}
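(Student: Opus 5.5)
The plan is to prove this with the method of exchangeable pairs (matrix Stein's method), as in \citep{mackey2014matrix}. Set $X:=\sum_{j=1}^d A_{j,\pi(j)}$. Let $J,K$ be independent uniform indices in $[d]$, independent of $\pi$, and let $\pi':=\pi\circ(J\,K)$, where $(J\,K)$ is the transposition. Then $(\pi,\pi')$ is an exchangeable pair, and so is $(X,X')$ with $X':=\sum_j A_{j,\pi'(j)}$. Only two summands change, so
\begin{align*}
X-X' = A_{J\pi(J)}+A_{K\pi(K)}-A_{J\pi(K)}-A_{K\pi(J)}.
\end{align*}
Averaging over $J,K$ gives
\begin{align*}
\E[X-X'\mid\pi]=\tfrac{2}{d}X-\tfrac{2}{d^2}\sum_{j,\ell}A_{j\ell}=\tfrac{2}{d}X,
\end{align*}
where the last step uses the vanishing row and column sums (hence the vanishing total sum). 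In particular $\E X=0$, and $(X,X')$ is a matrix Stein pair with scale factor $\alpha=2/d$.

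The second step is the standard trace-mgf estimate for Stein pairs. With $m(\theta):=\E\operatorname{tr}e^{\theta X}$ and the conditional variance
\begin{align*}
\Delta_X:=\tfrac{1}{2\alpha}\E[(X-X')^2\mid\pi],
\end{align*}
the mean value trace inequality gives
\begin{align*}
m'(\theta)\le\theta\,\E\operatorname{tr}\bigl[\Delta_X e^{\theta X}\bigr].
\end{align*}
Suppose we can establish a Bernstein-type domination $\Delta_X\preceq c\,v\,I+c'R\,X$, or at least such a domination inside the trace against $e^{\theta X}$. Then Young's inequality for the trace exponential turns the differential inequality into
\begin{align*}
\log m(\theta)\le\log m+\frac{c\,v\,\theta^2}{2(1-c'R\theta)}.
\end{align*}
Matrix Markov, $\Prob[\lambda_{\max}(X)\ge t]\le e^{-\theta t}m(\theta)$, optimized over $\theta$, then gives the tail $m\exp(-t^2/(12v+4\sqrt2Rt))$ once the constants are tracked. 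Applying the same argument to $-X$, whose family $\{-A_{j\ell}\}$ satisfies the same hypotheses, bounds $\lambda_{\min}$ and produces the prefactor $2m$.

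The third step, which I expect to be the main obstacle, is controlling $\Delta_X$. By the operator convexity bound $(\sum_{i=1}^4 B_i)^2\preceq 4\sum_{i=1}^4 B_i^2$ and then averaging over $J,K$,
\begin{align*}
\Delta_X\preceq\tfrac d4\cdot\tfrac{4}{d}\Bigl(2\sum_j A_{j\pi(j)}^2+\tfrac{2}{d}\sum_{j,\ell}A_{j\ell}^2\Bigr).
\end{align*}
The deterministic part is at most a constant times $v$. The random part $W:=\sum_jA_{j\pi(j)}^2$ has mean $\frac1d\sum_{j,\ell}A_{j\ell}^2\preceq vI$, but it is not deterministically bounded by $v$. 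I would handle it in one of two ways. The first is a second exchangeable-pair argument applied to the centered family $A_{j\ell}^2-\bar{A}$, where $\bar{A}$ denotes the appropriate row/column centering; this family again has zero row and column sums, and its entries have norm at most $R^2$. The resulting fluctuation of $W-\E W$ inside $\E\operatorname{tr}[W e^{\theta X}]$ would be absorbed into the $R\,t$ term. The second, which is what I would try first, is to avoid the crude convexity bound and use $\|A_{j\ell}\|_{\mathrm{op}}\le R$ together with a symmetrization of $\E[(X-X')^2 e^{\theta X}]$, writing $(X-X')^2\preceq R\,|X-X'|\cdot$ (sign terms). This follows the route of the scalar Chatterjee bound for combinatorial sums, whose matrix version yields exactly the constants $12v$ and $4\sqrt2R$.
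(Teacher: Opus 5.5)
\textbf{Verdict: there is a genuine gap.} The paper does not prove this statement; it imports it from Mackey et al.\ (2014), whose argument uses exchangeable pairs and the same pair $\pi'=\pi\circ(J\,K)$ you chose. Your Step~1 is correct: you get a Stein pair with $\alpha=2/d$, which needs only that the total sum vanishes. Your bound $\Delta_X\preceq 2W+\frac2d\sum_{j,\ell}A_{j\ell}^2$, with $W=\sum_jA_{j\pi(j)}^2$, is also correct.

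The proof stops exactly where the real work is. The domination you assume in Step~2 is not merely unproven; it is false in general. Note that $\Delta_X\succeq0$, while $X$ can be of order $-R$ even when $v$ is tiny. For example, take $m=1$, $a=(1,-1,0,\dots,0)$ and $A_{j\ell}=a_ja_\ell$. Then $R=1$ and $v=4/d$. The transposition $\pi=(1\,2)$ gives $X=-2$, so $cv+c'RX=4c/d-2c'<0\le\Delta_X$ once $d>2c/c'$.

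If such a domination held, you would not need any Young inequality: the differential inequality closes directly. The Young/entropy step is needed precisely because it fails. The weaker ``inside the trace'' version is essentially the content of the theorem, and you give no argument for it.

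Your option~(ii) does not repair this. Chatterjee's scalar argument relies on nonnegative entries, which give a self-bounding estimate $\Delta\lesssim Y+\E Y$. Here the entries have zero row and column sums, so they are signed, and no bound of $(X-X')^2$ in terms of $X$ is available. The only self-bounding structure is in the psd summands of $W$, which leads back to option~(i). Finally, the constants $12v$ and $4\sqrt2R$ are asserted, not derived.

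Option~(i) is the right idea, but it needs three concrete ingredients that you do not supply. Write $\bar{\operatorname{tr}}:=m^{-1}\operatorname{tr}$.

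\emph{First, decoupling.} Apply the Gibbs variational principle (trace Young inequality) for the state $\E\bar{\operatorname{tr}}$, with density $e^{\theta X}/\E\bar{\operatorname{tr}}e^{\theta X}$ and potential $\psi\Delta_X$. Combined with $\E\bar{\operatorname{tr}}e^{\theta X}\ge1$, this turns your mean-value-trace inequality into
\[
\log\E\bar{\operatorname{tr}}\,e^{\theta X}\le\frac{\theta^2}{2(\psi-\theta^2)}\,\log\E\bar{\operatorname{tr}}\,e^{\psi\Delta_X},\qquad 0\le\theta<\sqrt{\psi}.
\]

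\emph{Second, self-bounding for $W$.} The same pair is a Stein pair for $W-\E W$ with $\alpha=2/d$. Since $A_{j\ell}^4\preceq R^2A_{j\ell}^2$, its conditional variance satisfies $\Delta_W\preceq 2R^2(W-\E W)+4R^2vI$. At this level your Step~2 argument does apply. It gives $\log\E\bar{\operatorname{tr}}e^{\psi W}\le\psi v/(1-2R^2\psi)$, and hence $\log\E\bar{\operatorname{tr}}e^{\psi\Delta_X}\le 2\psi v+2\psi v/(1-4R^2\psi)$.

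\emph{Third, choose $\psi$ as a function of $\theta$.} Taking $\theta^2/\psi=\sqrt2R\theta$ yields $\log\E\bar{\operatorname{tr}}e^{\theta X}\le 2v\theta^2/(1-2\sqrt2R\theta)$. This gives $\Prob[\lambda_{\max}(X)\ge t]\le m\exp\bigl(-t^2/(8v+4\sqrt2Rt)\bigr)$, which implies the stated bound once you add the $-X$ tail. This optimization is where the $\sqrt2$ comes from. Without these three steps, your Step~3 remains a plan rather than a proof.
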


Finally, we state the Gaussian quadratic-form concentration inequality
used in the analysis of the unquantized fast JL transform.

\begin{theorem}[Gaussian quadratic-form concentration \citep{Vershynin_2026}]
\label{thm:app-gaussian-quadratic}
Let $Z\sim\mathcal N(0,I_m)$ and let $B\in\R^{m\times m}$ be positive
semidefinite. Then, for every $s\ge0$,
\begin{align*}
  &\Prob\!\left[
    \abs{Z^{\mathsf T}BZ-\operatorname{tr}(B)}
    \ge 2\norm{B}_F\sqrt{s}+2\norm{B}_{\mathrm{op}}s
  \right] \\
  &\qquad\le 2e^{-s}.
\end{align*}
\end{theorem}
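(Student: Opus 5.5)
The plan is the Laurent--Massart argument: reduce to a weighted sum of centered $\chi^2_1$ variables, bound its log-moment generating function in sub-gamma form, and apply Chernoff's bound to each tail separately.

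\textbf{Reduction.} Write the spectral decomposition $B=U\Lambda U^{\mathsf T}$, with $U$ orthogonal and $\Lambda=\operatorname{diag}(\lambda_1,\ldots,\lambda_m)$. Positive semidefiniteness gives $\lambda_i\ge0$. By rotation invariance, $W:=U^{\mathsf T}Z\sim\mathcal N(0,I_m)$, so
\begin{align*}
X:=Z^{\mathsf T}BZ-\operatorname{tr}(B)=\sum_{i=1}^m\lambda_i\,(W_i^2-1).
\end{align*}
The summands are independent. Moreover
\begin{align*}
\norm{B}_F^2=\sum_i\lambda_i^2=:a^2,
\qquad
\norm{B}_{\mathrm{op}}=\max_i\lambda_i=:b.
\end{align*}

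\textbf{Log-MGF bounds.} For a standard Gaussian $W_1$ and $u<1/2$,
\begin{align*}
\log\E\, e^{u(W_1^2-1)}=-u-\tfrac12\log(1-2u).
\end{align*}
Expanding the logarithm as a power series gives two elementary inequalities:
\begin{align*}
-u-\tfrac12\log(1-2u)\le \frac{u^2}{1-2u}\quad\text{for }0\le u<1/2,
\qquad
-u-\tfrac12\log(1-2u)\le u^2\quad\text{for }u\le 0.
\end{align*}
Take $u=\theta\lambda_i$ and sum over $i$, using independence. For $0\le\theta<1/(2b)$ this yields
\begin{align*}
\log\E e^{\theta X}\le \frac{\theta^2a^2}{1-2\theta b},
\end{align*}
since each denominator satisfies $1-2\theta\lambda_i\ge 1-2\theta b$. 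For $\theta\le0$ it yields $\log\E e^{\theta X}\le \theta^2a^2$.

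\textbf{Chernoff step.} For the upper tail, apply the standard sub-gamma Chernoff computation with variance proxy $2a^2$ and scale $2b$. Optimizing over $\theta$ gives
\begin{align*}
\Prob\bigl[X\ge 2a\sqrt s+2bs\bigr]\le e^{-s}.
\end{align*}
Concretely, the Legendre transform of $\theta\mapsto \theta^2a^2/(1-2\theta b)$ at $t=2a\sqrt s+2bs$ is at least $s$. This can be checked by choosing $\theta$ so that $1-2\theta b=a/(a+2b\sqrt s)$ and simplifying. For the lower tail, the purely sub-Gaussian bound gives $\Prob[X\le -2a\sqrt s]\le e^{-s}$, which is stronger than needed. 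A union bound over the two tails gives the factor $2e^{-s}$.

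\textbf{Main obstacle.} The only delicate point is the explicit Legendre-transform calculation verifying that the threshold $2a\sqrt s+2bs$ gives exactly the exponent $s$. I would handle it by the change of variables $\theta\mapsto 1-2\theta b$ above, or by citing the Laurent--Massart lemma for sub-gamma variables directly.
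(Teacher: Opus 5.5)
The paper gives no proof of this statement. It imports it from Vershynin's book as a toolkit item, and it is not invoked in any argument that actually appears in the paper. Your proposal supplies the standard Laurent--Massart derivation, and it is correct:
\begin{itemize}
\item the two log-MGF bounds hold;
\item positive semidefiniteness makes every $\theta\lambda_i$ share the sign of $\theta$, so summing them is legitimate;
\item your choice $\theta=\sqrt s/(a+2b\sqrt s)$ gives $\theta t-\theta^2a^2/(1-2\theta b)=s$ exactly at $t=2a\sqrt s+2bs$;
\item the lower tail gives exponent $t^2/(4a^2)=s$ at $t=2a\sqrt s$.
\end{itemize}
A Hanson--Wright-type inequality in its general sub-Gaussian form only comes with unspecified absolute constants. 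Your route is self-contained and shows where the explicit constants $2$ and $2$ come from, namely the exact $\chi^2_1$ moment generating function. It also shows that the lower tail is purely sub-Gaussian. Incidentally, the two-sided bound as stated holds for every symmetric $B$. For $\lambda_i<0$ and $\theta\ge0$ one still has $\theta^2\lambda_i^2\le\theta^2\lambda_i^2/(1-2\theta b)$ with $b=\norm{B}_{\mathrm{op}}$, so your upper-tail argument applies to both $B$ and $-B$. Positive semidefiniteness is only needed for the sharper lower tail.

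Two small repairs are needed.

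First, the inequality $-u-\tfrac12\log(1-2u)\le u^2$ for $u\le0$ cannot come from the power series when $u<-1/2$, because the series diverges there. Instead, set $w=-2u\ge0$ and use $\log(1+w)\ge w-w^2/2$, which follows by differentiating.

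Second, your $\theta=\sqrt s/(a+2b\sqrt s)$ requires $a>0$, i.e.\ $B\neq0$. This is not a mere formality. For $B=0$ the event in the statement is $\{0\ge 0\}$, which has probability $1>2e^{-s}$ once $s>\log 2$. So the statement as written is false in that case. It needs either the hypothesis $B\neq0$ or a strict inequality inside the probability. Once $B\neq0$, your argument goes through verbatim.
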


\section{Detailed Proofs of D-FQJL and P-FQJL}

Firstly, we prove the following elementary Gaussian identity that will be useful for both
estimators.

\begin{lemma}[Gaussian sign identity]
\label{lem:app-gaussian-sign}
Let $g\sim\mathcal{N}(0,I_d)$, and let $a,b\in\R^d$ with $b\neq0$. Then
\begin{align}
\label{eq:app-gaussian-sign-identity}
\E\!\left[
  \sqrt{\frac{\pi}{2}}\,\norm{b}_2\,
  \ip{a}{g}\,\sgn\!\bigl(\ip{b}{g}\bigr)
\right]
&= \ip{a}{b}.
\end{align}
\end{lemma}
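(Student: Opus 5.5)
The plan is to reduce the identity to a two-dimensional Gaussian computation by projecting $a$ onto the direction of $b$. Write $u:=b/\norm{b}_2$ and decompose
\begin{align*}
a = \ip{a}{u}\,u + a_\perp, \qquad \ip{a_\perp}{u}=0.
\end{align*}
Then
\begin{align*}
\ip{a}{g}=\ip{a}{u}\ip{u}{g}+\ip{a_\perp}{g}, \qquad \sgn(\ip{b}{g})=\sgn(\ip{u}{g}),
\end{align*}
since $\norm{b}_2>0$ does not change signs.

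Set $Z:=\ip{u}{g}$ and $W:=\ip{a_\perp}{g}$. These are jointly Gaussian and uncorrelated, because $\E[ZW]=\ip{u}{a_\perp}=0$, so they are independent. Moreover $Z\sim\mathcal N(0,1)$ and $W$ has mean zero.

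By independence, the cross term is $\E[W\,\sgn(Z)]=\E[W]\,\E[\sgn(Z)]=0$. Integrability is clear, since $\abs{W\,\sgn(Z)}\le\abs{W}$.

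The remaining term is $\ip{a}{u}\,\E[Z\,\sgn(Z)]=\ip{a}{u}\,\E\abs{Z}$. We have
\begin{align*}
\E\abs{Z}=2\int_0^\infty \frac{z}{\sqrt{2\pi}}\,e^{-z^2/2}\,dz=\sqrt{2/\pi}.
\end{align*}
Multiplying by $\sqrt{\pi/2}\,\norm{b}_2$ gives
\begin{align*}
\sqrt{\pi/2}\,\norm{b}_2\cdot\ip{a}{u}\sqrt{2/\pi}=\norm{b}_2\ip{a}{u}=\ip{a}{b},
\end{align*}
which is the claimed identity.

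There is no real obstacle. The only points needing care are justifying independence of $Z$ and $W$ (uncorrelated jointly Gaussian variables), handling the degenerate case $a_\perp=0$ (then $W\equiv0$ and the cross term vanishes trivially), and noting that the sign convention at $0$ is irrelevant because $\Prob(Z=0)=0$.
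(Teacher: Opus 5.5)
Your proof is correct and follows the paper's argument: both split $a$ into its component along $b$ plus an orthogonal part, use the fact that uncorrelated jointly Gaussian projections are independent to kill the cross term, and finish with the Gaussian absolute moment. The only difference is cosmetic. You normalize to $u=b/\norm{b}_2$ so that $Z$ is standard normal, whereas the paper works with $Y=\ip{b}{g}$ directly and uses $\E\abs{Y}=\sqrt{2/\pi}\,\norm{b}_2$.
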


\begin{proof}
Set $X:=\ip{a}{g} \sim \mathcal N(0,\norm{a}_2^2)$ and $Y:=\ip{b}{g} \sim \mathcal N(0,\norm{b}_2^2)$. Since $b\neq0$, decompose $a$
into components parallel and orthogonal to $b$. Then,
$X=\frac{\ip{a}{b}}{\norm{b}_2^2} Y+Z$, where $Z:=\ip{a_\perp}{g}$. The orthogonality of
$a_\perp$ and $b$ implies that the Gaussian projections $Z$ and $Y$ are
independent. Consequently,
\begin{align*}
  \E[X\sgn(Y)]
  &=\frac{\ip{a}{b}}{\norm{b}_2^2}\E[Y\sgn(Y)]+\E[Z]\E[\sgn(Y)] \\
  &=\frac{\ip{a}{b}}{\norm{b}_2^2}\E\abs{Y} \\
  &=\sqrt{\frac{2}{\pi}}\frac{\ip{a}{b}}{\norm{b}_2},
\end{align*}
because
$\E\abs{Y}=\sqrt{2/\pi}\,\norm{b}_2$. Multiplying by
$\sqrt{\pi/2}\,\norm{b}_2$ proves the claim.
\end{proof}

For completeness, we also record the obstruction for the first, unspread
Gaussian--Hadamard prototype from the main paper. Define
\begin{align*}
  T_0&:=S_I\sqrt d\,H D_g,\\
  \hat e_0(q,k)
  &:=\frac{\sqrt{\pi/2}}m\norm{k}_2
  \ip{T_0q}{\sgn(T_0k)}.
\end{align*}

\begin{proposition}[Variance of $\hat e_0$ need not vanish]
\label{prop:app-t0-variance}
There exist $q,k\in\R^d$ for which
$\operatorname{Var}(\hat e_0(q,k))$ is bounded below by an absolute
constant, independently of $m$.
\end{proposition}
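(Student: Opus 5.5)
Take $q=k=e_1$, the standard example in which all of the input energy sits on one coordinate. This is exactly what the Hadamard spreading step of Lemma~\ref{lem:spreading} is designed to prevent. The plan is to compute $T_0e_1$ explicitly and show that the estimator collapses to a function of one Gaussian variable, whatever the choice of $I$.

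First compute the sketch. Since $D_ge_1=g_1e_1$, we get $\sqrt d\,HD_ge_1=g_1\sqrt d\,He_1$. The first column of the normalized Walsh--Hadamard matrix is constant, equal to $1/\sqrt d$, so $\sqrt d\,He_1=\mathbf 1_d$. Therefore $T_0e_1=g_1\,S_I\mathbf 1_d=g_1\mathbf 1_m$ for every realization of $I$. This means all $m$ sampled measurements coincide, which is the extreme form of the row dependence discussed in the main text.

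Next substitute into the estimator. Using $\sgn(T_0e_1)=\sgn(g_1)\mathbf 1_m$ and $\norm{e_1}_2=1$, we obtain
\begin{align*}
\hat e_0(e_1,e_1)=\frac{\sqrt{\pi/2}}{m}\,\ip{g_1\mathbf 1_m}{\sgn(g_1)\mathbf 1_m}=\sqrt{\pi/2}\,\abs{g_1}.
\end{align*}
The dependence on $m$ cancels exactly.

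Finally compute the variance. From $\E\abs{g_1}=\sqrt{2/\pi}$ and $\E g_1^2=1$ we get $\operatorname{Var}(\hat e_0)=\tfrac{\pi}{2}\bigl(1-\tfrac{2}{\pi}\bigr)=\tfrac{\pi}{2}-1>0$. This is an absolute constant that does not depend on $m$ or $I$, so the same value holds for the variance over all the randomness.

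The only step needing care is the claim that the first column of $H$ is constant. This holds for the Sylvester/Walsh ordering, where $H_{i1}=1/\sqrt d$. If a different normalization were used, the first column would be $\pm1/\sqrt d$ entrywise, so $T_0e_1=g_1 w$ with $w\in\{\pm1\}^m$. Then $\ip{g_1w}{\sgn(g_1w)}=m\abs{g_1}$ still holds, because $\sgn(g_1w_i)=\sgn(g_1)w_i$ when $g_1\neq0$. The conclusion is therefore robust to this choice.
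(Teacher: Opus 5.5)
Your proof is correct and follows the paper's argument essentially verbatim. You take $q=k=e_1$, observe that every sampled coordinate of $T_0e_1$ is $\pm g_1$, collapse the estimator to $\hat e_0(e_1,e_1)=\sqrt{\pi/2}\,\abs{g_1}$ independently of $m$ and $I$, and compute the variance $\pi/2-1$; the paper in fact uses only the $\{\pm1\}$-entry version of the first column that you give as your robustness remark.
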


\begin{proof}
Take $q=k=e_1$. Since the first column of $\sqrt d\,H$ has entries in
$\{\pm1\}$, every selected coordinate of $T_0e_1$ equals
$s_i g_1$ for some $s_i\in\{\pm1\}$. Hence
\begin{align*}
  \hat e_0(e_1,e_1)
  &=\frac{\sqrt{\pi/2}}m
    \sum_{i\in I}s_ig_1\sgn(s_ig_1)
    =\sqrt{\frac\pi2}\abs{g_1}.
\end{align*}
Using $\E\abs{g_1}=\sqrt{2/\pi}$ and $\E g_1^2=1$ gives
\begin{align*}
  \operatorname{Var}(\hat e_0(e_1,e_1))
  &=\frac\pi2\operatorname{Var}(\abs{g_1})
    =\frac\pi2\left(1-\frac2\pi\right)
    =\frac\pi2-1,
\end{align*}
which is independent of $m$.
\end{proof}

Since both constructions first apply an RHT, we prove the following lemma, which is standard~\citep{ailon2006approximate}.

\begin{lemma}[RHT spreading]
\label{lem:app-spreading}
Let $x\in\R^d$ be fixed and nonzero, and let $0<\delta<1/2$. Then
\begin{align*}
  \norm{HD_\pm x}_\infty
  \le \norm{x}_2\sqrt{\frac{2\log(16d/\delta)}{d}}
\end{align*}
with probability at least $1-\delta/8$. Consequently, the bound holds simultaneously for a fixed pair $q,k$ with probability at least
$1-\delta/4$.
\end{lemma}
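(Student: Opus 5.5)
The approach is a coordinatewise Hoeffding bound followed by a union bound over the $d$ coordinates. Write $y:=HD_\pm x$, so that each coordinate is
\begin{align*}
y_i=\sum_{j=1}^d H_{ij}\zeta_j x_j .
\end{align*}
For fixed $i$, this is a sum of independent, mean-zero terms $X_j:=H_{ij}\zeta_jx_j$. Since $|H_{ij}|=1/\sqrt d$, each term satisfies $-|x_j|/\sqrt d\le X_j\le |x_j|/\sqrt d$. The squared ranges therefore sum to
\begin{align*}
\sum_j (2|x_j|/\sqrt d)^2=4\norm{x}_2^2/d .
\end{align*}
Theorem~\ref{thm:app-hoeffding} then gives, for every $t>0$,
\begin{align*}
\Prob\bigl[\abs{y_i}\ge t\bigr]\le 2\exp\!\left(-\frac{t^2 d}{2\norm{x}_2^2}\right).
\end{align*}

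Next, set $t:=\norm{x}_2\sqrt{2\log(16d/\delta)/d}$. The exponent becomes exactly $-\log(16d/\delta)$, so each coordinate fails with probability at most $2\cdot\delta/(16d)=\delta/(8d)$. A union bound over the $d$ coordinates bounds the failure probability of $\norm{y}_\infty\le t$ by $\delta/8$, which is the first claim. The hypothesis $\delta<1/2$ is not needed for this step; it only ensures that the logarithm is positive and the bound is meaningful.

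For the consequence, apply the first claim separately to $x=q$ and to $x=k$, using the same $D_\pm$. A union bound over these two events gives simultaneous failure probability at most $\delta/8+\delta/8=\delta/4$. If $q=0$, its bound holds trivially, so the nonzero hypothesis only matters for $k$.

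There is no real obstacle in this argument. The only point to check is that the Hoeffding constant matches the stated $\sqrt{2\log(16d/\delta)/d}$, and the computation above confirms it does, with the factor $2$ from the two-sided bound absorbed into the $16$.
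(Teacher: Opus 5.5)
Your proof is correct and follows the paper's argument: coordinatewise Hoeffding for the Rademacher sum $\sum_j H_{ij}\zeta_j x_j$ with $\sum_j (H_{ij}x_j)^2=\norm{x}_2^2/d$, the choice $t=\norm{x}_2\sqrt{2\log(16d/\delta)/d}$ giving per-coordinate failure probability $\delta/(8d)$, a union bound over the $d$ coordinates, and a second union bound over $q$ and $k$. You spell out the Hoeffding range computation more explicitly than the paper does, and your constants check out.
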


\begin{proof}
For a fixed coordinate $i$,
$(HD_\pm x)_i=\sum_jH_{ij}\zeta_jx_j$ is a Rademacher sum. By Hoeffding's theorem~\ref{thm:app-hoeffding} we get
\begin{align*}
  \Prob\!\left[\abs{(HD_\pm x)_i}>t\right]
  \le 2\exp\!\left(-\frac{t^2}{2\norm{x}_2^2/d}\right),
\end{align*}
since $\sum_j(H_{ij}x_j)^2=\norm{x}_2^2/d$.

Take $t=\norm{x}_2\sqrt{2\log(16d/\delta)/d}$ and union bound over the $d$ coordinates. A second union bound in order to bound the maximum coordinate gives the desired statement.
\end{proof}

\subsection{D-FQJL: Dithered Fast Quantized JL}
\label{app:d-fqjl}
For fixed $q,k\in\R^d$, write
\begin{align*}
  \widetilde q:=HD_\pm q,
  \qquad
  \widetilde k:=HD_\pm k,
\end{align*}
i.e. the transformed version of $q,k$ after one RHT. 
Let $\xi=(\xi_i)_{i\in I}\sim\mathcal N(0,I_m)$ be an independent dither,
indexed by the sampled rows, and let $\sigma>0$ be the noise level. After the first application of RHT, $\widetilde q, \widetilde k$ will have small $\ell_\infty$ norm, as in previous Fast JL analyses. Since our final vectors are formed by linear combinations of the gaussian vectors $g$ (and hence jointly gaussian), we need to compute covariance matrices of Gaussian measurements. We define $M_q,M_k\in\R^{m\times d}$ matrices and the associated Gaussian vectors by
\begin{align*}
  M_q&:=S_I\sqrt d\,H\operatorname{diag}(\widetilde q),\\
  M_k&:=S_I\sqrt d\,H\operatorname{diag}(\widetilde k),\\
  A&:=M_qg,\\
  B&:=M_kg,\\
  B^\sigma&:=B+\sigma\norm{k}_2\xi.
\end{align*}

We index the coordinates of these vectors by the sampled rows, so
\begin{align*}
  A=(A_i)_{i\in I},
  \qquad
  B=(B_i)_{i\in I},
  \qquad
  B^\sigma=(B_i^\sigma)_{i\in I}.
\end{align*}
Since $S_I$ is a contraction and $H$ is orthogonal,
\begin{align*}
  \norm{M_q}_{\mathrm{op}}
  &\le\sqrt d\,\norm{\widetilde q}_\infty,
  &
  \norm{M_k}_{\mathrm{op}}
  &\le\sqrt d\,\norm{\widetilde k}_\infty.
\end{align*}
The D-FQJL estimator is
\begin{equation}
  \hat e_\sigma(q,k)
  :=\frac{\sqrt{\pi/2}\sqrt{1+\sigma^2}}{m}\norm{k}_2
  \ip{A}{\sgn(B^\sigma)}.
  \label{eq:app-d-estimator}
\end{equation}

\subsubsection{Unbiasedness of D-FQJL}

\begin{lemma}[D-FQJL Unbiasedness]
\label{lem:app-d-unbiasedness}
For every $q,k\in\R^d$, every $\sigma>0$, and every realization of
$(D_\pm,I)$,
\begin{align*}
  \E_{g,\xi}\!\left[\hat e_\sigma(q,k)\mid D_\pm,I\right]
  =\ip{q}{k}.
\end{align*}
\end{lemma}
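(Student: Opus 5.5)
Fix $(D_\pm,I)$; all remaining randomness is in $g$ and $\xi$, which are independent. By linearity it suffices to prove, for each sampled row $i\in I$,
\begin{align*}
  \E_{g,\xi}\!\left[A_i\,\sgn(B_i^\sigma)\right]
  =\sqrt{\frac{2}{\pi}}\,\frac{\ip{q}{k}}{\norm{k}_2\sqrt{1+\sigma^2}} ,
\end{align*}
because summing over the $m$ rows and multiplying by the prefactor $\sqrt{\pi/2}\sqrt{1+\sigma^2}\norm{k}_2/m$ then gives exactly $\ip{q}{k}$.

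The plan is to write each row measurement as an inner product of a fixed vector with one extended standard Gaussian vector, so that Lemma~\ref{lem:app-gaussian-sign} applies directly. Let $h_i:=\sqrt d\,H_{i\cdot}\in\{\pm1\}^d$ be the $i$-th row of $\sqrt d\,H$. Then $A_i=\ip{a_i}{g}$ with $a_i:=h_i\odot\widetilde q$, and $B_i=\ip{b_i}{g}$ with $b_i:=h_i\odot\widetilde k$. Since the entries of $h_i$ are $\pm1$, $\norm{b_i}_2=\norm{\widetilde k}_2=\norm{k}_2$ and $\ip{a_i}{b_i}=\ip{\widetilde q}{\widetilde k}=\ip{q}{k}$, because $HD_\pm$ is orthogonal. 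Now set $G:=(g,\xi_i)\sim\mathcal N(0,I_{d+1})$, $a:=(a_i,0)$ and $b:=(b_i,\sigma\norm{k}_2)$. Then $A_i=\ip{a}{G}$ and $B_i^\sigma=\ip{b}{G}$. We also have $\norm{b}_2=\norm{k}_2\sqrt{1+\sigma^2}$ and $\ip{a}{b}=\ip{q}{k}$.

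For $k\neq0$, $b\neq0$, so Lemma~\ref{lem:app-gaussian-sign} gives
\begin{align*}
  \E\!\left[\sqrt{\pi/2}\,\norm{k}_2\sqrt{1+\sigma^2}\,A_i\,\sgn(B_i^\sigma)\right]=\ip{q}{k},
\end{align*}
which is the per-row identity. The case $k=0$ does not arise, since the statement assumes $k$ is nonzero.

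There is no real obstacle here. The only point needing care is that one dither coordinate $\xi_i$ is paired with each row. This is why the extension is done row by row, and it is legitimate because only the marginal law of $(A_i,B_i^\sigma)$ enters the expectation; the dependence across rows through the shared $g$ plays no role.
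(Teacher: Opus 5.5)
Your proof is correct and is essentially the paper's argument: the paper writes down the joint law of $(A_i,B_i^\sigma)$ and carries out the Gaussian regression inline. Your augmentation $G=(g,\xi_i)$ instead lets you invoke Lemma~\ref{lem:app-gaussian-sign} directly, which packages the same computation as a black box. One small correction: this version of the statement covers every $k\in\R^d$, including $k=0$, but that case is immediate because the prefactor $\norm{k}_2$ makes $\hat e_\sigma(q,0)\equiv 0=\ip{q}{0}$.
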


\begin{proof}
The claim is immediate when $k=0$, so assume $k\ne0$ and condition on
$(D_\pm,I)$. For each $i\in I$, $(A_i,B_i^\sigma)$ is a centered jointly
Gaussian pair. Since $dH_{ij}^2=1$, $HD_\pm$ is orthogonal, and $\xi_i$ is
independent of $g$,
\begin{align*}
  A_i
  &\sim\mathcal N(0,\norm{q}_2^2),\\
  B_i^\sigma
  &\sim\mathcal N\!\left(0,(1+\sigma^2)\norm{k}_2^2\right),\\
  \E_{g,\xi_i}[A_iB_i^\sigma\mid D_\pm,I]
  &=\ip{\widetilde q}{\widetilde k}
   =\ip{q}{k}.
\end{align*}
The Gaussian regression decomposition therefore gives
\begin{align*}
  A_i
  =\frac{\ip{q}{k}}{(1+\sigma^2)\norm{k}_2^2}B_i^\sigma+Z_i,
\end{align*}
where $Z_i$ is a centered Gaussian variable independent of $B_i^\sigma$.
Consequently,
\begin{align*}
  \E_{g,\xi_i}[A_i\sgn(B_i^\sigma)\mid D_\pm,I]
  &=\frac{\ip{q}{k}}{(1+\sigma^2)\norm{k}_2^2}
    \E_{g,\xi_i}[\abs{B_i^\sigma}\mid D_\pm,I]\\
  &=\sqrt{\frac2\pi}
    \frac{\ip{q}{k}}{\sqrt{1+\sigma^2}\norm{k}_2},
\end{align*}
where we used
$\E\abs{B_i^\sigma}=\sqrt{2/\pi}\sqrt{1+\sigma^2}\norm{k}_2$.
Substituting this identity into \eqref{eq:app-d-estimator} and averaging
over the $m$ rows yields
\begin{align*}
  \E_{g,\xi}[\hat e_\sigma(q,k)\mid D_\pm,I]
  =\ip{q}{k}.
\end{align*}
\end{proof}

\subsubsection{Concentration of D-FQJL}

For the remaining D-FQJL concentration argument, by homogeneity assume
$\norm{q}_2=\norm{k}_2=1$. Define the smoothed sign
\begin{align*}
  s_\sigma(t)
  :=\E_{\xi\sim\mathcal N(0,1)}\sgn(t+\sigma\xi) 
  =\Prob\!\left[\xi\ge-\frac{t}{\sigma}\right]
    -\Prob\!\left[\xi<-\frac{t}{\sigma}\right] 
  =2\Phi\!\left(\frac{t}{\sigma}\right)-1,
\end{align*}
where we used the symmetry of the standard normal
distribution. It satisfies $\abs{s_\sigma(t)} \le 1$ and
$s_\sigma'(t)=\frac{2}{\sigma}\phi(t/\sigma)
=\frac{\sqrt{2/\pi}}{\sigma}
\exp\!\left(-\frac{t^2}{2\sigma^2}\right)$, with
$\abs{s_\sigma'(t)}\le\frac{\sqrt{2/\pi}}{\sigma}$.
Consequently,
\begin{align}
  \hat e_\sigma(q,k)-\ip{q}{k}
  =\sqrt{\frac\pi2}\sqrt{1+\sigma^2}\bigl(\mathcal D_\xi+\mathcal D_g\bigr).
  \label{eq:app-d-error-decomposition}
\end{align}

where
\begin{align*}
  F(g)&:=\frac1m\ip{M_qg}{s_\sigma(M_kg)},\\
  \mathcal D_\xi
  &:=\frac1m\ip{A}{\sgn(B+\sigma\xi)-s_\sigma(B)},\\
  \mathcal D_g
  &:=F(g)-\E_g[F(g)\mid D_\pm,I].
\end{align*}
Here $s_\sigma$ and $\sgn$ are applied coordinatewise to vectors.

The term $\mathcal D_\xi$ is a weighted sum of independent, centered, bounded dither fluctuations. Lemma~\ref{lem:app-row-energy} controls its weights, and Lemma~\ref{lem:app-dither} gives its concentration. The term $\mathcal D_g$ is the smoothed functional of the shared Gaussian vector;
Lemma~\ref{lem:app-smooth} controls it after the same row event. Combining these estimates proves the D-FQJL distortion theorem.

\begin{lemma}[Row-energy bound]
\label{lem:app-row-energy}
Let $0<\delta<1/2$. Suppose the spreading bound of
Lemma~\ref{lem:app-spreading} holds for $q$ and
\begin{align*}
  m\ge4\log(16d/\delta)\log(8/\delta).
\end{align*}
Then
\begin{align*}
  \norm{M_qg}_2^2\le4m
\end{align*}
with conditional probability at least $1-\delta/4$ over $g$, given
$(D_\pm,I)$.
\end{lemma}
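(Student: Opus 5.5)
Conditionally on $(D_\pm,I)$, the vector $M_qg$ is a centered Gaussian vector in $\R^m$ with covariance $\Sigma:=M_qM_q^{\mathsf T}$. The plan is to control $\norm{M_qg}_2$ by Gaussian concentration for the Lipschitz function $f(g):=\norm{M_qg}_2$ (Theorem~\ref{thm:app-gauss-conc}), whose Lipschitz constant is $\norm{M_q}_{\mathrm{op}}$. The two ingredients are a bound on the mean $\E f(g)$ and a bound on $\norm{M_q}_{\mathrm{op}}$; the latter comes from the spreading event of Lemma~\ref{lem:app-spreading}.

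\textbf{Step 1 (mean).} Each row of $M_q$ has entries $\sqrt d H_{ij}\widetilde q_j\in\{\pm\widetilde q_j\}$, so its squared norm is $\norm{\widetilde q}_2^2=\norm{q}_2^2=1$. Hence $\E\norm{M_qg}_2^2=\operatorname{tr}(\Sigma)=\norm{M_q}_F^2=m$, and by Jensen $\E f(g)\le\sqrt m$.

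\textbf{Step 2 (Lipschitz constant).} Since $S_I$ is a contraction and $H$ is orthogonal, $\norm{M_q}_{\mathrm{op}}\le\sqrt d\,\norm{\widetilde q}_\infty$. On the spreading event this gives
\begin{align*}
L:=\norm{M_q}_{\mathrm{op}}\le\sqrt{2\log(16d/\delta)}.
\end{align*}

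\textbf{Step 3 (concentration).} Theorem~\ref{thm:app-gauss-conc} with $t=\sqrt m$ yields
\begin{align*}
\Prob\!\left[\norm{M_qg}_2\ge2\sqrt m\right]\le\Prob\!\left[f-\E f\ge\sqrt m\right]\le2\exp\!\left(-\frac{m}{2L^2}\right)\le2\exp\!\left(-\frac{m}{4\log(16d/\delta)}\right).
\end{align*}
The hypothesis $m\ge4\log(16d/\delta)\log(8/\delta)$ makes the right-hand side at most $2\cdot\delta/8=\delta/4$. On the complement, $\norm{M_qg}_2^2\le4m$.

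\textbf{Main obstacle.} The only delicate point is matching constants, so that the factor in the exponent absorbs the $\log(16d/\delta)$ from $L^2$ exactly. This is why I use the one-sided deviation $t=\sqrt m$ together with Jensen, which lets the tail be read off directly from the stated lower bound on $m$. If the two-sided form of Theorem~\ref{thm:app-gauss-conc} leaves a stray constant, I would instead use the one-sided Gaussian tail $\exp(-t^2/(2L^2))$, which gives exactly $\delta/8\le\delta/4$.
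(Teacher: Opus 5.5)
Your proof is correct and follows the paper's argument: Gaussian concentration for $g\mapsto\norm{M_qg}_2$, with Lipschitz constant $\norm{M_q}_{\mathrm{op}}\le\sqrt{2\log(16d/\delta)}$ and mean $\E\norm{M_qg}_2\le\sqrt m$. The only cosmetic difference is the choice of deviation. You fix $t=\sqrt m$ and read the tail $\delta/4$ off the hypothesis on $m$, whereas the paper fixes $t^2=4\log(16d/\delta)\log(8/\delta)$ and uses that hypothesis to check $t\le\sqrt m$; the two are equivalent.
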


\begin{proof}
Define $h:\R^d\to[0,\infty)$ by
\begin{align*}
  h(g):=\norm{M_qg}_2.
\end{align*}
By the reverse triangle inequality,
\begin{align*}
  \abs{h(g)-h(g')}
  &\le\norm{M_q(g-g')}_2\\
  &\le\norm{M_q}_{\mathrm{op}}\norm{g-g'}_2.
\end{align*}
Thus $h$ is $\norm{M_q}_{\mathrm{op}}$-Lipschitz. The operator-norm
bound above and Lemma~\ref{lem:app-spreading} give
\begin{align*}
  \norm{M_q}_{\mathrm{op}}
  \le\sqrt d\,\norm{\widetilde q}_\infty
  \le\sqrt{2\log(16d/\delta)}.
\end{align*}
Moreover, $\E_g h^2=m$ and
$\E_g h\le\sqrt m$. Therefore, Theorem~\ref{thm:app-gauss-conc} gives
\begin{align*}
  \Prob_g[h\ge\sqrt m+t\mid D_\pm,I]
  \le2\exp\!\left(-\frac{t^2}{4\log(16d/\delta)}\right).
\end{align*}
Taking
$t^2=4\log(16d/\delta)\log(8/\delta)$ makes the RHS $\delta/4$. The assumed lower bound on $m$ gives $t\le\sqrt m$, which completes the proof.
\end{proof}

\begin{lemma}[Dither concentration]
\label{lem:app-dither}
Let $0<\delta<1/2$. Condition on $(D_\pm,g,I)$ and suppose the bound in Lemma~\ref{lem:app-row-energy} holds. Then
\begin{align*}
  \abs{\mathcal D_\xi}
  \le4\sqrt{\frac{2\log(8/\delta)}{m}}
\end{align*}
with conditional probability at least $1-\delta/4$ over $\xi$.
\end{lemma}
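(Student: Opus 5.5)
Conditioned on $(D_\pm,g,I)$, the vectors $A=M_qg$ and $B=M_kg$ are fixed, so the only randomness in $\mathcal D_\xi$ comes from the dither $\xi=(\xi_i)_{i\in I}$. Recall that we have normalized $\norm{k}_2=1$. I would write
\begin{align*}
  \mathcal D_\xi=\frac1m\sum_{i\in I}A_i\,Y_i,
  \qquad
  Y_i:=\sgn(B_i+\sigma\xi_i)-s_\sigma(B_i).
\end{align*}
The plan is to verify that the summands are independent, centered and bounded, and then to apply Hoeffding's inequality (Theorem~\ref{thm:app-hoeffding}), with the weights controlled by Lemma~\ref{lem:app-row-energy}.

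\textbf{Properties of the summands.}
\begin{itemize}
  \item \emph{Independence.} The $\xi_i$ are i.i.d.\ and independent of $(D_\pm,g,I)$. Hence, conditionally, the $Y_i$ are independent across $i\in I$.
  \item \emph{Centering.} By the definition of the smoothed sign, $\E_{\xi_i}\sgn(B_i+\sigma\xi_i)=s_\sigma(B_i)$, so $\E[Y_i\mid D_\pm,g,I]=0$.
  \item \emph{Boundedness.} Since $\sgn(\cdot)\in\{\pm1\}$ and $\abs{s_\sigma}\le1$, we have $Y_i\in[-1-s_\sigma(B_i),\,1-s_\sigma(B_i)]$. This is an interval of length $2$.
\end{itemize}
Consequently $X_i:=A_iY_i/m$ is centered and lies in an interval of length $2\abs{A_i}/m$.

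\textbf{Applying Hoeffding.} Hoeffding's inequality gives
\begin{align*}
  \Prob_\xi\bigl[\abs{\mathcal D_\xi}\ge t\bigr]
  \le 2\exp\!\left(-\frac{2t^2}{\sum_{i}4A_i^2/m^2}\right)
  =2\exp\!\left(-\frac{m^2t^2}{2\norm{A}_2^2}\right).
\end{align*}
On the event of Lemma~\ref{lem:app-row-energy} we have $\norm{A}_2^2=\norm{M_qg}_2^2\le4m$. The exponent is then at most $-mt^2/8$, and the bound becomes $2\exp(-mt^2/8)$.

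\textbf{Choosing $t$.} The target value $t=4\sqrt{2\log(8/\delta)/m}$ gives $mt^2/8=4\log(8/\delta)$. The failure probability is therefore $2(\delta/8)^4\le\delta/4$, since $\delta<1/2$. This target is in fact loose: $t=\sqrt{8\log(8/\delta)/m}$ would already give $2\cdot\delta/8=\delta/4$ exactly, so the stated constant holds with room to spare.

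There is no substantive obstacle. The one point needing care is that the weights $A_i$ are deterministic once we condition on $g$. This lets us use Hoeffding with the data-dependent range $2\abs{A_i}/m$, and then substitute the row-energy bound only after computing the variance proxy.
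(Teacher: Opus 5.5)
Your proof is correct and is essentially the paper's argument: conditionally on $(D_\pm,g,I)$ the terms $A_i\bigl(\sgn(B_i+\sigma\xi_i)-s_\sigma(B_i)\bigr)$ are independent, centered and bounded, and Hoeffding's inequality combined with $\norm{M_qg}_2^2\le 4m$ gives the claim. The only difference is that you use the sharper range of length $2\abs{A_i}$ instead of the paper's $[-2\abs{A_i},2\abs{A_i}]$, so you get failure probability $2(\delta/8)^4$ where the paper's constants give exactly $\delta/4$.
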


\begin{proof}
For $i\in I$, set
\begin{align*}
  X_i
  :=A_i\bigl(\sgn(B_i+\sigma\xi_i)-s_\sigma(B_i)\bigr).
\end{align*}
Conditionally on $(D_\pm,g,I)$, the variables $X_i$ are independent and
centered, and
$-2\abs{A_i}\le X_i\le2\abs{A_i}$. The only randomness is introduced by $\xi$. Since
$\mathcal D_\xi=m^{-1}\sum_{i\in I}X_i$, Hoeffding's inequality gives
\begin{align*}
  \Prob_\xi\!\left[
    \abs{\sum_{i\in I}X_i}\ge t
    \;\middle|\;D_\pm,g,I
  \right]
  \le2\exp\!\left(-\frac{t^2}{32m}\right).
\end{align*}
Taking $t=4\sqrt{2m\log(8/\delta)}$ and dividing by $m$ proves the claim.
\end{proof}

The functional $F$ is not globally Lipschitz because the factor $M_qg$ is unbounded. We therefore localize to the row event $\{g:\norm{M_qg}_2\le2\sqrt m\}$. Replacing $M_qg$ by its projection onto the Euclidean ball of radius $2\sqrt m$ produces a globally Lipschitz functional that agrees with $F$ on this event. If the row event fails, the first alternative in the lemma holds. Otherwise, Gaussian concentration controls the projected functional, and the probability of the complement controls the difference between the two expectations.

\begin{lemma}[Concentration of the smoothed estimator]
\label{lem:app-smooth}
Let $0<\delta<1/2$. Suppose the spreading bound of
Lemma~\ref{lem:app-spreading} holds for both $q$ and $k$, and suppose
$m\ge4\log(16d/\delta)\log(8/\delta)$. Then, conditionally on
$(D_\pm,I)$, with probability at least $1-\delta/4$ over $g$, either
\begin{align*}
  \norm{M_qg}_2^2>4m
\end{align*}
or,
\begin{align*}
  &\abs{\mathcal D_g}
  \le
  2\left(1+\frac{2\sqrt{2/\pi}}{\sigma}\right)
  \sqrt{\frac{\log(16d/\delta)\log(8/\delta)}{m}}\\
  &\qquad\quad
  +\sqrt2\exp\!\left(-\frac{m}{8\log(16d/\delta)}\right).
\end{align*}
\end{lemma}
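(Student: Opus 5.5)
The plan follows the localization sketched just before the statement. Condition on $(D_\pm,I)$ throughout and set $r:=2\sqrt m$. Let $P_r:\R^m\to\R^m$ be the Euclidean projection onto the closed ball of radius $r$, and define the truncated functional
\begin{align*}
  \widetilde F(g):=\frac1m\ip{P_r(M_qg)}{s_\sigma(M_kg)}.
\end{align*}
On the row event $E:=\{\norm{M_qg}_2\le 2\sqrt m\}$ we have $\widetilde F=F$. Hence on $E$,
\begin{align*}
  \abs{\mathcal D_g}\le\abs{\widetilde F(g)-\E\widetilde F}+\abs{\E\widetilde F-\E F}.
\end{align*}
The first term will be handled by Gaussian concentration (Theorem~\ref{thm:app-gauss-conc}) and the second is a deterministic bias term. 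Off $E$, the first alternative of the lemma holds.

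\textbf{Lipschitz constant of $\widetilde F$.} For $g,g'$ write $u=P_r(M_qg)$ and $w=s_\sigma(M_kg)$, with primes for $g'$. Then
\begin{align*}
  m\abs{\widetilde F(g)-\widetilde F(g')}\le\norm{u-u'}_2\norm{w}_2+\norm{u'}_2\norm{w-w'}_2 .
\end{align*}
Since $P_r$ is $1$-Lipschitz, $\norm{u-u'}_2\le\norm{M_q}_{\mathrm{op}}\norm{g-g'}_2$. Also $\norm{w}_2\le\sqrt m$ because $\abs{s_\sigma}\le1$, and $\norm{u'}_2\le 2\sqrt m$. Since $s_\sigma$ is $\sqrt{2/\pi}/\sigma$-Lipschitz coordinatewise, $\norm{w-w'}_2\le(\sqrt{2/\pi}/\sigma)\norm{M_k}_{\mathrm{op}}\norm{g-g'}_2$. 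The spreading bound of Lemma~\ref{lem:app-spreading} gives $\norm{M_q}_{\mathrm{op}},\norm{M_k}_{\mathrm{op}}\le\sqrt{2\log(16d/\delta)}$. Therefore
\begin{align*}
  L\le\frac{1}{\sqrt m}\Bigl(1+\frac{2\sqrt{2/\pi}}{\sigma}\Bigr)\sqrt{2\log(16d/\delta)}.
\end{align*}
Theorem~\ref{thm:app-gauss-conc} with $t=L\sqrt{2\log(8/\delta)}$ yields failure probability $\delta/4$. This gives exactly the first term $2(1+2\sqrt{2/\pi}/\sigma)\sqrt{\log(16d/\delta)\log(8/\delta)/m}$.

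\textbf{Bias term.} The two functionals differ only on $E^c$, and there
\begin{align*}
  \abs{F-\widetilde F}\le\frac1m\norm{M_qg-P_r(M_qg)}_2\sqrt m\le\frac{1}{\sqrt m}\norm{M_qg}_2 .
\end{align*}
By Cauchy--Schwarz,
\begin{align*}
  \abs{\E F-\E\widetilde F}\le\frac{1}{\sqrt m}\bigl(\E\norm{M_qg}_2^2\bigr)^{1/2}\Prob(E^c)^{1/2}=\Prob(E^c)^{1/2},
\end{align*}
using $\E\norm{M_qg}_2^2=m$. To bound $\Prob(E^c)$, reuse the argument of Lemma~\ref{lem:app-row-energy}. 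The map $h(g)=\norm{M_qg}_2$ is $\sqrt{2\log(16d/\delta)}$-Lipschitz and satisfies $\E h\le\sqrt m$. Gaussian concentration with $t=\sqrt m$ gives
\begin{align*}
  \Prob(E^c)\le 2\exp\bigl(-m/(4\log(16d/\delta))\bigr).
\end{align*}
Hence $\Prob(E^c)^{1/2}\le\sqrt2\exp(-m/(8\log(16d/\delta)))$, which matches the second term. The hypothesis $m\ge4\log(16d/\delta)\log(8/\delta)$ is not needed for this step. It only ensures consistency with the row event of Lemma~\ref{lem:app-row-energy}.

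\textbf{Main obstacle.} I expect the Lipschitz computation for the product to be the delicate step. It is the only place where localization is essential: without the truncation, the factor $\norm{u'}_2$ is unbounded. I would also check that $\E F$ and $\E\widetilde F$ are taken under the same conditioning on $(D_\pm,I)$, so that the bias bound above is legitimate.
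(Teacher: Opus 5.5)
Your proposal is correct and follows the paper's proof essentially step for step: the same projection onto the ball of radius $2\sqrt m$, the same product-rule Lipschitz bound $(1+2\sqrt{2/\pi}/\sigma)\sqrt{2\log(16d/\delta)/m}$ fed into Gaussian concentration at level $\delta/4$, and the same Cauchy--Schwarz bias bound using the row-energy tail $\Prob(E^c)\le 2\exp(-m/(4\log(16d/\delta)))$. Your side remark that the lower bound on $m$ is not actually needed for this step is also accurate.
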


\begin{proof}
Condition on $(D_\pm,I)$ throughout and suppress this conditioning from the
notation. Define the event
\begin{align*}
  \mathcal G:=\{g:\norm{M_qg}_2\le2\sqrt m\}.
\end{align*}
Let $P$ be the Euclidean projection onto the ball of radius $2\sqrt m$ in
$\R^m$, and define
\begin{align*}
  \overline F(g):=\frac1m\ip{P(M_qg)}{s_\sigma(M_kg)}.
\end{align*}
The operator-norm estimate in the proof of
Lemma~\ref{lem:app-row-energy}, applied to both $q$ and $k$, gives
\begin{align*}
  \norm{M_q}_{\mathrm{op}},
  \norm{M_k}_{\mathrm{op}}
  &\le\sqrt{2\log(16d/\delta)}
\end{align*}
Since $P$ is $1$-Lipschitz, $\norm{P(M_qg)}_2\le2\sqrt m$,
$\norm{s_\sigma(M_kg)}_2\le\sqrt m$, and $s_\sigma$ is
$\sqrt{2/\pi}/\sigma$-Lipschitz,
\begin{align*}
  &\abs{\overline F(g)-\overline F(g')}\\
  &\le\frac{\norm{M_q}_{\mathrm{op}}
  +2\sqrt{2/\pi}\,\norm{M_k}_{\mathrm{op}}/\sigma}{\sqrt m}
  \norm{g-g'}_2\\
  &\le\left(1+\frac{2\sqrt{2/\pi}}{\sigma}\right)
  \sqrt{\frac{2\log(16d/\delta)}{m}}\norm{g-g'}_2.
\end{align*}
Thus Gaussian concentration implies, with probability at least
$1-\delta/4$,
\begin{align*}
  \abs{\overline F(g)-\E_g\overline F(g)}
  \le2\left(1+\frac{2\sqrt{2/\pi}}{\sigma}\right)
  \sqrt{\frac{\log(16d/\delta)\log(8/\delta)}{m}}.
\end{align*}

On $\mathcal G$, $F=\overline F$. Moreover,
\begin{align*}
  \abs{\E_gF-\E_g\overline F}
  &\le\E_g\abs{F-\overline F}\\
  &\le\frac1{\sqrt m}\E_g\!\left[
    \norm{M_qg}_2
    \mathbf1_{\{\norm{M_qg}_2>2\sqrt m\}}
  \right]\\
  &\le\left(\frac{\E_g\norm{M_qg}_2^2}{m}\right)^{1/2}
  \Prob_g\!\left[\norm{M_qg}_2>2\sqrt m\right]^{1/2}\\
  &\le\sqrt2\exp\!\left(-\frac{m}{8\log(16d/\delta)}\right).
\end{align*}
Here the second inequality uses $\abs{s_\sigma}\le1$, the third uses
Cauchy--Schwarz and $\E_g\norm{M_qg}_2^2=m$, and the last follows from
the tail estimate in Lemma~\ref{lem:app-row-energy}.
On $\mathcal G$, we therefore have
\begin{align*}
  \abs{\mathcal D_g}
  &=\abs{F(g)-\E_gF(g)}\\
  &\le\abs{\overline F(g)-\E_g\overline F(g)}
    +\abs{\E_g\overline F(g)-\E_gF(g)}.
\end{align*}
Combining the preceding two estimates proves the stated alternative.
\end{proof}

\begin{theorem}[D-FQJL distortion]
\label{thm:app-d-fqjl}
Fix $q\in\R^d$ and a nonzero $k\in\R^d$, and let
$0<\varepsilon<1$, $\sigma>0$, and $0<\delta<1/2$. Suppose $m\le d$ and
\begin{align}
  m&\ge
  \frac{144\pi(1+\sigma^2)}{\varepsilon^2}
  \left(1+\frac{2\sqrt{2/\pi}}{\sigma}\right)^2
  \log\frac{16d}{\delta}\log\frac8\delta,
  \label{eq:app-d-m}
\end{align}
then
\begin{align*}
  \Prob\!\left[
  \abs{\hat e_\sigma(q,k)-\ip{q}{k}}
  >\varepsilon\norm{q}_2\norm{k}_2\right]\le\delta.
\end{align*}
The probability is over $I,D_\pm,g$, and $\xi$.
\end{theorem}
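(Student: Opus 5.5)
By homogeneity we reduce to $\norm{q}_2=\norm{k}_2=1$. The estimator is invariant under scaling $k$: the dither is scaled by $\norm{k}_2$ and the prefactor carries $\norm{k}_2$. Both sides are linear in $q$. The case $q=0$ is trivial.

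We start from the decomposition \eqref{eq:app-d-error-decomposition}. The one thing to check there is that $\E_g[F(g)\mid D_\pm,I]=\ip{q}{k}/(\sqrt{\pi/2}\sqrt{1+\sigma^2})$. This follows from Lemma~\ref{lem:app-d-unbiasedness}: taking the $\xi$-expectation first replaces $\sgn(B+\sigma\xi)$ by $s_\sigma(B)$ coordinatewise, so the conditional mean of the estimator equals $\sqrt{\pi/2}\sqrt{1+\sigma^2}\,\E_g F$. It then suffices to show that
\begin{align*}
\abs{\mathcal D_\xi}+\abs{\mathcal D_g}\le \frac{\varepsilon}{\sqrt{\pi/2}\sqrt{1+\sigma^2}}
\end{align*}
holds with probability at least $1-\delta$.

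We assemble four good events and union bound them:
\begin{itemize}
\item[(i)] the spreading event of Lemma~\ref{lem:app-spreading} for both $q$ and $k$, failure probability $\delta/4$;
\item[(ii)] the row-energy event $\norm{M_qg}_2^2\le 4m$ of Lemma~\ref{lem:app-row-energy}, failure $\delta/4$ conditionally on (i);
\item[(iii)] the dither event of Lemma~\ref{lem:app-dither}, failure $\delta/4$ conditionally on (i),(ii) and $g$;
\item[(iv)] the Gaussian event of Lemma~\ref{lem:app-smooth}, failure $\delta/4$.
\end{itemize}
The total failure probability is at most $\delta$. The hypothesis $m\ge 4\log(16d/\delta)\log(8/\delta)$ needed by those lemmas is implied by \eqref{eq:app-d-m}, since $144\pi(1+\sigma^2)(1+\cdot)^2/\varepsilon^2\ge 144\pi>4$.

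On the intersection, (ii) rules out the first alternative of Lemma~\ref{lem:app-smooth}, so both quantitative bounds hold. Write $L:=\log(16d/\delta)\log(8/\delta)$ and $C_\sigma:=1+2\sqrt{2/\pi}/\sigma\ge1$. Then
\begin{align*}
\abs{\mathcal D_\xi}+\abs{\mathcal D_g}\le 4\sqrt{\frac{2\log(8/\delta)}{m}}+2C_\sigma\sqrt{\frac{L}{m}}+\sqrt2\exp\!\left(-\frac{m}{8\log(16d/\delta)}\right).
\end{align*}
We bound each term by a constant times $C_\sigma\sqrt{L/m}$:
\begin{itemize}
\item The first term is at most $4\sqrt2\,C_\sigma\sqrt{L/m}$, since $\log(16d/\delta)\ge1$.
\item For the exponential term, we use $e^{-x}\le 1/x$ to get $\sqrt2\exp(-m/(8\log(16d/\delta)))\le 8\sqrt2\log(16d/\delta)/m$. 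Because $m\ge L$, we have $\log(16d/\delta)/m\le\sqrt{\log(16d/\delta)/m}\cdot\sqrt{1/\log(8/\delta)}\cdot\ldots$, which gives at most a constant times $\sqrt{L/m}$. More cleanly, $\sqrt{\log(16d/\delta)/m}\le1$ implies $\log(16d/\delta)/m\le\sqrt{\log(16d/\delta)/m}\le\sqrt{L/m}$, because $\log(8/\delta)\ge1$.
\end{itemize}
Altogether the total is at most $c\,C_\sigma\sqrt{L/m}$ for an absolute constant $c$.

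Plugging in \eqref{eq:app-d-m}, $\sqrt{L/m}\le\varepsilon/(12\sqrt{\pi}\sqrt{1+\sigma^2}C_\sigma)$, so the error is at most $c\,\varepsilon/(12\sqrt\pi\sqrt{1+\sigma^2})$. Multiplying by $\sqrt{\pi/2}\sqrt{1+\sigma^2}$ gives $c\varepsilon/(12\sqrt2)$.

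The main obstacle is constant bookkeeping: making sure $c\le 12\sqrt2$ with the stated constant $144\pi$. The crude bounds above give roughly $c\approx 4\sqrt2+2+8\sqrt2$, which is under $12\sqrt2\approx16.97$ but only barely. If it falls short, I would sharpen the exponential term using $m\ge 144\pi L$. That gives $m/(8\log(16d/\delta))\ge 18\pi\log(8/\delta)$, so the exponential is at most $(\delta/8)^{18\pi}$, which is negligible compared with $\sqrt{L/m}$. This leaves $c\le 4\sqrt2+2+o(1)$, comfortably within budget. The assumption $m\le d$ is used only so that $I$ is well defined.
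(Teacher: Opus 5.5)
Your argument is essentially the paper's proof. It uses the same homogeneity reduction, the same split into $\mathcal D_\xi+\mathcal D_g$, the same four events with a sequential union bound, and the same three-term bound on their intersection. Your check, via averaging over $\xi$ first, that $\E_g[F(g)\mid D_\pm,I]=\ip{q}{k}/(\sqrt{\pi/2}\sqrt{1+\sigma^2})$ is a detail the paper leaves implicit. The only divergence is the closing arithmetic, and there your primary route fails. Note that $4\sqrt2+2+8\sqrt2=12\sqrt2+2$, which is \emph{larger} than $12\sqrt2$, not smaller. So combining $e^{-x}\le 1/x$ with $\log(16d/\delta)/m\le\sqrt{L/m}$ does not close the argument, and you genuinely need your fallback.

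As written, the fallback does not justify itself either. The bound $\exp(-m/(8\log(16d/\delta)))\le(\delta/8)^{18\pi}$ uses only $m\ge144\pi L$, so it does not depend on $\varepsilon$ or $\sigma$. By contrast, $\sqrt{L/m}$ and the target $\varepsilon/(\sqrt{\pi/2}\sqrt{1+\sigma^2})$ can be smaller than any fixed number. A fixed tiny constant is therefore not automatically negligible relative to them. Either of two one-line fixes works.

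First, keep the full exponent, as the paper does when it bounds each of the three terms separately by $\varepsilon/3$. Set $x:=m/(8\log(16d/\delta))\ge18\pi(1+\sigma^2)C_\sigma^2\log(8/\delta)/\varepsilon^2$. Then $e^{-x}\le 1/x$ gives $\sqrt{\pi(1+\sigma^2)}\,e^{-x}\le\varepsilon^2/(18\sqrt{\pi}\sqrt{1+\sigma^2}\,C_\sigma^2\log(8/\delta))<\varepsilon/80$. The other two terms contribute at most $(4\sqrt2+2)\varepsilon/(12\sqrt2)<\varepsilon/2$.

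Second, compare with $\sqrt{L/m}$ at the actual $m$ rather than at the smallest admissible one. Writing $m=8x\log(16d/\delta)$, the ratio of $\sqrt2\,e^{-x}$ to $\sqrt{L/m}$ is $4\sqrt{x}\,e^{-x}/\sqrt{\log(8/\delta)}$. Since $\sqrt{x}\,e^{-x}\le(2e)^{-1/2}$ and $\log(8/\delta)>\log 16$, this ratio is at most $4/\sqrt{2e\log16}<1.1$. That yields $c\le4\sqrt2+3.1<12\sqrt2$.

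With either repair the proof is complete.
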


\begin{proof}
The case $q=0$ is immediate. By homogeneity, assume
$\norm{q}_2=\norm{k}_2=1$.

The dimension assumption implies
\begin{align*}
  m\ge4\log\frac{16d}{\delta}\log\frac8\delta,
\end{align*}
so all preceding concentration lemmas apply. Let $\mathcal E_1$ be the event for $q$ and $k$ from Lemma~\ref{lem:app-spreading}, and let $\mathcal E_2$ be the event $\norm{M_qg}_2^2\le4m$. Let $\mathcal E_3$ denote the bound on
$\mathcal D_g$ in Lemma~\ref{lem:app-smooth}, and let $\mathcal E_4$
denote the bound on $\mathcal D_\xi$ in Lemma~\ref{lem:app-dither}.
Lemmas~\ref{lem:app-spreading}, \ref{lem:app-row-energy},
\ref{lem:app-smooth}, and \ref{lem:app-dither}, respectively, give
\begin{align*}
  \Prob(\mathcal E_1^c)&\le\frac\delta4,\\
  \Prob(\mathcal E_2^c\mid\mathcal E_1)&\le\frac\delta4,\\
  \Prob(\mathcal E_2\cap\mathcal E_3^c\mid\mathcal E_1)
  &\le\frac\delta4,\\
  \Prob(\mathcal E_4^c\mid\mathcal E_2)&\le\frac\delta4.
\end{align*}
A sequential union bound shows that
$\mathcal E_1\cap\mathcal E_2\cap\mathcal E_3\cap\mathcal E_4$ holds
with probability at least $1-\delta$.

On this event, \eqref{eq:app-d-error-decomposition},
Lemma~\ref{lem:app-dither}, and Lemma~\ref{lem:app-smooth} give
\begin{align*}
  &\abs{\hat e_\sigma(q,k)-\ip{q}{k}}\\
  &\le\sqrt{\frac\pi2}\sqrt{1+\sigma^2}\Biggl[
  4\sqrt{\frac{2\log(8/\delta)}{m}}\\
  &\quad+2\left(1+\frac{2\sqrt{2/\pi}}{\sigma}\right)
  \sqrt{\frac{\log(16d/\delta)\log(8/\delta)}{m}}\\
  &\quad+\sqrt2\exp\!\left(
  -\frac{m}{8\log(16d/\delta)}
  \right)\Biggr].
\end{align*}
The dimension bound \eqref{eq:app-d-m}, together with
$\log(16d/\delta)\ge1$ and
$1+2\sqrt{2/\pi}/\sigma\ge1$, implies
\begin{align*}
  4\sqrt{\frac\pi2}\sqrt{1+\sigma^2}
  \sqrt{\frac{2\log(8/\delta)}{m}}
  &\le\frac\varepsilon3,\\
  2\sqrt{\frac\pi2}\sqrt{1+\sigma^2}
  \left(1+\frac{2\sqrt{2/\pi}}{\sigma}\right)
  \sqrt{\frac{\log(16d/\delta)\log(8/\delta)}{m}}
  &\le\frac\varepsilon3.
\end{align*}
For the final term, $\log x\le x^2$ for $x\ge1$ and
$\log(8/\delta)>1/2$ imply from \eqref{eq:app-d-m} that
\begin{align*}
  \sqrt{\pi(1+\sigma^2)}
  \exp\!\left(-\frac{m}{8\log(16d/\delta)}\right)
  \le\frac\varepsilon3.
\end{align*}
Combining the three estimates gives
\begin{align*}
  \abs{\hat e_\sigma(q,k)-\ip{q}{k}}
  \le\varepsilon.
\end{align*}
Rescaling by $\norm{q}_2\norm{k}_2$ completes the proof.
\end{proof}

Since
$1+2\sqrt{2/\pi}/\sigma\le2(1+1/\sigma)$, the dimension assumption has
the form
\begin{align*}
  m
  \gtrsim
  \frac{(1+\sigma^2)(1+1/\sigma)^2}{\varepsilon^2}
  \log\frac{d}{\delta}\log\frac1\delta,
\end{align*}
which is the informal bound stated in the main paper.

\subsection{Permutation-Based Fast Quantized JL}
\label{app:p-fqjl}

We first prove a fixed-pair estimate with an explicit failure probability;
the key-cache guarantee then follows by a union bound. Fix $q,k\in\R^d$ and
write $X:=T_\Pi q$ and $Y:=T_\Pi k$.
The P-FQJL estimator is
\begin{equation}
  \hat e_\Pi(q,k)
  :=\frac{\sqrt{\pi/2}}m\norm{k}_2\ip{X}{\sgn(Y)}.
  \label{eq:app-p-estimator}
\end{equation}
Conditionally on $(I,D_\pm,\Pi)$, the only remaining randomness is the
Gaussian diagonal $g$, so $(X,Y)$ is jointly Gaussian. We use the conditional
covariance blocks
\begin{align*}
  \Sigma_{qq}&:=\E_g[XX^{\mathsf T}],\\
  \Sigma_{kk}&:=\E_g[YY^{\mathsf T}],\\
  \Sigma_{qk}&:=\E_g[XY^{\mathsf T}].
\end{align*}
The proof has three stages: the permutation regularizes these blocks,
Gaussian regression separates the part of $X$ correlated with $Y$, and
Gaussian concentration controls the resulting scalar errors.

\subsubsection{Unbiasedness of P-FQJL}

\begin{lemma}[Unbiasedness of P-FQJL]
\label{lem:app-p-unbiasedness}
For every realization of $(I,D_\pm,\Pi)$ and every $q,k\in\R^d$,
\begin{equation*}
  \E_g\hat e_\Pi(q,k)=\ip qk.
\end{equation*}
\end{lemma}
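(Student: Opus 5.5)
The plan is to reduce the statement to the Gaussian sign identity (Lemma~\ref{lem:app-gaussian-sign}), applied row by row, exactly as in the proof of Lemma~\ref{lem:app-d-unbiasedness} but without the dither. The case $k=0$ is degenerate because of the factor $\norm{k}_2$, so I assume $k\neq0$. I condition on an arbitrary realization of $(I,D_\pm,\Pi)$, so that $g\sim\mathcal N(0,I_d)$ is the only remaining randomness.

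For a sampled row $i\in I$, I write each measurement as a linear functional of $g$. Set $\widetilde q:=\Pi HD_\pm q$ and $\widetilde k:=\Pi HD_\pm k$. Then
\begin{align*}
  X_i=(T_\Pi q)_i=\sum_{j=1}^d\sqrt d\,H_{ij}\,g_j\,\widetilde q_j=\ip{a^{(i)}}{g},
  \qquad
  Y_i=\ip{b^{(i)}}{g},
\end{align*}
where $a^{(i)}_j:=\sqrt d\,H_{ij}\widetilde q_j$ and $b^{(i)}_j:=\sqrt d\,H_{ij}\widetilde k_j$. Since $\sqrt d\,H_{ij}\in\{\pm1\}$, multiplying coordinatewise by these signs is an isometry. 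Moreover, $\Pi HD_\pm$ is orthogonal, being a product of a permutation, an orthogonal matrix, and a sign diagonal. Hence
\begin{align*}
  \norm{b^{(i)}}_2=\norm{k}_2\neq0,
  \qquad
  \ip{a^{(i)}}{b^{(i)}}=\ip{\widetilde q}{\widetilde k}=\ip{q}{k}.
\end{align*}

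Lemma~\ref{lem:app-gaussian-sign} now gives, for each $i\in I$,
\begin{align*}
  \E_g\!\left[\sqrt{\pi/2}\,\norm{k}_2\,X_i\sgn(Y_i)\right]=\ip{q}{k}.
\end{align*}
Expanding $\hat e_\Pi(q,k)=\frac1m\sum_{i\in I}\sqrt{\pi/2}\,\norm{k}_2X_i\sgn(Y_i)$ and using linearity of expectation, the average of $m$ identical means is $\ip{q}{k}$. This holds for every realization of $(I,D_\pm,\Pi)$.

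No step is a real obstacle. The only points to check are that each row's coefficient vector $b^{(i)}$ is nonzero, and that the permutation does not destroy the inner product. Both follow from the orthogonality of $\Pi HD_\pm$ together with $dH_{ij}^2=1$. Dependence between rows through the shared $g$ is irrelevant here, since only linearity of expectation is used.
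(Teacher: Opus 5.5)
Your proposal is correct and follows essentially the same route as the paper: you write each sampled coordinate as a Gaussian linear functional with coefficient vectors $a^{(i)},b^{(i)}$ obtained by multiplying $\Pi HD_\pm q$ and $\Pi HD_\pm k$ coordinatewise by the $\pm1$ row of $\sqrt d\,H$, and check $\norm{b^{(i)}}_2=\norm{k}_2$ and $\ip{a^{(i)}}{b^{(i)}}=\ip qk$. You then apply Lemma~\ref{lem:app-gaussian-sign} row by row and average by linearity; the only small addition worth making explicit is that for $k=0$ the estimator is identically $0=\ip{q}{0}$.
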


\begin{proof}
The case $k=0$ is immediate. Suppose $k\ne0$ and condition on
$(I,D_\pm,\Pi)$. Let $r_i^{\mathsf T}$ be a selected row of
$S_I\sqrt d\,H$, and define
\begin{equation*}
  a_i:=r_i\odot(\Pi HD_\pm q),
  \qquad
  b_i:=r_i\odot(\Pi HD_\pm k).
\end{equation*}
Then $X_i=\ip{a_i}{g}$ and $Y_i=\ip{b_i}{g}$. Since the entries of $r_i$
belong to $\{\pm1\}$ and both $\Pi$ and $HD_\pm$ are orthogonal, we have
$\norm{b_i}_2=\norm{k}_2$ and $\ip{a_i}{b_i}=\ip qk$. Hence
Lemma~\ref{lem:app-gaussian-sign} gives
\begin{equation*}
  \E_g[X_i\sgn(Y_i)]
  =\sqrt{\frac2\pi}\frac{\ip qk}{\norm{k}_2}.
\end{equation*}
Summing over the selected rows and substituting into
\eqref{eq:app-p-estimator} proves the claim.
\end{proof}

\subsubsection{Concentration of P-FQJL}

\begin{lemma}[Covariance regularization by permutation]
\label{lem:app-p-covariance}
Let $q,k\in\R^d$, $0<\varepsilon\le1$, and $0<\delta<1/2$. If
\begin{equation*}
  m\le
  \frac{\varepsilon^2d}
  {(24+12\sqrt2)\log(16d/\delta)\log(8m/\delta)},
\end{equation*}
then, with probability at least $1-\delta$ over $(D_\pm,\Pi)$,
\begin{align*}
  \norm{\Sigma_{qq}-\norm q_2^2I_m}_{\mathrm{op}}
  &\le\varepsilon\norm q_2^2,\\
  \norm{\Sigma_{kk}-\norm k_2^2I_m}_{\mathrm{op}}
  &\le\varepsilon\norm k_2^2,\\
  \norm{\Sigma_{qk}-\ip qk I_m}_{\mathrm{op}}
  &\le\varepsilon\norm q_2\norm k_2.
\end{align*}
\end{lemma}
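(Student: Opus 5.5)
We would prove the lemma conditionally on $I$ (the bound will hold for every realization of $I$), by writing each centered covariance block as a sum $\sum_j A_{j,\pi(j)}$ of fixed self-adjoint matrices with zero row and column sums. Theorem~\ref{thm:app-permutation-bernstein} then bounds it, with the parameters $R$ and $v$ controlled via Lemma~\ref{lem:app-spreading}.

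\textbf{Step 1: a formula for the blocks.} Write $\widetilde q:=HD_\pm q$ and $\widetilde k:=HD_\pm k$, so that $(\Pi\widetilde q)_j=\widetilde q_{\pi(j)}$. For $j\in[d]$, let $r_j\in\{\pm1\}^m$ be the $j$-th column of $S_I\sqrt d\,H$. Then
\begin{align*}
X=\sum_j g_j\,\widetilde q_{\pi(j)}\,r_j,
\qquad
Y=\sum_j g_j\,\widetilde k_{\pi(j)}\,r_j .
\end{align*}
Since the $g_j$ are independent, this gives
\begin{align*}
\Sigma_{qq}=\sum_j \widetilde q_{\pi(j)}^2\, r_jr_j^{\mathsf T},
\qquad
\Sigma_{qk}=\sum_j \widetilde q_{\pi(j)}\widetilde k_{\pi(j)}\, r_jr_j^{\mathsf T}.
\end{align*}
Orthogonality of the rows of $H$ gives $\sum_j r_jr_j^{\mathsf T}=dI_m$. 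Hence $\Sigma_{qk}$ is symmetric, so the self-adjoint version of Bernstein applies directly.

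\textbf{Step 2: centering.} For a weight sequence $u_\ell\in\{\widetilde q_\ell^2,\ \widetilde k_\ell^2,\ \widetilde q_\ell\widetilde k_\ell\}$, let $\mu:=\sum_\ell u_\ell$; this equals $\norm q_2^2$, $\norm k_2^2$ or $\ip qk$ respectively, since $HD_\pm$ is orthogonal. Define
\begin{align*}
A_{j\ell}:=\bigl(r_jr_j^{\mathsf T}-I_m\bigr)\bigl(u_\ell-\mu/d\bigr).
\end{align*}
The row sums over $\ell$ vanish because $\sum_\ell(u_\ell-\mu/d)=0$. The column sums over $j$ vanish because $\sum_j(r_jr_j^{\mathsf T}-I_m)=0$. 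Expanding, and using $\sum_\ell u_\ell=\mu$ and $\sum_j r_jr_j^{\mathsf T}=dI_m$, gives
\begin{align*}
\sum_j A_{j,\pi(j)}=\sum_j u_{\pi(j)}r_jr_j^{\mathsf T}-\mu I_m-\mu I_m+\mu I_m,
\end{align*}
which is exactly the corresponding block minus $\mu I_m$.

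\textbf{Step 3: Bernstein parameters.} Work on the spreading event of Lemma~\ref{lem:app-spreading} for $q$ and $k$, which has probability at least $1-\delta/4$. On this event $\norm{\widetilde q}_\infty^2\le 2\norm q_2^2\log(16d/\delta)/d$, and similarly for $\widetilde k$.
\begin{itemize}
\item For $R$: since $\norm{r_jr_j^{\mathsf T}-I_m}_{\mathrm{op}}\le m$ and $\abs{u_\ell-\mu/d}\lesssim\norm{\widetilde q}_\infty\norm{\widetilde k}_\infty$, we get $R\lesssim m\,\norm q_2\norm k_2\log(16d/\delta)/d$ (with $q=k$ for the diagonal blocks).
\item For $v$: since $\norm{r_j}_2^2=m$, we have $(r_jr_j^{\mathsf T}-I)^2=(m-2)r_jr_j^{\mathsf T}+I$. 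Summing over $j$ gives $(m-1)dI_m$. Hence
\begin{align*}
v=(m-1)\sum_\ell(u_\ell-\mu/d)^2\le (m-1)\sum_\ell u_\ell^2 .
\end{align*}
For the cross block, $\sum_\ell u_\ell^2\le\norm{\widetilde q}_\infty^2\norm{\widetilde k}_2^2\le 2\norm q_2^2\norm k_2^2\log(16d/\delta)/d$, and the diagonal blocks are analogous.
\end{itemize}

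\textbf{Step 4: concluding.} Take $t=\varepsilon\norm q_2\norm k_2$ and use $\varepsilon\le1$. The Bernstein exponent is then at least
\begin{align*}
\frac{\varepsilon^2 d}{c\,m\log(16d/\delta)},\qquad c=24+8\sqrt2\ \text{or similar}.
\end{align*}
The assumed upper bound on $m$ makes this at least $\log(8m/\delta)$. So each of the three blocks fails with probability at most $2m\cdot\delta/(8m)=\delta/4$.

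A union bound over the spreading event and the three blocks gives total failure probability at most $\delta$.

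\textbf{Main obstacle.} We expect the main difficulty to be the centering construction in Step 2, which must satisfy both zero-sum conditions simultaneously. The product structure $(r_jr_j^{\mathsf T}-I)(u_\ell-\mu/d)$ is what achieves this. Beyond that, the only remaining work is tracking constants in $v$ and $R$ to match $24+12\sqrt2$.
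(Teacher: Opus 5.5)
Your proposal is correct and follows the paper's proof essentially step for step: the same column representation $\Sigma_{qk}=\sum_j\widetilde q_{\pi(j)}\widetilde k_{\pi(j)}r_jr_j^{\mathsf T}$, the same centered array $A_{j\ell}=(u_\ell-\mu/d)(r_jr_j^{\mathsf T}-I_m)$, the same computation of $v$ via $\sum_j(r_jr_j^{\mathsf T}-I_m)^2=d(m-1)I_m$, and the same union bound over the spreading event and the three blocks. One constant matters: with $L=\log(16d/\delta)$, bound the product term by $2L\norm q_2\norm k_2/d$ and the centering term separately by $\norm q_2\norm k_2/d\le L\norm q_2\norm k_2/d$, as the paper does, so that $R\le 3mL\norm q_2\norm k_2/d$ and you get exactly $24+12\sqrt2$; routing the centering term through $\norm{\widetilde q}_\infty\norm{\widetilde k}_\infty$, as your $\lesssim$ suggests, gives the factor $4$ and hence $24+16\sqrt2$, which the stated hypothesis on $m$ would not cover.
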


\begin{proof}
We prove the cross-covariance estimate; the two diagonal estimates follow by
applying the same argument to $(q,q)$ and $(k,k)$. The zero-vector cases are
immediate, so assume $q,k\ne0$ and condition on an arbitrary realization of
$I$. Set $L:=\log(16d/\delta)$,
$\widetilde q:=HD_\pm q$, and $\widetilde k:=HD_\pm k$. Let
$h_1,\ldots,h_d$ be the columns of $S_I\sqrt d\,H$. Then
\begin{equation*}
  \norm{h_j}_2^2=m,
  \qquad
  \sum_{j=1}^dh_jh_j^{\mathsf T}=dI_m.
\end{equation*}

\paragraph{Step 1: Center the permutation sum.}
With the convention $(\Pi x)_j=x_{\pi(j)}$, averaging over $g$ gives
\begin{equation*}
  \Sigma_{qk}
  =\sum_{j=1}^d
    \widetilde q_{\pi(j)}\widetilde k_{\pi(j)}h_jh_j^{\mathsf T}.
\end{equation*}
Define
$\alpha_\ell:=\widetilde q_\ell\widetilde k_\ell-\ip qk/d$ and
$C_j:=h_jh_j^{\mathsf T}-I_m$. Orthogonality gives
$\sum_\ell\alpha_\ell=0$ and $\sum_jC_j=0$. Therefore the array
$A_{j\ell}:=\alpha_\ell C_j$ has zero row and column sums, and
\begin{equation}
  \sum_{j=1}^dA_{j,\pi(j)}
  =\Sigma_{qk}-\ip qk I_m.
  \label{eq:app-p-permutation-sum}
\end{equation}

\paragraph{Step 2: Use the first RHT to bound the coefficients.}
Lemma~\ref{lem:app-spreading} implies, with probability at least
$1-\delta/4$ over $D_\pm$, that
\begin{equation*}
  \norm{\widetilde q}_\infty
  \le\norm q_2\sqrt{\frac{2L}{d}},
  \qquad
  \norm{\widetilde k}_\infty
  \le\norm k_2\sqrt{\frac{2L}{d}}.
\end{equation*}
On this event,
\begin{align*}
  \norm{\alpha}_\infty
  &\le\frac{3L}{d}\norm q_2\norm k_2,\\
  \norm{\alpha}_2^2
  &=\sum_{\ell=1}^d\widetilde q_\ell^2\widetilde k_\ell^2
    -\frac{\ip{q}{k}^2}{d}
  \le\frac{2L}{d}\norm q_2^2\norm k_2^2.
\end{align*}
The first estimate follows by bounding the product and centering terms
separately; the second uses
$\sum_\ell\widetilde q_\ell^2\widetilde k_\ell^2
 \le\norm{\widetilde q}_\infty^2\norm{\widetilde k}_2^2$.

\paragraph{Step 3: Apply permutation Bernstein.}
Since $(h_jh_j^{\mathsf T})^2=m h_jh_j^{\mathsf T}$,
\begin{equation*}
  \norm{C_j}_{\mathrm{op}}\le m,
  \qquad
  \sum_{j=1}^dC_j^2=d(m-1)I_m.
\end{equation*}
The Bernstein parameters therefore satisfy
\begin{align*}
  R_0&\le\frac{3mL}{d}\norm q_2\norm k_2,\\
  v_0&\le\frac{2mL}{d}\norm q_2^2\norm k_2^2.
\end{align*}
Taking $t=\varepsilon\norm q_2\norm k_2$ and using
$\varepsilon\le1$ gives
\begin{equation*}
  12v_0+4\sqrt2R_0t
  \le(24+12\sqrt2)\frac{mL}{d}
  \norm q_2^2\norm k_2^2.
\end{equation*}
Consequently,
\begin{equation*}
  \Prob_\Pi\!\left[
    \norm{\Sigma_{qk}-\ip qk I_m}_{\mathrm{op}}
    \ge\varepsilon\norm q_2\norm k_2
    \;\middle|\;D_\pm,I
  \right]
  \le2m\exp\!\left(
    -\frac{\varepsilon^2d}{(24+12\sqrt2)mL}
  \right)
  \le\frac\delta4.
\end{equation*}
The dimension assumption makes the exponent at least
$\log(8m/\delta)$. The same conditional estimate applies to both diagonal
covariances. A union bound over these three estimates and the common
spreading event proves the claim.
\end{proof}

\begin{theorem}[P-FQJL distortion]
\label{thm:app-p-fqjl}
Fix $q\in\R^d$ and a nonzero $k\in\R^d$, and let
$0<20\varepsilon<1$ and $0<\delta<1/2$. If
\begin{align*}
  m
  &\ge\frac{1}{5\varepsilon^2}\log\frac{d}{\delta},\\
  m
  &\le\frac{\varepsilon^2d}
  {(24+12\sqrt2)
   \log(32d/\delta)\log(6400d\varepsilon^2/\delta)},
\end{align*}
then
\begin{equation*}
  \Prob\!\left[
    \abs{\hat e_\Pi(q,k)-\ip{q}{k}}
    >20\varepsilon\norm{q}_2\norm{k}_2
  \right]\le\delta.
\end{equation*}
The probability is over $I,D_\pm,\Pi$, and $g$.
\end{theorem}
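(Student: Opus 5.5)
By homogeneity we take $\norm{q}_2=\norm{k}_2=1$ (the case $q=0$ is trivial). The argument conditions on $(I,D_\pm,\Pi)$ and works on the good event $\mathcal E_{\mathrm{cov}}$ of Lemma~\ref{lem:app-p-covariance}. We invoke that lemma with failure probability $\delta/2$, so that it costs $\delta/2$. This accounts for the $\log(32d/\delta)$ factor and for the $\log(8m/\delta)\to\log(6400d\varepsilon^2/\delta)$ replacement, using $m\le d\varepsilon^2\cdot\mathrm{const}$. On $\mathcal E_{\mathrm{cov}}$ we then have
\[
\norm{\Sigma_{qq}-I_m}_{\mathrm{op}},\ \norm{\Sigma_{kk}-I_m}_{\mathrm{op}},\ \norm{\Sigma_{qk}-\ip qk I_m}_{\mathrm{op}}\le\varepsilon .
\]
The remaining failure probability $\delta/2$ is spent on $g$.

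\textbf{Decomposition.} With $\rho:=\ip qk$ we split the error as
\[
\tfrac1m\ip{X}{\sgn(Y)}-\sqrt{2/\pi}\,\rho=\tfrac1m\ip{X-\rho Y}{\sgn(Y)}+\rho\Bigl(\tfrac1m\norm{Y}_1-\sqrt{2/\pi}\Bigr).
\]
Everything is a function of $g$, which is standard Gaussian. We write $Y=M_kg$ and $X=M_qg$ as before, now with $\Pi$ inserted. On the spreading event the operator norms satisfy $\norm{M_q}_{\mathrm{op}},\norm{M_k}_{\mathrm{op}}\le\norm{\Sigma}^{1/2}_{\mathrm{op}}\le\sqrt{1+\varepsilon}$. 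The covariance bound gives this directly, which is much better than the $\sqrt{L}$ bound used in D-FQJL, and it is exactly why $m\sim\varepsilon^{-2}\log(d/\delta)$ suffices.

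\textbf{Second term.} The map $g\mapsto\frac1m\norm{M_kg}_1$ is $\sqrt{1+\varepsilon}/\sqrt m$-Lipschitz. Its mean is $\frac1m\sum_i\sqrt{2/\pi}\sqrt{(\Sigma_{kk})_{ii}}=\sqrt{2/\pi}$ exactly, since every diagonal entry equals $\norm k_2^2$ (rows have $\pm1$ entries). Gaussian concentration (Theorem~\ref{thm:app-gauss-conc}) therefore bounds its deviation by $O(\sqrt{\log(1/\delta)/m})=O(\varepsilon)$.

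\textbf{First term.} Set $W:=X-\rho Y=(M_q-\rho M_k)g$. Then $\operatorname{Cov}(W,Y)=\Sigma_{qk}-\rho\Sigma_{kk}$, which has operator norm at most $2\varepsilon$. We regress $W$ on $Y$: the matrix $\Sigma_{kk}$ is invertible with $\norm{\Sigma_{kk}^{-1}}_{\mathrm{op}}\le 1/(1-\varepsilon)$, so we can write
\[
W=\Gamma Y+Z,\qquad \Gamma:=(\Sigma_{qk}-\rho\Sigma_{kk})\Sigma_{kk}^{-1},\quad \norm{\Gamma}_{\mathrm{op}}\le 3\varepsilon,
\]
with $Z$ Gaussian and independent of $Y$.
\begin{itemize}
\item The piece $\frac1m\ip{\Gamma Y}{\sgn Y}$ is at most $3\varepsilon\norm{Y}_2\sqrt m/m$. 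Since $\norm{Y}_2\le2\sqrt m$ with high probability, by Lipschitz concentration exactly as in Lemma~\ref{lem:app-row-energy} with Lipschitz constant $\le\sqrt2$, this piece is $O(\varepsilon)$.
\item For the piece $\frac1m\ip{Z}{\sgn Y}$, we condition on $Y$. It is then centered Gaussian with variance $\frac1{m^2}\sgn(Y)^\top\operatorname{Cov}(Z)\sgn(Y)\le\frac{\norm{\operatorname{Cov}(W)}_{\mathrm{op}}}{m}$, and $\operatorname{Cov}(W)=\Sigma_{qq}-2\rho\Sigma_{qk}+\rho^2\Sigma_{kk}$ has operator norm $\le 1-\rho^2+4\varepsilon\le 5$. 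A Gaussian tail bound gives $O(\sqrt{\log(1/\delta)/m})=O(\varepsilon)$ under $m\ge\frac1{5\varepsilon^2}\log\frac d\delta$.
\end{itemize}

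\textbf{Assembly.} Multiplying by $\sqrt{\pi/2}$ and summing the constants should give the stated $20\varepsilon$. A union bound over the three Gaussian events (each at probability $\delta/6$) and $\mathcal E_{\mathrm{cov}}$ gives total failure probability $\le\delta$.

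The main obstacle is the regression step. We need $\Sigma_{kk}$ invertible and the residual covariance controlled in operator norm rather than only entrywise, and this is precisely what the operator-norm form of Lemma~\ref{lem:app-p-covariance} supplies. The remaining work is bookkeeping of constants and tracking the $\log$ factors in the upper bound on $m$.
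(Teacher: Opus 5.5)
Your architecture is the paper's: the covariance event at cost $\delta/2$, the split $X=\rho Y+W$ with $W=T_\Pi(q-\rho k)$, regression of $W$ on $Y$, and three Gaussian tail events at $\delta/6$ each. The gap is the step you leave as ``should give the stated $20\varepsilon$'': it does not follow from your bounds.

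Put $a:=\sqrt{21\log(12/\delta)/(10m)}$. The lower bound on $m$ and $\log(12/\delta)\le\frac{23}{5}\log(d/\delta)$ give only $a\le7\varepsilon$. You need the bracket, before the factor $\sqrt{\pi/2}$, to be at most $20\sqrt{2/\pi}\,\varepsilon\approx15.96\,\varepsilon$.

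With the constants you wrote ($\norm{\Gamma}_{\mathrm{op}}\le3\varepsilon$, $\norm{Y}_2\le2\sqrt m$, $\norm{\operatorname{Cov}(W)}_{\mathrm{op}}\le5$), the three pieces are about $7\varepsilon$, $6\varepsilon$ and $15.3\varepsilon$. That is roughly $35\varepsilon$ after scaling.

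Sharpening each bound as far as your setup allows still falls short, as long as each piece is bounded by its worst case over $\rho$. With $\norm{\Gamma}_{\mathrm{op}}\le\frac{2\varepsilon}{1-\varepsilon}\le\frac{40}{19}\varepsilon$, $\norm{Y}_2/\sqrt m\le1+a$, and $\norm{\operatorname{Cov}(W)}_{\mathrm{op}}\le\max_{\abs{\rho}\le1}\bigl(1-\rho^2+(1+\abs{\rho})^2\varepsilon\bigr)=\frac{1}{1-\varepsilon}$, you get about $7\varepsilon+2.84\varepsilon+7.01\varepsilon\approx16.85\,\varepsilon>15.96\,\varepsilon$. There is no slack elsewhere: the upper bound on $m$ fixes the lemma's accuracy at exactly $\varepsilon$, and the lower bound fixes $a$.

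The loss comes from applying Lemma~\ref{lem:app-p-covariance} to $(q,k)$. You then recover $\operatorname{Cov}(W,Y)=\Sigma_{qk}-\rho\Sigma_{kk}$ and $\operatorname{Cov}(W)=\Sigma_{qq}-2\rho\Sigma_{qk}+\rho^2\Sigma_{kk}$ by the triangle inequality, which doubles the cross-covariance error.

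The paper instead applies the lemma directly to the orthogonal pair $(u,k)$ with $u=q-\rho k$, at the same cost $\delta/2$ and under the same hypothesis on $m$. Since $\ip{u}{k}=0$, this gives $\norm{\operatorname{Cov}(W,Y)}_{\mathrm{op}}\le\varepsilon\norm{u}_2$ and $\operatorname{Cov}(W)\preceq(1+\varepsilon)\norm{u}_2^2I_m\preceq\frac{21}{20}I_m$. So the residual piece is at most $a$, the regression piece at most $\frac{20}{19}\varepsilon(1+a)$, and the total $2a+\frac{20}{19}\varepsilon(1+a)\le15.42\,\varepsilon$.

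Alternatively, you can keep your route if you carry $\rho$ through jointly rather than term by term. Use $\norm{\operatorname{Cov}(W,Y)}_{\mathrm{op}}\le(1+\abs{\rho})\varepsilon$ and $\norm{\operatorname{Cov}(W)}_{\mathrm{op}}\le1-\rho^2+(1+\abs{\rho})^2\varepsilon$, and keep the factor $\abs{\rho}$ on the $\ell_1$ piece. When $\abs{\rho}$ is near $1$ the residual variance is only $O(\varepsilon)$, and maximizing the sum over $\abs{\rho}\le1$ gives about $13.2\,\varepsilon$.

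Either repair is short. As written, however, your proposal establishes the theorem only with a constant larger than $20$.
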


\begin{proof}
The assertion is immediate when $q=0$. By homogeneity, it suffices to
assume $\norm{q}_2=\norm{k}_2=1$. Condition on an arbitrary realization of
$I$. Set $\rho:=\ip qk$, $u:=q-\rho k$, and $W:=T_\Pi u$. Then
$u\perp k$, $\norm{u}_2^2=1-\rho^2\le1$, and
\begin{equation}
  X=\rho Y+W.
  \label{eq:app-p-orthogonal-decomposition}
\end{equation}

\paragraph{Step 1: Covariance event.}
Since $I\subset[d]$ has size $m$, we have $m\le d$. Together with the
lower bound on $m$, this gives
\begin{equation*}
  d\varepsilon^2
  \ge\frac15\log\frac d\delta
  \ge\frac15\log2.
\end{equation*}
Consequently, both logarithms in the upper bound on $m$ are greater than
one, so that bound implies $m\le d\varepsilon^2$, and hence
\begin{equation*}
  \log\frac{16m}{\delta}
  \le\log\frac{16d\varepsilon^2}{\delta}.
\end{equation*}
We may therefore apply Lemma~\ref{lem:app-p-covariance} to $(u,k)$ with
accuracy $\varepsilon$ and failure probability $\delta/2$. Since
$\ip{u}{k}=0$, with probability at least $1-\delta/2$ over
$(D_\pm,\Pi)$, the event $\mathcal E_{\mathrm{cov}}$ holds on which
\begin{align}
  \norm{\operatorname{Cov}(W)-\norm{u}_2^2I_m}_{\mathrm{op}}
  &\le\varepsilon\norm{u}_2^2,
  \label{eq:app-p-w-covariance}\\
  \norm{\Sigma_{kk}-I_m}_{\mathrm{op}}
  &\le\varepsilon,
  \label{eq:app-p-y-covariance}\\
  \norm{\operatorname{Cov}(W,Y)}_{\mathrm{op}}
  &\le\varepsilon\norm{u}_2.
  \label{eq:app-p-cross-covariance}
\end{align}
Moreover, the rowwise variance calculation from
Lemma~\ref{lem:app-p-unbiasedness} gives
$(\Sigma_{kk})_{ii}=1$ for every $i$.

\paragraph{Step 2: Gaussian regression.}
Condition on $\mathcal E_{\mathrm{cov}}$. By
\eqref{eq:app-p-y-covariance}, $\Sigma_{kk}$ is invertible and
\begin{equation*}
  \norm{\Sigma_{kk}^{-1}}_{\mathrm{op}}
  \le\frac{1}{1-\varepsilon}
  \le\frac{20}{19}.
\end{equation*}
Write $\Gamma:=\operatorname{Cov}(W,Y)$. The conditional-distribution
formula for a centered jointly Gaussian pair therefore yields
\begin{equation*}
  W\mid Y
  \sim\mathcal N\!\left(
    \Gamma\Sigma_{kk}^{-1}Y,
    \operatorname{Cov}(W)-\Gamma\Sigma_{kk}^{-1}\Gamma^{\mathsf T}
  \right).
\end{equation*}
Equivalently, we may write
\begin{equation}
  W=BY+R,
  \qquad
  B:=\Gamma\Sigma_{kk}^{-1},
  \label{eq:app-p-gaussian-regression}
\end{equation}
where $R$ is a centered Gaussian vector independent of $Y$, with
\begin{equation*}
  \operatorname{Cov}(R)
  =\operatorname{Cov}(W)
  -\Gamma\Sigma_{kk}^{-1}\Gamma^{\mathsf T}.
\end{equation*}
The covariance bounds imply
\begin{align}
  \norm{B}_{\mathrm{op}}
  &\le\frac{20\varepsilon}{19}\norm{u}_2
  \le\frac{20\varepsilon}{19},
  \label{eq:app-p-regression-bound}\\
  \operatorname{Cov}(R)
  &\preceq\operatorname{Cov}(W)
  \preceq(1+\varepsilon)
  \norm{u}_2^2I_m
  \preceq\frac{21}{20}I_m.
  \label{eq:app-p-residual-covariance}
\end{align}

\paragraph{Step 3: Decompose the scalar error.}
Substituting \eqref{eq:app-p-gaussian-regression} into
\eqref{eq:app-p-orthogonal-decomposition} gives
$X=\rho Y+BY+R$, and consequently
\begin{align}
  \frac1m\ip{X}{\sgn(Y)}-\sqrt{\frac2\pi}\rho
  &=
  \rho\left(
    \frac1m\sum_{i=1}^m\abs{Y_i}-\sqrt{\frac2\pi}
  \right)\nonumber\\
  &\quad+\frac1m\ip{BY}{\sgn(Y)}
  +\frac1m\ip{R}{\sgn(Y)}.
  \label{eq:app-p-error-decomposition}
\end{align}

\paragraph{Step 4: Concentrate the three terms.}
Write $Y=\Sigma_{kk}^{1/2}G$, where $G\sim\mathcal N(0,I_m)$, and define
\begin{equation*}
  f_1(G):=\frac1m\norm{\Sigma_{kk}^{1/2}G}_1,
  \qquad
  f_2(G):=\frac1{\sqrt m}\norm{\Sigma_{kk}^{1/2}G}_2.
\end{equation*}
Since
$(\Sigma_{kk})_{ii}=1$, each $Y_i$ is standard normal, and therefore
\begin{align*}
  \E f_1(G)&=\sqrt{\frac2\pi},
  &
  \E f_2(G)&\le1.
\end{align*}
Using \eqref{eq:app-p-y-covariance}, the inequality
$\norm{z}_1\le\sqrt m\norm{z}_2$, and the reverse triangle inequality, both
functions satisfy
\begin{equation*}
  \abs{f_j(G)-f_j(G')}
  \le\sqrt{\frac{21}{20m}}\norm{G-G'}_2,
  \qquad j\in\{1,2\}.
\end{equation*}
Set
\begin{equation*}
  a:=\sqrt{\frac{21\log(12/\delta)}{10m}}.
\end{equation*}
Gaussian concentration shows that each of the following inequalities fails
with probability at most $\delta/6$:
\begin{align}
  \abs{\frac1m\sum_{i=1}^m\abs{Y_i}-\sqrt{\frac2\pi}}
  &\le a,
  \label{eq:app-p-absolute-concentration}\\
  \frac{\norm{Y}_2}{\sqrt m}
  &\le1+a.
  \label{eq:app-p-norm-concentration}
\end{align}

Conditionally on $Y$, the independence in
\eqref{eq:app-p-gaussian-regression} and
\eqref{eq:app-p-residual-covariance} show that
$m^{-1}\ip{R}{\sgn(Y)}$ is centered Gaussian with conditional variance at
most
\begin{equation*}
  \frac1{m^2}\sgn(Y)^{\mathsf T}
  \operatorname{Cov}(R)\sgn(Y)
  \le\frac{21}{20m}.
\end{equation*}
Hence, except on a further event of probability at most $\delta/6$,
\begin{align}
  \abs{\frac1m\ip{R}{\sgn(Y)}}
  \le a.
  \label{eq:app-p-residual-concentration}
\end{align}
Finally, \eqref{eq:app-p-regression-bound} and
\eqref{eq:app-p-norm-concentration} give
\begin{equation*}
  \abs{\frac1m\ip{BY}{\sgn(Y)}}
  \le\norm{B}_{\mathrm{op}}\frac{\norm{Y}_2}{\sqrt m}
  \le\frac{20\varepsilon}{19}(1+a).
\end{equation*}

Combining these estimates in \eqref{eq:app-p-error-decomposition} and
using $\abs{\rho}\le1$ yields
\begin{equation*}
  \abs{\frac1m\ip{X}{\sgn(Y)}
  -\sqrt{\frac2\pi}\rho}
  \le
  2a+\frac{20\varepsilon}{19}(1+a).
\end{equation*}
Since $d\ge1$ and $\delta<1/2$,
\begin{equation*}
  \log\frac{12}{\delta}
  \le\frac{23}{5}\log\frac d\delta.
\end{equation*}
The lower bound on $m$ therefore gives
\begin{equation*}
  a\le7\varepsilon.
\end{equation*}
Using $\varepsilon<1/20$, the last error display is at most
\begin{equation*}
  \left[
    14+
    \frac{20}{19}\left(1+\frac7{20}\right)
  \right]\varepsilon
  <20\sqrt{\frac2\pi}\,\varepsilon.
\end{equation*}
In the normalized case,
$\hat e_\Pi(q,k)=\sqrt{\pi/2}\,m^{-1}\ip{X}{\sgn(Y)}$, so
\begin{equation*}
  \abs{\hat e_\Pi(q,k)-\ip{q}{k}}
  \le20\varepsilon.
\end{equation*}
The covariance event fails with probability at most $\delta/2$, and the
three conditional concentration events each fail with probability at most
$\delta/6$. Thus the total failure probability is at most $\delta$. Since
the argument holds for every realization of $I$, rescaling by
$\norm{q}_2\norm{k}_2$ completes the proof.
\end{proof}

\subsection{Key-Cache Distortion Guarantees}
\label{app:key-cache}

We now convert the inner-product estimates into attention-score guarantees
using a union bound followed directly by the softmax calculation.


\begin{theorem}[D-FQJL key-cache distortion]
\label{thm:app-d-fqjl-cache}
Fix $n\in\mathbb N$, $r>0$, keys $k_1,\ldots,k_n\in\R^d$, and a query
$q\in\R^d$ satisfying
$\max_j\norm{k_j}_2\le r$ and $\norm q_2\le r$. Let
$0<\varepsilon\le1$, $0<\delta<1/2$, and $\sigma>0$, and assume
$\varepsilon\sqrt d<4r^2$.
Suppose $m\le d$ and
\begin{align*}
  m
  \ge
  \frac{2304\pi(1+\sigma^2)r^4}{\varepsilon^2d}
  \left(1+\frac{2\sqrt{2/\pi}}{\sigma}\right)^2
  \log\frac{16dn}{\delta}\log\frac{8n}{\delta}.
\end{align*}
Then, with probability at least $1-\delta$, simultaneously for every
$j\in[n]$,
\begin{align*}
  \abs{\widehat{\operatorname{score}}_{\sigma,j}(n)
  -\operatorname{score}_j(n)}
  \le\varepsilon\operatorname{score}_j(n).
\end{align*}
\end{theorem}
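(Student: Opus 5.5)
We reduce the statement to Theorem~\ref{thm:app-d-fqjl}, applied once per key with accuracy $\varepsilon' := \varepsilon\sqrt d/(4r^2)$ and failure probability $\delta/n$, and then pass from logits to softmax scores. Write $z_j := \ip{q}{k_j}/\sqrt d$ for the exact logits and $\widehat z_{\sigma,j}$ for the estimated logits of Algorithm~\ref{alg:d-fqjl-cache}. Then $\widehat z_{\sigma,j} = \hat e_\sigma(q,k_j)/\sqrt d$, where each key uses its own dither $\xi_j$ but the operator $T$ is shared. This sharing does not matter, because Theorem~\ref{thm:app-d-fqjl} bounds the marginal failure probability over $(I,D_\pm,g,\xi_j)$ for each fixed pair $(q,k_j)$. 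A union bound over $j\in[n]$ therefore needs no independence across keys.

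\textbf{Step 1 (per-key inner product).} The assumption $\varepsilon\sqrt d<4r^2$ gives $0<\varepsilon'<1$, so Theorem~\ref{thm:app-d-fqjl} applies with $(\varepsilon',\delta/n)$. Substituting into \eqref{eq:app-d-m} gives the requirement
\begin{align*}
m \ge \frac{144\pi(1+\sigma^2)\cdot 16 r^4}{\varepsilon^2 d}\Bigl(1+\frac{2\sqrt{2/\pi}}{\sigma}\Bigr)^2\log\frac{16dn}{\delta}\log\frac{8n}{\delta},
\end{align*}
which is exactly the hypothesis, since $2304=144\cdot16$. We also need $\delta/n<1/2$, which holds because $\delta<1/2$. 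Keys with $k_j=0$ are trivial: both sides vanish, or we adopt the convention $\widehat z=0$. With probability at least $1-\delta$, simultaneously for all $j$,
\begin{align*}
\abs{\hat e_\sigma(q,k_j)-\ip{q}{k_j}} \le \varepsilon'\norm{q}_2\norm{k_j}_2 \le \varepsilon' r^2 = \varepsilon\sqrt d/4 .
\end{align*}
Dividing by $\sqrt d$ gives $\abs{\widehat z_{\sigma,j}-z_j}\le\varepsilon/4=:\eta$ for all $j$.

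\textbf{Step 2 (softmax perturbation).} Here $\widehat{\operatorname{score}}_\sigma(n)$ is the softmax of the estimated logits. For each $j$,
\begin{align*}
\frac{\widehat{\operatorname{score}}_{\sigma,j}}{\operatorname{score}_j} = e^{\widehat z_j-z_j}\cdot\frac{\sum_\ell e^{z_\ell}}{\sum_\ell e^{\widehat z_\ell}} \in [e^{-2\eta},e^{2\eta}] = [e^{-\varepsilon/2},e^{\varepsilon/2}].
\end{align*}
For $\varepsilon\le1$ we have $e^{\varepsilon/2}-1\le\varepsilon$ and $1-e^{-\varepsilon/2}\le\varepsilon/2$, so $\abs{\widehat{\operatorname{score}}_{\sigma,j}-\operatorname{score}_j}\le\varepsilon\operatorname{score}_j$.

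\textbf{Main obstacle.} There is no real analytic difficulty. The work is bookkeeping: matching constants (the factor $16=4^2$ coming from $\varepsilon'$), checking that the hypotheses of Theorem~\ref{thm:app-d-fqjl} ($\varepsilon'<1$, $m\le d$, $\delta/n<1/2$) are met, and justifying the union bound despite the shared $T$ by noting that each per-key bound is unconditional.
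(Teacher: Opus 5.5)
Your proposal is correct and follows the paper's proof: apply Theorem~\ref{thm:app-d-fqjl} to each pair $(q,k_j)$ with accuracy $\varepsilon\sqrt d/(4r^2)$ and failure probability $\delta/n$, take a union bound, and use the softmax perturbation argument in which each score changes by a factor in $[e^{-\varepsilon/2},e^{\varepsilon/2}]$. Your extra checks ($\varepsilon'<1$, $\delta/n<1/2$, $2304=144\cdot 16$, and that the union bound needs no independence despite the shared $T$) are bookkeeping the paper leaves implicit.
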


\begin{proof}
The zero cases are immediate. Apply Theorem~\ref{thm:app-d-fqjl} to each
nonzero pair $(q,k_j)$ with distortion
$\varepsilon\sqrt d/(4r^2)$ and failure probability $\delta/n$. The
displayed dimension assumption is precisely the resulting condition in that
theorem. A union bound therefore gives, with probability at least
$1-\delta$, simultaneously for every $j$,
\begin{align*}
  \abs{\hat e_{\sigma,j}(q,k_j)-\ip q{k_j}}
  &\le\frac{\varepsilon\sqrt d}{4r^2}
  \norm q_2\norm{k_j}_2\\
  &\le\frac{\varepsilon\sqrt d}{4r^2}r^2
  =\frac{\varepsilon\sqrt d}{4}.
\end{align*}
Thus every logit has error at most $\varepsilon/4$. After applying softmax,
the numerator and normalizing denominator each change by a factor in
$[e^{-\varepsilon/4},e^{\varepsilon/4}]$. Hence each score changes by a
factor in $[e^{-\varepsilon/2},e^{\varepsilon/2}]$, which proves the claim
since $e^{\varepsilon/2}-1\le\varepsilon$ for
$0<\varepsilon\le1$.
\end{proof}

\begin{theorem}[P-FQJL key-cache distortion]
\label{thm:app-p-fqjl-cache}
Fix $n\in\mathbb N$, $r>0$, keys $k_1,\ldots,k_n\in\R^d$, and a query
$q\in\R^d$ satisfying
$\max_j\norm{k_j}_2\le r$ and $\norm q_2\le r$. Let
$0<20\varepsilon\le1$ and $0<\delta<1/2$, and assume
$20\varepsilon\sqrt d<4r^2$.
If
\begin{align*}
  m
  &\ge\frac{16r^4}{5\varepsilon^2d}\log\frac{dn}{\delta},\\
  m
  &\le
  \frac{\varepsilon^2d^2}
  {16(24+12\sqrt2)r^4
   \log(32dn/\delta)}\\[-2pt]
  &\qquad{}\times\frac{1}
   {\log\!\left(400\varepsilon^2d^2n/(r^4\delta)\right)},
\end{align*}
then, with probability at least $1-\delta$, simultaneously for every
$j\in[n]$,
\begin{align*}
  \abs{\widehat{\operatorname{score}}_{\Pi,j}(n)
  -\operatorname{score}_j(n)}
  \le20\varepsilon\operatorname{score}_j(n).
\end{align*}
\end{theorem}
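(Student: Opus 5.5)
The plan mirrors the proof of Theorem~\ref{thm:app-d-fqjl-cache}. We first reduce to Theorem~\ref{thm:app-p-fqjl} applied pair by pair with a rescaled accuracy and failure probability. We then take a union bound over the $n$ keys, and finally propagate logit errors through the softmax. The zero cases ($q=0$ or some $k_j=0$) are immediate, since then the estimate is exactly $0=\ip{q}{k_j}$. For the remaining keys we apply Theorem~\ref{thm:app-p-fqjl} to $(q,k_j)$ with accuracy parameter $\varepsilon':=\varepsilon\sqrt d/(4r^2)$ and failure probability $\delta':=\delta/n$. The hypothesis $20\varepsilon\sqrt d<4r^2$ is exactly $20\varepsilon'<1$, so the theorem applies.

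\textbf{Step 1: check the dimension conditions.} Substituting $\varepsilon'$ and $\delta'$, the lower bound $m\ge\frac{1}{5\varepsilon'^2}\log\frac{d}{\delta'}$ becomes $m\ge\frac{16r^4}{5\varepsilon^2 d}\log\frac{dn}{\delta}$, which is the first assumption. The upper bound becomes
\[
m\le\frac{\varepsilon'^2 d}{(24+12\sqrt2)\log(32dn/\delta)\log(6400d\varepsilon'^2n/\delta)}.
\]
Here $\varepsilon'^2d=\varepsilon^2d^2/(16r^4)$ and $6400d\varepsilon'^2=400\varepsilon^2d^2/r^4$, which matches the second assumption exactly. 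The only bookkeeping needed is to verify these substitutions.

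\textbf{Step 2: union bound.} Each pair fails with probability at most $\delta/n$, so with probability at least $1-\delta$ every $j$ satisfies
\[
\abs{\hat e_\Pi(q,k_j)-\ip{q}{k_j}}\le20\varepsilon'\norm{q}_2\norm{k_j}_2\le 5\varepsilon\sqrt d.
\]
Note that all keys share one sketch $T_\Pi$. This does not matter, because the union bound needs no independence across pairs.

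\textbf{Step 3: softmax propagation.} The logits are divided by $\sqrt d$, so each logit has error at most $5\varepsilon$, a quarter of $20\varepsilon$. Each numerator $\exp(z_j)$ and the denominator $\sum_\ell\exp(z_\ell)$ therefore change by factors in $[e^{-5\varepsilon},e^{5\varepsilon}]$, and each score changes by a factor in $[e^{-10\varepsilon},e^{10\varepsilon}]$. Since $10\varepsilon\le1/2$, we have
\[
e^{10\varepsilon}-1\le20\varepsilon \qquad\text{and}\qquad 1-e^{-10\varepsilon}\le10\varepsilon,
\]
which gives the claim.

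The main obstacle is only arithmetic consistency. One must check that the constants in the upper bound on $m$ match after substitution, including the $16$ and the $400$ inside the logarithm, and that the scaling $\sqrt d$ in the algorithm's estimator is accounted for. I would treat $\widehat z_{\Pi,j}$ as $\hat e_\Pi/\sqrt d$, so that the logit error is the inner-product error divided by $\sqrt d$.
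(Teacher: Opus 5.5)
Your proposal is correct and follows the paper's proof essentially line for line: the same rescaled accuracy $\varepsilon\sqrt d/(4r^2)$ with failure probability $\delta/n$ in Theorem~\ref{thm:app-p-fqjl}, a union bound over the $n$ keys sharing one sketch, and the $e^{\pm10\varepsilon}$ softmax propagation after dividing the inner-product error $5\varepsilon\sqrt d$ by $\sqrt d$. Your explicit check of the substituted constants and your treatment of the lower side $1-e^{-10\varepsilon}\le 10\varepsilon$ only make explicit what the paper leaves implicit.
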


\begin{proof}
The zero cases are immediate. Set
\begin{equation*}
  \eta:=\frac{\varepsilon\sqrt d}{4r^2}.
\end{equation*}
The assumption $20\varepsilon\sqrt d<4r^2$ ensures $20\eta<1$.
Apply Theorem~\ref{thm:app-p-fqjl} to each nonzero pair $(q,k_j)$ with
parameter $\eta$ and failure probability $\delta/n$. Substituting $\eta$
into that theorem gives exactly the two displayed dimension conditions. A
union bound therefore gives, with probability at least
$1-\delta$, simultaneously for every $j$,
\begin{align*}
  \abs{\hat e_\Pi(q,k_j)-\ip q{k_j}}
  &\le20\eta\norm q_2\norm{k_j}_2\\
  &\le\frac{20\varepsilon\sqrt d}{4r^2}r^2
  =5\varepsilon\sqrt d.
\end{align*}
After division by $\sqrt d$, every logit has error at most $5\varepsilon$.
Thus each exponential in the softmax numerator and denominator changes by a
factor in $[e^{-5\varepsilon},e^{5\varepsilon}]$, so their ratio changes by
a factor in $[e^{-10\varepsilon},e^{10\varepsilon}]$. The result follows
from $e^{10\varepsilon}-1\le20\varepsilon$ for
$0<20\varepsilon\le1$.
\end{proof}

\end{document}